\definecolor{darkred}{RGB}{150,0,0}
\definecolor{darkgreen}{RGB}{0,150,0}
\definecolor{darkblue}{RGB}{0,0,150}
\theoremstyle{plain}
\newtheorem{theorem}{Theorem}[section]
\newtheorem{proposition}[theorem]{Proposition}
\newtheorem{lemma}[theorem]{Lemma}
\theoremstyle{definition}
\newtheorem{assumption}[theorem]{Assumption}
\newtheorem{remark}[theorem]{Remark}
\newcommand{\mR}{\mathbb{R}}
\newcommand{\norm}[1]{\left\lVert#1\right\rVert}
\DeclareMathOperator*{\argmin}{argmin}
\DeclareMathOperator*{\argmax}{argmax}
\icmltitlerunning{Feature  and Parameter Selection in Stochastic Linear Bandits}
\begin{document}

\twocolumn[
\icmltitle{ Feature  and Parameter Selection in Stochastic Linear Bandits}




\begin{icmlauthorlist}
\icmlauthor{Ahmadreza Moradipari}{yyy}
\icmlauthor{Berkay Turan}{yyy}
\icmlauthor{Yasin Abbasi-Yadkori}{comp}
\icmlauthor{Mahnoosh Alizadeh}{yyy}
\icmlauthor{Mohammad Ghavamzadeh}{sch}
\end{icmlauthorlist}

\icmlaffiliation{yyy}{Department of Electrical and Computer Engineering, University of California, Santa Barbara, USA}
\icmlaffiliation{comp}{DeepMind, London, UK}
\icmlaffiliation{sch}{Google Research, Mountain View, USA}

\icmlcorrespondingauthor{Ahamdreza Moradipari}{ahmadreza$\_$moradipari@ucsb.edu}


\icmlkeywords{Machine Learning, ICML}

\vskip 0.3in
]



\printAffiliationsAndNotice{}  

\begin{abstract}
We study two model selection settings in stochastic linear bandits (LB). In the first setting, which we refer to as {\em feature selection}, the expected reward of the LB problem is in the linear span of at least one of $M$ feature maps (models). In the second setting, the reward parameter of the LB problem is arbitrarily selected from $M$ models represented as (possibly) overlapping balls in $\mathbb R^d$. However, the agent only has access to misspecified models, i.e.,~estimates of the centers and radii of the balls. We refer to this setting as {\em parameter selection}. For each setting, we develop and analyze a computationally efficient algorithm that is based on a reduction from bandits to full-information problems. This allows us to obtain regret bounds that are not worse (up to a $\sqrt{\log M}$ factor) than the case where the true model is known. This is the best reported dependence on the number of models $M$ in these settings. Finally, we empirically show the effectiveness of our algorithms using synthetic and real-world experiments.
\end{abstract}

\section{Introduction}
\label{sec:intro}

Learning under bandit feedback is a class of online learning problems in which an agent interacts with the environment through a set of actions (arms), and receives rewards only from the arms that it has pulled. The goal of the agent is to maximize its expected cumulative reward without knowledge of the reward distributions of the arms. {\em Multi-armed bandit} (MAB) is the simplest form of this problem~\citep{lai85asymptotically,Auer,LS-2020, moradipari2018learning}. {\em Linear bandit}~\citep{Dani08stochasticlinear,rusmevichientong2010linearly,abbasi2011improved} is a generalization of MAB to (possibly) infinitely many arms, each associated with a feature vector. The mean reward of each arm is assumed to be the dot product of its feature vector and an {\em unknown} parameter vector. This setting contains {\em contextual linear bandit} in which action sets and feature vectors change at every round.
The main component of bandit algorithms is to balance {\em exploration} and {\em exploitation}: to decide when to {\em explore} and learn about the arms, and when to {\em exploit} and select the action with the highest estimated reward. The most common exploration strategies are {\em optimism in the face of uncertainty} (OFU) or upper confidence bound (UCB)~\citep{Auer,Dani08stochasticlinear,abbasi2011improved,moradipari2020stage,moradipari2022collaborative}, and Thompson sampling (TS)~\citep{thompson1933likelihood,agrawal2013thompson,russo,abeille2017linear,moradipari2020linear,moradipari2021safe}.  

In this paper, we study {\em model selection} in stochastic linear bandits (LB), where the LB problem at hand is selected from a set of $M$ models. The agent has information about the models but does not know the identity of the one(s) that the new LB problem has been selected from. The goal of the agent is to identify the true model(s) and transfer its (their) collected experience to speedup the learning of the task at hand. It is a common scenario in many application domains that the new task belongs to a family of models that are either known accurately or with misspecification. For example, it is reasonable to assume that the customers of an online marketing website, the users of an app, or the patients in a medical trial belong to a certain number of categories based on their shopping and browsing habits or their genetic signatures. It is also common these days that websites, apps, and clinics have a large amount of information from each of these categories that can be used to build a model. 

Model selection is particularly challenging with bandit information. A common approach is to consider each model as a black-box that runs a bandit algorithm with its own information, and then a meta algorithm plays a form of bandit-over-bandits strategy with their outcomes. These algorithms often achieve a regret of $\widetilde{\mathcal O}(\sqrt{MT})$, and thus, are not desirable when the number of models $M$ is large. In this paper, we consider two bandit model selection settings and show that it is possible to improve this rate so that the regret scales as $\sqrt{\log M}$ with the number of models. The main innovation in our algorithms is utilizing reductions from bandits to full-information problems, and performing model selection in the full-information setting for which much stronger results exist. The main reason for $\widetilde{\mathcal O}(\sqrt{MT})$ regret in bandit-over-bandits algorithms is that no information is shared among the models (bandit algorithms), i.e.,~when a bandit algorithm is used to take an action in a round, the resulting feedback is not shared with the other models. On the other hand, model selection in the full-information setting allows the model to share information among each other, which makes the superior $\sqrt{\log M}$ regret bound possible. 

The two model selection settings we consider in this paper are: {\em feature selection}, where the mean reward of the LB problem is in the linear span of at least one of $M$ given feature maps (models), and {\em parameter selection}, where the reward parameter of the LB problem is arbitrarily selected from $M$ models represented as (possibly) overlapping balls in $\mathbb R^d$. Here the models can be misspecified, i.e.,~only estimates of the centers and radii of the balls are given to the algorithm. We derive algorithms in these settings that use reductions from bandits to full-information. Our algorithms are computationally efficient and have regret bounds that are not worse (up to a $\sqrt{\log M}$ factor) than the case where the true model is known. We achieve this by properly instantiating existing algorithmic paradigms: SquareCB \citep{FR-2020} and OFUL \citep{abbasi2011improved}. 
The SquareCB algorithm in its original form uses a set of static experts, but we need adaptive (learning) experts in order to have a computational efficient algorithm with the desired regret in our {\em feature selection} setting. Working with adaptive (time-varying) experts requires appropriate and non-trivial modifications to the proof of SquareCB.

There are mainly two types of reductions from bandits to full-information problems. The first one is the classical reduction that uses importance weighted estimates. A popular algorithm in this class is EXP3 that uses Exponentially Weighted Average forecaster as the full-information algorithm. The bandit model selection strategy of~\citet{ALNS-2017}, known as CORRAL, also uses this type of reduction with an online mirror descent method and a carefully selected mirror map as the full-information algorithm. Given that importance weighted estimates are fed to the full-information algorithm, a $\sqrt{M}$ term is in general unavoidable in the regret of the methods that use this type of reduction. In this work, we use a different type of full-information reduction introduced by~\citet{FR-2020} and~\citet{APS-2012}. Here, the full-information algorithm has direct access to its losses without any importance weighted estimates, and thus, allows us to obtain regrets that scales as $\sqrt{\log M}$.


\section{Problem Formulation}
\label{sec:prob-formulation}

In this section, we first provide a brief overview of stochastic linear bandits. We then describe the two model selection settings studied in the paper. We conclude by introducing a regression oracle used by our algorithms that is based on sequential prediction with expert advice and square loss. 


\subsection{Stochastic Linear Bandits}
\label{subsec:LB}

A stochastic linear bandit (LB) problem is defined by a sequence of $T$ interactions of a learning agent with a stochastic environment. At each round $t\in [T]$, the agent is given a decision set $\mathcal A_t\subset \mathbb R^d$ from which it has to select an action $a_t$. Upon taking the action $a_t\in\mathcal A_t$, it observes a reward $y_t=\langle \phi_t(a_t),\theta_*\rangle + \eta_t$, where $\theta_*\in\mathbb R^d$ is the unknown reward parameter, $\phi_t(a)\in\mathbb R^d$ is the feature vector of action $a$ at round $t$, and $\eta_t$ is a zero-mean $R$-sub-Gaussian noise. When the features correspond to the canonical basis, this formulation reduces to {\em multi-armed bandit}. In case the features depend on both an action $a\in\mathcal A$ and a context $x\in\mathcal X$, i.e.,~$\phi_t(a_t)=\phi(x_t,a_t)$, this LB formulation is called {\em contextual linear bandit}. It is also common in practice that the action set is fixed and finite, i.e.,~$\mathcal A = [K]$, in which case we are in the finite $K$-action setting. The history $H_t$ of a LB algorithm up to round $t$ consists of all the contexts, actions, and rewards that it has observed from the beginning until the end of round $t-1$, i.e.,~$H_t=\{(x_s,a_s,y_s)\}_{s=1}^{t-1}$, or equivalently $H_t=\{(\phi_s(a_s),y_s)\}_{s=1}^{t-1}$.

The goal of the agent in LB is to maximize its expected cumulative reward in $T$ rounds, or equivalently to minimize its $T$-round (pseudo) regret, i.e.,

\vspace{-0.15in}
\begin{small}
\begin{equation}
\label{eq:regret}
\mathcal R(T,\theta_*) = \sum_{t=1}^T \langle \phi_t(a_t^*),\theta_* \rangle - \langle \phi_t(a_t),\theta_* \rangle,    
\end{equation}
\end{small}
\vspace{-0.15in}

where $a^*_t = \argmax_{a\in\mathcal A_t}\langle \phi_t(a),\theta_*\rangle$ is the optimal action in round $t$.


\subsection{Feature Selection Setting} 
\label{subsec:feature-selection}

In this setting, the agent is given a set of $M$ feature maps $\{\phi^i\}_{i=1}^M$ with dimension $d$. We assume that the expected reward of the LB problem belongs to the linear span of at least one of these $M$ models (features), i.e.,~there exists an $i\in [M]$ and a $\theta_*^i\in\mathbb R^d$, such that for all rounds $t\in [T]$, contexts $x\in\mathcal X$, and actions $a\in\mathcal A$, we may write the mean rewards as $\mathbb E[y_t] = \langle\phi^i(x,a),\theta_*^i\rangle$.\footnote{Note that we use the contextual linear bandit notation for this setting and in the corresponding sections.} 
We refer to such feature maps as {\em true} models and denote them by $i_*$.
Note that the agent does not know the identity of the true model(s) $i_*$. 

As a motivational example for this setting, we can consider a recommender system that has trained $M$ models (e.g.,~$M$ neural networks) to predict the score of customer-item pairs. Each model corresponds to a particular mood or type of the customer, or any other latent component of the customer's state. Each model provides an embedding for customer-item pairs and the score is linear in this embedding (think of an embedding as the one to the last layer of a trained NN). When a new customer arrives, the recommender system should find out as soon as possible which of the $M$ models (embeddings) is the best match to the current mood/type of this customer in order to recommend her desirable items.

We make the following standard assumption on the boundedness of the reward parameters and features of the $M$ models. 
\begin{assumption}
\label{ass:boundedness-Setting2}
There are constants $L,S,G \geq 0$, such that for all $i\in[M]$, $t\in [T]$, $x\in\mathcal X$, and $a\in\mathcal A$, we have $\|\theta_*^i\|\leq S$, $\|\phi^i(x,a)\|\leq L$, and $|\langle \phi^i(x,a),\theta_*^i \rangle| \leq G$.
\end{assumption}
%
%
Our goal here is to design an algorithm that minimizes {\em transfer regret}, which in this setting we define it as

\vspace{-0.15in}
 \begin{small}
\begin{equation}
\label{eq:transfer-regret2}
\mathcal R(T) = \sum_{t=1}^T \langle \phi^{i_*}(x_t,a_t^*),\theta_*^{i_*} \rangle - \langle \phi^{i_*}(x_t,a_t),\theta_*^{i_*} \rangle, 
\end{equation}
\end{small}
\vspace{-0.15in}

where $a_t^* = \argmax_{a\in\mathcal A} \langle \phi^{i_*}(x_t,a),\theta_*^{i_*} \rangle$. In the results we report for this setting in Section~\ref{sec:feature-select-alg}, we make two assumptions: {\bf 1)} the feature maps are all known (no model misspecification), and {\bf 2)} the number of actions is finite, i.e., we are in the finite $K$-action setting described in Section~\ref{subsec:LB}. However, we believe that our algorithm and analysis can be extended to the case of having misspecified models and convex action sets using the results in~\citet{Foster21AM}. 


\subsection{Parameter Selection Setting} 
\label{subsec:param-selection}

In this setting, unlike the classical setting in Section~\ref{subsec:LB}, we no longer assume that the unknown parameter $\theta_*$ can be any vector in $\mathbb{R}^d$. Rather, $\theta_*$ can be generated from $M$ possible reward models, each defined as a ball $B(\mu_i,b_i) = \{\theta\in\mathbb R^d:\|\theta - \mu_i\| \leq b_i\}$, with center $\mu_i\in\mathbb R^d$ and radius $b_i\geq 0$. Note that the models (balls) may overlap and do not have to be disjoint. The $M$ models can be thought of the responses of $M$ types (or clusters) of customers to different items in a recommender system or the reactions of patients with $M$ genotypes to a set of drugs. The radii $\{b_i\}_{i=1}^M$ represent the variation within each cluster. The reward parameter $\theta_*$ of the new task (LB problem) is arbitrarily selected from the $M$ models. For example, it can be adversarially selected from the union of the models, i.e.,~$\theta_*\in\bigcup_{i=1}^M B(\mu_i,b_i)$. In this case, we denote by $\mathcal I_*$, the set of indices of the balls that contain $\theta_*$. 
Since the models are often computed from (finite) historical data, it is reasonable to assume that only {\em estimates} of their centers $\{\widehat\mu_i\}_{i=1}^M$ are available, together with upper-bounds on the error of these estimates $\{c_i\}_{i=1}^M$, such that $\|\mu_i - \widehat\mu_i\| \leq c_i$, for all $i\in [M]$. 

The agent has no knowledge either about $\theta_*$ or the process according to which it has been selected. The only information given to the agent are: {\bf 1)} estimates $\widehat\mu_i$ of the center of the models, {\bf 2)} upper-bounds $c_i$ on the errors of these estimates, and {\bf 3)} the exact radii $b_i$ of the models, for all $i\in[M]$. This means that although $\theta_*$ is selected from the {\em actual} models $B(\mu_i,b_i)$, the agent has only access to {\em estimated} models $B(\widehat\mu_i,b_i+c_i)$ that have more uncertainty (their corresponding balls are larger). For simplicity, we assume that the exact values of radii $\{b_i\}_{i=1}^M$ are known. However, our results can be easily extended to the case that instead of $b_i$'s, their estimates $\widehat b_i$ and upper-bounds on their errors $c'_i$, i.e.,~$\|b_i - \widehat b_i\| \leq c'_i$, for all $i \in [M]$, are given to the agent. In this case, the agent has to use even more uncertain estimates of the models $B(\widehat \mu_i,b_i+c_i+c'_i)$. 

Our goal is to design an algorithm that can transfer knowledge from these estimated models and learn the new task with parameter $\theta_*$ more efficiently than when it is independently learned. This goal can be quantitatively stated as minimizing the {\em transfer regret}, 

\vspace{-0.15in}
\begin{small}
\begin{equation}
\label{eq:transfer-regret}
\mathcal{R}(T) = \sup_{\theta_* \in \bigcup_{i=1}^M B(\mu_i, b_i)} \mathcal R(T,\theta_*), 
\end{equation} 
\end{small}
\vspace{-0.15in}
 
where $\mathcal R(T,\theta_*)$ is the regret defined by~\eqref{eq:regret}. We make the following standard assumption on the boundedness of the features and expected rewards. 
\begin{assumption}
\label{ass:boundedness-Setting1}
There exist constants $L,G \geq 0$, such that $\forall t\in [T]$ and $\forall a\in\bigcup_{t=1}^T\mathcal A_t$, we have $\|\phi_t(a)\|\leq L$, and $\forall \theta\in\bigcup_{i=1}^MB(\mu_i,b_i)$, we have $|\langle\phi_t(a),\theta\rangle|\leq G$.
\end{assumption}


\subsection{Regression Oracle}
\label{subseq:reg-oracle}

In both model selection settings studied in the paper, our proposed algorithms use a regression oracle that is based on sequential prediction with expert advice and square loss. Following~\citet{FR-2020} and~\citet{Foster21AM}, we refer to this regression oracle as \texttt{SqAlg}. We can consider \texttt{SqAlg} as a meta algorithm that consists of $M$ learning algorithms (or experts), each corresponding to one of our $M$ models, and returns a prediction by aggregating the predictions of its experts. More precisely, in each round $t\in[T]$, \texttt{SqAlg} takes the current context-action pair $(x_t,a_t)$, or equivalently $\phi_t(a_t)$, as input, and gives them to its $M$ experts to predict their reward, i.e.,~$f^i_t(H_t)=f^i(\phi_t(a_t);H_t),\;\forall i\in [M]$, given the current history $H_t$. Then, the meta algorithm \texttt{SqAlg} aggregates its experts' predictions, $\{f^i_t(H_t)\}_{i=1}^M$, given their current weights, and returns its own prediction $\widehat{y}_t=\texttt{SqAlg}_t(\phi_t(a_t);H_t)$. Upon observing the actual reward $y_t$, \texttt{SqAlg} updates the weights of its experts according to the difference between their predictions $f^i(\phi_t(a_t);H_t)$ and the actual reward $y_t$. 

The regression oracles (\texttt{SqAlg}) used by our model selection algorithms differ in the prediction algorithm used by their experts. However, in both cases, \texttt{SqAlg} aggregates its experts' predictions using an algorithm by~\citet{Herbster-Kivinen-Warmuth-1998} (see Algorithm~\ref{alg:expert-prediction} in Appendix~\ref{app:SqAlg-description}). The performance of \texttt{SqAlg} is evaluated in terms of its regret $\mathcal R_{\texttt{Sq}}(T)$, which is defined as its accuracy (in terms of square loss) w.r.t.~the accuracy of the best expert in the set, i.e.,

\vspace{-0.15in}
\begin{small}
\begin{equation}
\label{eq:regression-oracle}
\sum_{t=1}^T(\widehat{y}_t - y_t)^2 - \min_{i\in [M]}\sum_{t=1}^T(f^i_t(H_t) - y_t)^2 \leq {\mathcal R}_{\texttt{Sq}}(T).
\end{equation}
\end{small}
\vspace{-0.15in}

In each round $t$, we define the oracle prediction for a context $x$ and an action $a$ as $\widehat{y}_t(x,a):=\texttt{SqAlg}_t(x,a;H_t)$. As shown in~\citet{Herbster-Kivinen-Warmuth-1998}, in case all observations and experts' predictions are bounded in an interval of size $\ell$, this regret can be bounded as $\mathcal R_{\texttt{Sq}}(T)\leq \ell^2\log M$ (see Appendix~\ref{app:SqAlg-description} for more details). We use this regret bound in the analysis of our proposed algorithms. 


\section{Feature Selection Algorithm}
\label{sec:feature-select-alg}

In this section, we derive an algorithm for the feature selection setting described in Section~\ref{subsec:feature-selection} that is based on the SquareCB algorithm~\cite{FR-2020}. We refer to our algorithm as {\em feature selection SquareCB} (FS-SCB). We prove an upper-bound on the transfer regret of FS-SCB in Section~\ref{subsec:analysis-FS-SCB}, and provide an overview of the related work and a discussion on our results in Section~\ref{subsec:related-work-FS}.

Algorithm~\ref{alg:FS-SCB} contains the pseudo-code of FS-SCB. In each round $t\in[T]$, the algorithm observes a context $x_t\in\mathcal X$ and passes it to its regression oracle \texttt{SqAlg} to produce its reward predictions $\widehat{y}_t(x_t,a),\forall a\in[K]$. Each expert in \texttt{SqAlg} corresponds to one of the $M$ models and is a {\em ridge regression} algorithm with the feature map of that model. Expert $i\in[M]$ predicts the reward of the context $x_t$, for each action $a\in[K]$, as $f^i(x_t,a;H_t) = \langle \phi^i(x_t,a),\widehat{\theta}_t^i\rangle$, where $\widehat{\theta}_t^i = \argmin_{\theta} \|\Phi_t^{i\top}\theta - Y_t\|^2 + \lambda_i\|\theta\|^2$. We may write $\widehat{\theta}_t^i$ in closed-form as $\widehat{\theta}_t^i=(V_t^{\lambda_i})^{-1}\Phi_t^{i\top}Y_t$. In these equations, $Y_t=(y_1,\ldots,y_{t-1})^\top$ is the reward vector; $\Phi_t^i$ is the feature matrix of the $i^{th}$ model, whose rows are $\phi^i(x_1,a_1),\ldots,\phi^i(x_{t-1},a_{t-1})$; $\lambda_i$ is the regularization parameter of model $i$, which our analysis shows that it only needs to be larger than one, i.e.,~$\lambda_i \geq 1$; and finally $V_t^{\lambda_i} = \lambda_iI+\Phi_t^\top\Phi_t$. The meta algorithm \texttt{SqAlg} aggregates the experts' predictions $\{f^i(x_t,a;H_t)\}_{i=1}^M$ and produces its own predictions $\widehat{y}_t(x_t,a),\;\forall a\in[K]$, using Algorithm~\ref{alg:expert-prediction} in Appendix~\ref{app:SqAlg-description} (see Remark~\ref{admissible-expert-feature}).

\begin{algorithm}[tb]
  \caption{Feature Selection Square-CB (FS-SCB)}
  \label{alg:FS-SCB}
\begin{algorithmic}
  \STATE {\bfseries Input:} Models $\{\phi^i\}_{i=1}^M$, Confidence Parameter $\delta$, Learning Rate $\alpha$, Exploration Parameter $\kappa$
  \FOR{$t=1$ {\bfseries to} $T$}
  \STATE Observe context $x_{t}$ 
  \STATE Oracle predicts: \\$ \;\; \widehat{y}_t(x_t,a) = \texttt{SqAlg}_t(x_t,a;H_t), \;\; \forall a \in [K]$ \label{alg:FS-SCB:Prediction}
  \STATE Define a distribution $p_t$ over the actions:
\begin{equation}
\label{eq:dist-over-actions}
\hspace{-0.1in} p_t(a) = 
\begin{cases} 
\frac{1}{\kappa + \alpha \big(\widehat{y}_t(x_t,a) - \widehat{y}_t(x_t,a'_t)\big)}, & a \neq a'_t, \\ 
1 - \sum_{a \neq a'_t} p_t(a), & a = a'_t,
\end{cases}
\end{equation} 
where $a'_t = \argmax_{a\in[K]} \widehat{y}_t(x_t,a)$; \label{alg:FS-SCB:Action-Prob} 
\STATE Sample action $a_t \sim p_t(\cdot)$ and play it; \label{alg:FS-SCB:Action-Selection}
\STATE Observe reward $\;y_t = \langle \phi^{i_*}(x_t,a_t), \theta_*^{i_*} \rangle + \eta_t$; 
\STATE Update \texttt{SqAlg} with $(x_t,a_t,y_t)$; \label{alg:FS-SCB:Reward-Update}
  \ENDFOR
\end{algorithmic}
\end{algorithm}

The next step in FS-SCB is computing the action with the highest predicted reward, i.e.,~$a'_t=\argmax_{a\in[K]}\widehat{y}_t(x_t,a)$, and using it to define a distribution $p_t\in\Delta_K$ over the actions (see Eq.~\ref{eq:dist-over-actions}). The distribution $p_t$ in~\eqref{eq:dist-over-actions} is defined similarly to the probability selection scheme of~\citet{Abe99AR}, and assigns a probability to every action inversely proportional to the gap between its prediction and that of $a'_t$. The algorithm then samples its action $a_t$ from $p_t$, observes reward $y_t$, and feeds the tuple $(x_t,a_t,y_t)$ to the oracle to update its weights over the experts. Our analysis in Section~\ref{subsec:analysis-FS-SCB} and Appendix~\ref{app:proof:section4} suggest to set the exploration parameter to $\kappa = K$ and the learning rate to $\alpha=\sqrt{KT/D_T(\delta)}$, where we define $D_T(\delta)$ in Lemma~\ref{pred:error:sqalg:FS-SquareCB} and give its exact expression in Eq.~\ref{definigitonofd_t:fs} in Appendix~\ref{app:proof:lemm:sqalg:feature-selection}.

\begin{remark}[Admissible Experts]
\label{admissible-expert-feature}
It is important to note that in each round $t\in[T]$, FS-SCB only uses predictions by admissible experts, i.e.,~experts $i$ that belong to the set

\vspace{-0.15in}
\begin{small}
\begin{align}
\label{eq:addmisable-experts}
\mathcal{S}_t := \big\{i\in\mathcal{S}_{t-1} &: \langle \phi^i(x_t,a),\widehat{\theta}_t^i\rangle \leq G +   RL\sqrt{d \log\left(\frac{1 + \frac{t L^2}{\lambda_i d}}{\delta}\right)} \nonumber \\ 
&+ L \sqrt{\lambda_i} S, \;\;\;\forall a \in [K] \big\},
\end{align}
\end{small}
\vspace{-0.15in}

with $\mathcal S_0=[M]$. This is the set of experts $i$ whose predictions $f^i(x_t,a;H_t) =\langle \phi^i(x_t,a),\widehat{\theta}_t^i\rangle,\;\forall a\in[K]$ are within a bound defined by \eqref{eq:addmisable-experts}. When an expert was removed from the admissible set in a round $t$, it will remain out for the rest of the game. We discuss the technical reasons for defining this set in the proof of Lemma~\ref{lem:upper-bound:Rsq(t)} in Appendix~\ref{app:proof:lemm:sqalg:feature-selection}.
\end{remark}


\subsection{Regret Analysis of FS-SCB}
\label{subsec:analysis-FS-SCB}

We state a regret bound for FS-SCB followed by a proof sketch. The detailed proofs are all reported in Appendix~\ref{app:proof:section4}.

\begin{theorem}
\label{thm:regretbound_EXP-SquareCB}
Let Assumption~\ref{ass:boundedness-Setting2} hold and the regularization parameters $\lambda_i$, exploration parameter $\kappa$, and learning rate $\alpha$  set to the values described above. Then, for any $\delta \in [0,1/4)$, w.p.~at least $1-\delta$, the regret defined by~\eqref{eq:transfer-regret2} for FS-SCB is bounded as

\vspace{-0.15in}
\begin{small}
\begin{align}
&\mathcal R_{\text{\em FS-SCB}}(T) \leq \mathcal{O}\bigg( \sqrt{2 T \log(2/\delta)} + R L G
\nonumber \\ &
 \times \sqrt{KT (1+\log(M)) \max_{i \in [M]}\left\{\lambda_i S^2 + 4d \log\left(\frac{1 + \frac{T L^2}{\lambda_i d}}{\delta}\right)\right\}}\bigg). \nonumber
\end{align}
\end{small}
\vspace{-0.15in}

\end{theorem}

{\bf\em Proof Sketch.} The proof consists of two main steps:

{\bf Step 1.} We first need to bound the prediction error of the online regression oracle. 

\begin{lemma}
\label{pred:error:sqalg:FS-SquareCB}
For any $\delta \in (0,1/4]$, w.p.~at least $1-\delta$, we can bound the prediction error of the regression oracle as

\vspace{-0.15in}
\begin{small}
\begin{align}
& \sum_{s=1}^{t-1} \big( \widehat{y}_s(x_s,a_s) - \langle \phi^{i_*}(x_s,a_s), \theta_*^{i_*} \rangle \big)^2 \leq   
D_t(\delta) := \nonumber \\ 
& \mathcal{O}\Big( \big(1+ R^2 L^2 G^2 \log(M) \big) 
\max_{i \in [M]} \big\{ \lambda_i S^2 + 4d \log\big(\frac{1 + \frac{T L^2}{\lambda_i d}}{\delta} \big) \big\} \Big). \nonumber
\end{align}
\end{small}
\vspace{-0.15in}
\end{lemma}

The exact definition of $D_t(\delta)$ (see Eq.~\ref{definigitonofd_t:fs} in Appendix~\ref{proofoflemaofagentpredictionforsquarecb}) shows its dependence on the following two terms: {\bf 1)} an upper-bound $Q_t$ on the prediction error of the true models,

\vspace{-0.15in}
\begin{small}
\begin{equation}
\label{eq:defQ}
\max_{i_*}\sum_{s=1}^{t-1} \big(\langle \phi^{i_*}(x_s,a_s),\widehat{\theta}^{i_*}_s \rangle  - \langle \phi^{i_*}(x_s,a_s),\theta_*^{i_*} \rangle\big)^2 \leq Q_t,
\end{equation}
\end{small}
\vspace{-0.15in}

and {\bf 2)} the regret $\mathcal{R}_{\texttt{Sq}}(t)$ of the regression oracle. Thus, the proof of Lemma~\ref{pred:error:sqalg:FS-SquareCB} requires finding expressions for these quantities, which we derive them in the following lemmas. 
\begin{lemma}
\label{lem:upper-bound:Q-t}
For any $\delta \in (0,1)$, with probability at least $1-\delta$, we may write $Q_t$ defined in~\eqref{eq:defQ} as (see Eq.~\ref{final:def:q_t} in Appendix~\ref{app:lemmforleastsauqereq-t} for the exact expression)

\vspace{-0.15in}
\begin{small}
\begin{align*}
Q_t = \mathcal O\Big(  \max_{i \in [M]} \big\{ \lambda_{i} S^2 + 4 d \log \big( 1+ \frac{tL^2}{\lambda_{i} d} \big)\big\} + R^2 \log(1/\delta)\Big).
\end{align*}
\end{small}
\vspace{-0.15in}
\end{lemma}

\begin{lemma}
\label{lem:upper-bound:Rsq(t)}
For any $\delta \in (0,1)$, with probability at least $1-\delta$, we may write the regret of the regression oracle as (see Eq.~\ref{eq:finalboundrsq:fs} in Appendix~\ref{app:proof:lemm:sqalg:feature-selection} for the exact expression)

\begin{small}
\vspace{-0.15in}
\begin{align*}
& \mathcal R_{\texttt{\em Sq}}(t)  = 
\mathcal O\bigg(R^2 L^2 \log(M) \times \nonumber \\ 
& ~~~ \Big(G^2+ \max_{i\in[M]}\big\{\lambda_i S^2  +  d \log\big({1 + \frac{tL^2 }{\lambda_i d}}\big)\big\} + \log(1/\delta)\Big)\bigg).
\end{align*}
\end{small}
\vspace{-0.15in}
\end{lemma}

{\bf Step 2.} We then show how the overall regret of FS-SCB is related to the prediction error of the online regression oracle, $D_t(\delta)$, using the following lemma: 

\begin{lemma}
\label{lemm:bound-regret:expectation-counterparts}
Under the same assumptions as Theorem~\ref{thm:regretbound_EXP-SquareCB}, for any $\delta \in (0,1/4]$, with probability at least $1-\delta$, the regret of the FS-SCB algorithm is bounded as

\vspace{-0.15in}
\begin{small}
\begin{align}
&\mathcal{R}_{\text{\em FS-SCB}}(T) \leq \sqrt{2 T \log(2/\delta)} + \frac{\alpha}{4} D_T(\delta) \;+ \nonumber\\ 
&\quad\sum_{t=1}^T \sum_{a \in [K]}  p_t(a) \bigg(\langle \phi^{i_*}(x_t,a),\theta_*^{i_*}\rangle - \langle \phi^{i_*}(x_t, a^*_{t}) ,\theta_*^{i_*}\rangle \nonumber \\ 
& \qquad \qquad - \frac{\alpha}{4}\big( \widehat{y}_t(x_t,a) - \langle \phi^{i_*} (x_t,a_t), \theta^{i_*}_*\rangle \big)^2 \bigg). 
\label{regret:bound:fs-squarecb} 
\end{align}
\end{small}
\vspace{-0.15in}
\end{lemma}

Finally, we conclude the proof of Theorem~\ref{thm:regretbound_EXP-SquareCB} by bounding the last term on the RHS of~\eqref{regret:bound:fs-squarecb} using Lemma~\ref{lemma3offosterrakhlin} (see Appendix~\ref{puttinthingstogethr-boundregret-fs} for details).


\subsection{Related Work (Feature Selection)}
\label{subsec:related-work-FS}

The most straightforward solution to the feature selection problem described in Section~\ref{subsec:feature-selection} is to concatenate all models (feature maps) and build a $(M\times d)$-dimensional feature, and then search for the sparse reward parameter $\theta_*\in \mathbb R^{Md}$ with only $d$ non-zero elements. We may then solve the resulting LB problem using a sparse LB algorithm (e.g.,~\citealt{APS-2012}). This approach would result in a regret bound of $\widetilde{\mathcal{O}}(d\sqrt{MT})$, which may not be desirable when the number of models $M$ is large. 

Another approach is to use the EXP4 (or SquareCB) algorithm~\citep{Auer02NS} to obtain a regret that scales only logarithmically with $M$. If we partition the linear space of each model into $\mathcal{O}(2^d)$ predictors, we will have the total number of $\mathcal{O}(M 2^d)$ predictors. Predictor $(i,j)\in([M],[2^d])$ is associated with a linear map $\theta^{ij}\in \mathbb{R}^d$ and recommends the action $\argmax_{a\in A} \langle \phi^i(a),\theta^{ij} \rangle$. The regret of EXP4 with this set of experts is of $\widetilde{\mathcal{O}}(\sqrt{d K T \log M})$. Although this solution has logarithmic dependence on $M$, it is still not desirable, since it is not computationally efficient (requires handling $M 2^d$ predictors). 

To have computational efficiency, we can use the approach of~\citet{Maillard-Munos-2011}, but this results in a $O(T^{2/3})$ regret. They designed a model selection strategy using an EXP4 algorithm with a set of experts that are instances of the S-EXP3 algorithm of~\citet{auer2002nonstochastic}. The interesting fact is that each S-EXP3 expert is a learning algorithm and competes against a set of mappings. The overall regret of this algorithm is of $\widetilde{\mathcal O}\big(T^{2/3}(|S| K \log K)^{1/3}\sqrt{\log M}\big)$ (see \citealt[Chapter 4.2]{Bubeck-CesaBianchi-2012}). If we apply this algorithm to our setting, the resulting regret bound is of $\widetilde{\mathcal O}(T^{2/3} d^{1/3} K^{1/3}\sqrt{\log M})$. Although the algorithm is computationally more efficient than EXP4 and its regret has logarithmic dependence on $M$, it is still not desirable as its dependence on $T$ is of $\widetilde{\mathcal{O}}(T^{2/3})$, which is not optimal. 

The novelty of our results is that we propose a computationally efficient algorithm, whose regret has better dependence on $M$ and $T$, i.e.,~$\widetilde{\mathcal{O}}(\sqrt{K T \log M})$, than all the existing methods. Our FS-SCB algorithm achieves this by {\bf 1)} using a novel instantiation of SquareCB, or more precisely by constructing a proper full information algorithm (expert), and {\bf 2)} using SquareCB with a set of {\em adaptive (learning)}, and not static, least-squares experts. Note that SquareCB is a reduction that turns any online regression oracle into an algorithm for contextual bandits~\cite{FR-2020}.


More recently,~\citet{papini2021leveraging} studied a feature selection problem where the reward function is linear in \textit{all} $M$ feature maps (all models are {\em realizable}). Under this {\em stronger} assumption (than ours), they prove a regret bound that is competitive (up to a $\log M$ factor) with that of a linear bandit algorithm that uses the best feature map. More specifically, if one of the feature maps is such that a constant regret is achievable, the overall model selection strategy also achieves a constant regret. Although our focus is not on constant regret, we are able to achieve our results without requiring all models to be {\em realizable}. 


\section{Parameter Selection Algorithm}
\label{sec:param-select-alg}

We propose a UCB-style algorithm for the parameter selection setting described in Section~\ref{subsec:param-selection}, which we refer to as {\em parameter selection OFUL} (PS-OFUL). We then provide an upper-bound on its transfer regret and conclude with a discussion on the existing results related to this setting. 

Algorithm~\ref{alg:PS-OFUL} contains the pseudo-code of PS-OFUL. The novel idea in PS-OFUL is the construction of its confidence set $\mathcal C_t$ (Eq.~\ref{eq:PS-OFUL:ConfidenceSet}), which is based on the predictions $\{\widehat{y}_s\}_{s=1}^{t-1}$ by a regression oracle \texttt{SqAlg}. As described in Section~\ref{subseq:reg-oracle}, \texttt{SqAlg} is a meta algorithm that consists of $M$ learning algorithms (or experts), and its predictions $\widehat{y}_t$ are aggregates of its experts' predictions $f^i(\phi_t(a_t);H_t), \; \forall i \in [M]$. In PS-OFUL, each expert $i\in[M]$ is a {\em biased regularized least-squares} algorithm with bias $\widehat{\mu}_i$, i.e.,~our estimate of the center of the $i^{th}$ ball (model). Expert $i$ predicts the reward of the context-action $\phi_t(a_t)$ as $f^i(\phi_t(a_t);H_t) = \langle \phi_t(a_t),\widehat{\theta}^i_t \rangle$, where $\widehat{\theta}^i_t = \argmin_{\theta} \|\Phi_t^\top\theta - Y_t\|^2 + \lambda_i\|\theta - \widehat{\mu}_i\|^2$. We may write $\widehat{\theta}^i_t$ in closed-form as $\widehat{\theta}^i_t = (V_t^{\lambda_i})^{-1} \Phi_t^\top(Y_t - \Phi_t\widehat{\mu}_i) + \widehat{\mu}_i$. In these equations, the reward vector $Y_t$ and $V_t^{\lambda_i}$ are defined as in Section~\ref{sec:feature-select-alg}; $\Phi_t$ is the feature matrix, whose rows are $\phi_1(a_1),\ldots,\phi_{t-1}(a_{t-1})$; and $\lambda_i$ is the regularization parameter of expert $i$. Our analysis in Section~\ref{subsec:analysis-PS-OFUL} and Appendix~\ref{app:proofs-param-selection} suggests to set them to $\lambda_i=\frac{1}{(b_i+c_i)^2}$. 



The PS-OFUL algorithm takes the feature map $\phi$ and models $\{B(\widehat{\mu}_i,b_i+c_i)\}_{i=1}^M$ as input. At each round $t\in[T]$, it first constructs a confidence set $\mathcal C_{t-1}$ using the predictions of the regression oracle $\{\widehat{y}_s\}_{s=1}^{t-1}$.
The radius $\gamma_t(\delta)$ of the confidence set $\mathcal C_t$ is defined by two terms: {\bf 1)} the regret $\mathcal{R}_{\texttt{Sq}}(t)$ of the regression oracle \texttt{SqAlg}, defined by~\eqref{eq:regression-oracle}, and {\bf 2)} an upper-bound $U_t$ on the prediction error of the true models (i.e.,~models that contain $\theta_*$), i.e.,

\vspace{-0.15in}
\begin{equation}
\label{eq:Ut}
\max_{i\in\mathcal I_*} \; \sum_{s=1}^{t-1}\big(\langle \phi_s(a_s),\widehat{\theta}_t^{i} \rangle - \langle \phi_s(a_s),\theta_* \rangle \big)^2 \leq U_t.    
\end{equation}
\vspace{-0.15in}

The exact values of $U_t$, $\mathcal{R}_{\texttt{Sq}}(t)$, and $\gamma_t(\delta)$ come from our analysis and have been stated in Eq.~\ref{eq:final-conf-set} in Appendix~\ref{app:ptoof_themr_confidenceregion_firstsetting}. PS-OFUL then computes action $a_t$ as the one that attains the maximum optimistic reward w.r.t.~the confidence set $\mathcal C_{t-1}$. Using $a_t$, it calculates $\widehat{y}_t= \texttt{SqAlg}_t(\phi_t(a_t);H_t))$. 
As described in Section~\ref{subseq:reg-oracle}, \texttt{SqAlg} makes use of Algorithm~\ref{alg:expert-prediction} in Appendix~\ref{app:SqAlg-description} to return its prediction $\widehat{y}_t$ as an aggregate of its experts' predictions (see Remark~\ref{admissible-expert-parameter}). Finally, PS-OFUL takes action $a_t$, observes reward $y_t$, and pass the sample $(\phi_t(a_t),y_t)$ to \texttt{SqAlg}. This sample is then used within \texttt{SqAlg} to evaluate its experts and to update their weights. 

\begin{algorithm}[tb]
\caption{Parameter Selection OFUL (PS-OFUL)}
\label{alg:PS-OFUL}
\begin{algorithmic}
\STATE {\bfseries Input:} Feature Map $\phi$, Confidence Parameter $\delta$, Models $\{B(\widehat{\mu}_i,b_i+c_i)\}_{i=1}^M$
\FOR{$t=1$ {\bfseries to} $T$}
\STATE Construct the confidence set: 
  
\vspace{-0.175in}
\begin{small}
\begin{align}
\label{eq:PS-OFUL:ConfidenceSet}
\mathcal{C}_{t-1} =\left\{\theta : \sum_{s=1}^{t-1}\big(\widehat{y}_s - \langle\phi_s(a_s),\theta\rangle\big)^2 \leq \gamma_{t-1}(\delta)\right\}
\end{align}
\end{small}
\vspace{-0.125in}
  
\STATE Take action:  $a_t=\arg\max_{a\in \mathcal{A}_t}\max_{\theta\in\mathcal{C}_{t-1}} \langle\phi_t(a),\theta\rangle$
\STATE Oracle predicts: $\; \widehat{y}_t = \texttt{SqAlg}_t(\phi_t(a_t);H_t)$
\STATE Observe reward: $\;\;y_t = \langle \phi_t(a_t), \theta_* \rangle + \eta_t$ 
\STATE Update \texttt{SqAlg} with $(\phi_t(a_t),y_t)$;
\ENDFOR
\end{algorithmic}
\end{algorithm}

\begin{remark}[Admissible Experts]
\label{admissible-expert-parameter}
Similar to FS-SCB, in each round $t\in[T]$, PS-OFUL only uses predictions by admissible experts, i.e.,~experts $i$ that belong to the set

\vspace{-0.15in}
\begin{small}
\begin{align}
\label{ps-oful-admissible-expert}
\mathcal{S}_t := \big\{i\in\mathcal{S}_{t-1} &: \langle \phi_t(a_t),\widehat{\theta}_t^i\rangle \leq G + RL\sqrt{d \log\left(\frac{1 + \frac{t L^2}{\lambda_i d}}{\delta} \right)} \nonumber \\ 
&+ L\sqrt{\lambda_i} (b_i + c_i) \big\}, 
\end{align}
\end{small}
\vspace{-0.15in}

with $\mathcal S_0=[M]$. This is the set of experts $i$ whose prediction $f^i(\phi_t(a_t);H_t) =\langle \phi_t(a_t),\widehat{\theta}_t^i\rangle$ is within a bound defined by \eqref{ps-oful-admissible-expert}. When an expert was removed from the admissible set in a round $t$, it will remain out for the rest of the game. We discuss the technical reasons for defining this set in the proof of Lemma~\ref{lem:PS-OFUL-SqAlg-reg} in Appendix~\ref{app:thm:upperbound-regret-oracle}.
\end{remark}


\subsection{Regret Analysis of PS-OFUL}
\label{subsec:analysis-PS-OFUL}

We state a regret bound for PS-OFUL followed by a proof sketch. The detailed proofs are all reported in Appendix~\ref{app:proofs-param-selection}. 


\begin{theorem}
\label{thm:PS-OFUL-regret}
Let Assumption~\ref{ass:boundedness-Setting1} hold and $\lambda_i=\frac{1}{(b_i+c_i)^2}\geq 1,\;\forall i\in[M]$. Then, for any $\delta \in (0,1/4]$, with probability at least $1-\delta$, the transfer-regret defined by~\eqref{eq:transfer-regret} of PS-OFUL is bounded as

\vspace{-0.15in}
\begin{small}
\begin{align}
\label{eq:regret-param-select}
\mathcal{R}(T) &= \mathcal{O}\bigg(dRL\max\{1,G\}\sqrt{1 + \log(M)} \\
&\hspace{-0.125in} \times\sqrt{T\log\big(1 + \frac{T}{d}\big) \log\Big(\frac{1 + \frac{T L^2 \max_{i\in[M]}(b_{i} + c_{i})^2}{ d}}{\delta}\Big)} \bigg). \nonumber
\end{align}
\end{small}
\vspace{-0.15in}
%
\end{theorem}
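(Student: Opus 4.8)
The plan is to follow the OFUL template of~\cite{abbasi2011improved}, but replacing the self-normalized martingale confidence set with one built directly from the regression oracle's square-loss guarantee~\eqref{eq:regression-oracle}. The regret decomposition is standard: on the event that $\theta_*$ lies in every confidence set $\mathcal{C}_{t-1}$, the optimism in Line~\ref{alg:PS-OFUL:Action} guarantees that the chosen action's optimistic value dominates $\langle\phi_t(a_t^*),\theta_*\rangle$, so each per-round regret term is bounded by the ``width'' of $\mathcal{C}_{t-1}$ in the direction $\phi_t(a_t)$. Summing these widths and invoking an elliptical-potential (log-determinant) argument should yield the $\sqrt{T}$ dependence together with the $\log(1+T/d)$ factor.

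\textbf{The crux} is establishing validity of the confidence set, i.e.~that $\theta_*\in\mathcal{C}_{t-1}$ with high probability, and pinning down its radius $\gamma_{t-1}(\delta)$. First I would show that the cumulative square-loss error of $\theta_*$ decomposes via~\eqref{eq:regression-oracle}: the loss of the aggregate predictions $\widehat{y}_s$ is within $\mathcal{R}_{\texttt{Sq}}(t)\le \ell^2\log M$ of the best expert's loss. For an index $i\in\mathcal{I}_*$ (a true model containing $\theta_*$), the best-expert loss is controlled by the biased-ridge prediction error $U_t$ defined in~\eqref{eq:Ut}. The key technical step is bounding $U_t$ for the biased regularized least-squares estimator $\widehat{\theta}^i_t=(V_t^{\lambda_i})^{-1}\Phi_t^\top(Y_t-\Phi_t\widehat{\mu}_i)+\widehat{\mu}_i$: since $\theta_*\in B(\mu_i,b_i)$ and $\|\mu_i-\widehat{\mu}_i\|\le c_i$, we have $\|\theta_*-\widehat{\mu}_i\|\le b_i+c_i$, and with the regularization choice $\lambda_i=\frac{1}{T(b_i+c_i)^2}$ the bias term $\lambda_i\|\theta_*-\widehat{\mu}_i\|^2\le 1/T$ becomes negligible. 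This is precisely where the logarithmic dependence on the misspecification $c_i$ will enter: the confidence radius depends on $(b_i+c_i)^2$ only inside a logarithm via $\log\big(1+\frac{T^2L^2\max_i(b_i+c_i)^2}{d}\big)$.

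\textbf{The expected order of steps} is: (i) connect the oracle loss to the best expert loss using~\eqref{eq:regression-oracle}; (ii) bound the best-expert (true-model) loss $U_t$ by combining a self-normalized martingale concentration inequality for the noise sequence $\{\eta_s\}$ with the bias bound above, yielding $U_t=\mathcal{O}(R^2 d\log(\cdot/\delta))$; (iii) assemble $\gamma_t(\delta)=\mathcal{O}(\mathcal{R}_{\texttt{Sq}}(t)+U_t)$ and verify $\theta_*\in\mathcal{C}_{t-1}$ on a high-probability event; (iv) translate membership into a per-round width bound, bounding $\langle\phi_t(a),\theta-\theta'\rangle$ for $\theta,\theta'\in\mathcal{C}_{t-1}$ by the Cauchy--Schwarz-type inequality $2\sqrt{\gamma_{t-1}}\,\|\phi_t(a_t)\|_{(V_{t-1})^{-1}}$ after relating the square-loss confidence set to an ellipsoid in the empirical-covariance norm; and (v) apply Cauchy--Schwarz over $t$ and the elliptical potential lemma to sum $\sum_t\|\phi_t(a_t)\|^2_{(V_{t-1})^{-1}}\le 2d\log(1+T/d)$.

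\textbf{The main obstacle} I anticipate is step (iv): the confidence set $\mathcal{C}_{t-1}$ is defined through the aggregate oracle predictions $\{\widehat{y}_s\}$ rather than a clean quadratic form in $\theta$ centered at a single estimator, so converting the square-loss constraint $\sum_s(\widehat{y}_s-\langle\phi_s(a_s),\theta\rangle)^2\le\gamma_{t-1}$ into an ellipsoidal width bound in the $\|\cdot\|_{V_{t-1}^{-1}}$ norm requires care. Concretely, for two parameters $\theta,\theta'$ both satisfying the constraint, one must show $\|\theta-\theta'\|_{V_{t-1}}=\mathcal{O}(\sqrt{\gamma_{t-1}})$; this follows from the parallelogram/triangle inequality on the square-loss functional once the unregularized Gram matrix $\Phi_{t-1}^\top\Phi_{t-1}$ is shown to be well-conditioned along the relevant directions, but keeping the constants tight (so the final bound carries $dRL\max\{1,G\}$ rather than a looser polynomial) is the delicate part. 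I would handle this by passing through the factorization $\widehat{y}_s-\langle\phi_s(a_s),\theta\rangle = (\widehat{y}_s-y_s)+\eta_s+\langle\phi_s(a_s),\theta_*-\theta\rangle$ and controlling each piece separately.
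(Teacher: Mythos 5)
Your proposal follows the same route as the paper's proof: the same confidence-set construction from the oracle's square loss, the same decomposition of the radius into $\mathcal{O}(U_t + \mathcal{R}_{\texttt{Sq}}(t))$ (your steps (i)--(iii) are exactly Lemmas~\ref{lem:PS-OFUL-U_t} and~\ref{lem:PS-OFUL-SqAlg-reg} and Theorem~\ref{thm:PS-OFUL-confidence-set}), and the same optimism-plus-elliptical-potential finish (Lemma~\ref{lem:PS-OFUL-reg:sumup}). You also correctly isolate the one step that does not follow from the OFUL template: converting the square-loss constraint into an ellipsoidal width bound.

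That step, however, is a genuine gap in your write-up, and the workaround you propose does not close it. The set $\mathcal{C}_{t-1} = \{\theta : \sum_{s<t}(\widehat{y}_s - \langle\phi_s(a_s),\theta\rangle)^2 \le \gamma_{t-1}(\delta)\}$ constrains $\theta$ only along the span of the past features $\{\phi_s(a_s)\}_{s<t}$: in any orthogonal direction it is an unbounded cylinder, so $\max_{\theta\in\mathcal{C}_{t-1}}\langle\phi_t(a),\theta\rangle$ can be infinite, no width bound of the form $2\sqrt{\gamma_{t-1}(\delta)}\,\|\phi_t(a_t)\|_{V_{t-1}^{-1}}$ exists, and neither the parallelogram inequality between two feasible points nor your factorization $\widehat{y}_s - \langle\phi_s(a_s),\theta\rangle = (\widehat{y}_s - y_s) + \eta_s + \langle\phi_s(a_s),\theta_*-\theta\rangle$ can manufacture curvature in directions the data never probed. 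Saying the bound holds ``once the unregularized Gram matrix is shown to be well-conditioned'' begs the question, since nothing in the optimistic action selection guarantees this. The paper supplies the missing idea in Appendix~\ref{finalsecforproofofregrettheorm-psoful}: a forced-exploration burn-in in which the first $d$ rounds play actions with features $\phi_i(a_i) = Le_i$, so that $V_t \succeq L^2 I$ for all $t > d$; its cost is the additive $2Gd$ term in Lemma~\ref{lem:PS-OFUL-reg:sumup}. With invertibility in hand, the paper uses the exact quadratic decomposition of the square-loss functional about its unregularized least-squares minimizer $\widehat{\theta}_t$ to get $\mathcal{C}_{t-1} \subseteq \{\theta : \|\theta - \widehat{\theta}_t\|_{V_t}^2 \le \gamma_t(\delta)\}$, and then bounds the instantaneous regret by $2\sqrt{\gamma_t(\delta)}\,\|\phi_t(a_t)\|_{V_t^{-1}}$ via the triangle inequality through $\widehat{\theta}_t$ --- essentially your step (iv), but now legitimate because $V_t$ is invertible. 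Once you add this burn-in device, the rest of your outline goes through as stated.
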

 
{\bf\em Proof Sketch.} The proof consists of two main steps. 

{\bf Step~1.} We first fully specify the confidence set $\mathcal C_t$ and prove its validity i.e.,~$\mathbb P(\theta_* \in \mathcal C_t) \geq 1 - \delta,\;\forall t\in [T]$. 

\begin{theorem}
\label{thm:PS-OFUL-confidence-set}
Under the same assumptions as Theorem~\ref{thm:PS-OFUL-regret}, the radius $\gamma_t(\delta)$ of the confidence set $\mathcal C_t$ is fully specified by Eq.~\ref{eq:final-conf-set} in Appendix~\ref{app:ptoof_themr_confidenceregion_firstsetting}. Moreover, for any $\delta \in (0,1/4]$, with probability at least $1-\delta$, the true reward parameter $\theta_*$ lies in $\mathcal{C}_t$, i.e., $\mathbb{P}\left(\theta_* \in \mathcal{C}_t\right) \geq 1-\delta$. 
\end{theorem}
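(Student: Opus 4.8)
The plan is to verify directly that the true parameter satisfies the defining inequality of $\mathcal{C}_t$, i.e.\ that $\sum_{s=1}^t(\widehat{y}_s-\langle\phi_s(a_s),\theta_*\rangle)^2\le\gamma_t(\delta)$ holds with probability at least $1-\delta$, and to extract the radius $\gamma_t(\delta)$ from the resulting bound. Write $\mu_s:=\langle\phi_s(a_s),\theta_*\rangle$, so that $y_s=\mu_s+\eta_s$, and let $S_t:=\sum_{s=1}^t(\widehat{y}_s-\mu_s)^2$ be the oracle's cumulative error against the true means. The first step is the algebraic identity $\sum_s(p_s-y_s)^2=\sum_s(p_s-\mu_s)^2-2\sum_s(p_s-\mu_s)\eta_s+\sum_s\eta_s^2$, valid for any prediction sequence $\{p_s\}$. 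Applying it with $p_s=\widehat{y}_s$ and with $p_s=f^i_s:=f^i(\phi_s(a_s);H_s)$ for a \emph{true} model $i\in\mathcal{I}_*$, and inserting the regression-oracle guarantee \eqref{eq:regression-oracle} (which yields $\sum_s(\widehat{y}_s-y_s)^2\le\sum_s(f^i_s-y_s)^2+\mathcal{R}_{\texttt{Sq}}(t)$), the pure-noise terms $\sum_s\eta_s^2$ cancel and leave
\[
S_t\ \le\ \underbrace{\sum_{s=1}^t(f^i_s-\mu_s)^2}_{\le\,U_t}\ +\ \mathcal{R}_{\texttt{Sq}}(t)\ +\ 2\sum_{s=1}^t(\widehat{y}_s-f^i_s)\,\eta_s .
\]
This cancellation is the crux that avoids importance weighting and pays only $\mathcal{R}_{\texttt{Sq}}(t)=\mathcal{O}(\log M)$ instead of a $\sqrt{M}$ factor.

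Next I would bound the best-expert error $\sum_s(f^i_s-\mu_s)^2$ by $U_t$ as in \eqref{eq:Ut}. Since $i\in\mathcal{I}_*$, we have $\theta_*\in B(\mu_i,b_i)\subseteq B(\widehat{\mu}_i,b_i+c_i)$, so the bias $\widehat{\mu}_i$ of expert $i$ lies within $b_i+c_i$ of $\theta_*$. Substituting the closed form $\widehat{\theta}^i_s=(V_s^{\lambda_i})^{-1}\Phi_s^\top(Y_s-\Phi_s\widehat{\mu}_i)+\widehat{\mu}_i$ into $f^i_s-\mu_s=\langle\phi_s(a_s),\widehat{\theta}^i_s-\theta_*\rangle$ and carrying out the standard biased-ridge error expansion bounds $U_t$ in terms of the regularization bias $\lambda_i\|\theta_*-\widehat{\mu}_i\|^2\le\lambda_i(b_i+c_i)^2$, the sub-Gaussian parameter $R$, and a log-determinant (elliptical-potential) term of order $d\log\!\big(1+TL^2/(d\lambda_i)\big)$. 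The prescribed choice $\lambda_i=1/\big(T(b_i+c_i)^2\big)$ is precisely what balances the regularization bias against the estimation variance and, via $TL^2/(d\lambda_i)=T^2L^2(b_i+c_i)^2/d$, produces the $\log\!\big(1+T^2L^2\max_i(b_i+c_i)^2/d\big)$ factor appearing in the final radius.

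Third, I would control the cross term $\sum_s(\widehat{y}_s-f^i_s)\eta_s$. Both $\widehat{y}_s$ and $f^i_s$ are $H_s$-measurable, hence predictable before $\eta_s$ is revealed, and $\eta_s$ is conditionally $R$-sub-Gaussian, so this is a martingale; a uniform (stopping-time) self-normalized concentration inequality gives, with probability at least $1-\delta$ for all $t\in[T]$ simultaneously,
\[
\Big|\sum_{s=1}^t(\widehat{y}_s-f^i_s)\,\eta_s\Big|\ \le\ R\sqrt{\Big(1+\sum_{s=1}^t(\widehat{y}_s-f^i_s)^2\Big)\,\log\tfrac{C_t}{\delta}} ,
\]
where $C_t$ collects polynomial-in-$t$ factors. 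Using $(\widehat{y}_s-f^i_s)^2\le2(\widehat{y}_s-\mu_s)^2+2(f^i_s-\mu_s)^2$, the self-normalizer is at most $2S_t+2U_t+1$, so the displayed bound becomes the self-referential inequality $S_t\le U_t+\mathcal{R}_{\texttt{Sq}}(t)+2R\sqrt{(2S_t+2U_t+1)\log(C_t/\delta)}$. Solving this quadratic in $S_t$ yields $S_t\le\gamma_t(\delta)$ with $\gamma_t(\delta)=\mathcal{O}\big(\mathcal{R}_{\texttt{Sq}}(t)+U_t+R^2\log(C_t/\delta)\big)$, matching the radius recorded in \eqref{eq:final-conf-set}; since this event is exactly $\theta_*\in\mathcal{C}_t$, the claim follows.

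I expect the \textbf{main obstacle} to be closing this self-referential loop while keeping the logarithmic factors and constants sharp enough to reproduce \eqref{eq:final-conf-set}: the martingale bound on the cross term depends on $S_t$ through its own self-normalizer, so one must use an \emph{anytime} self-normalized inequality (valid uniformly over $t$, not merely a per-$t$ union bound) and then solve the quadratic carefully, so that both the time horizon and the noise level $R$ enter $\gamma_t(\delta)$ only logarithmically. A secondary technicality is that the regression-oracle regret bound $\mathcal{R}_{\texttt{Sq}}(t)\le\ell^2\log M$ from Section~\ref{subseq:reg-oracle} requires all predictions and observations to lie in an interval of size $\ell$; since $|\mu_s|\le G$ by Assumption~\ref{ass:boundedness-Setting1}, this is arranged by clipping the experts' and the oracle's outputs to $[-G,G]$, giving $\ell=\mathcal{O}(G)$.
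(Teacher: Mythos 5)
Your proof takes essentially the same route as the paper's (Appendix~\ref{app:ptoof_themr_confidenceregion_firstsetting}): the same noise-cancelling decomposition obtained by comparing the oracle's square loss against that of a true-model expert via the guarantee~\eqref{eq:regression-oracle}, the same bound $U_t$ from~\eqref{eq:Ut} on the true expert's error, a self-normalized martingale bound (Proposition~\ref{aux:prop:sum-marting-bound}) for the cross term, and the same resolution of the resulting self-referential inequality by the square-root and logarithm tricks. The only structural difference is in how the cross term is handled: writing $S_t:=\sum_{s=1}^t\big(\widehat{y}_s-\langle\phi_s(a_s),\theta_*\rangle\big)^2$, you keep $2\sum_s(\widehat{y}_s-f^i_s)\eta_s$ as a \emph{single} martingale and bound its self-normalizer by $2S_t+2U_t+1$ via $(a-b)^2\le 2a^2+2b^2$, whereas the paper splits it as $2\sum_s\eta_s\langle\phi_s(a_s),\widehat{\theta}^{i_*}_s-\theta_*\rangle+2\sum_s\eta_s\big(\langle\phi_s(a_s),\theta_*\rangle-\widehat{y}_s\big)$ and applies the martingale bound to each piece separately, so the two self-normalizers are exactly $U_t$ and $S_t$ with no factor-of-two inflation. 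Both variants close the loop; yours merely yields slightly larger constants in $\gamma_t(\delta)$.

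The one step that would fail as written is your boundedness fix for the regression oracle. Proposition~\ref{app:prop:husslerreulst} requires the \emph{observations} $y_t$, not just the predictions, to lie in the interval $[\beta,\beta+\ell]$; since $y_t=\langle\phi_t(a_t),\theta_*\rangle+\eta_t$ with sub-Gaussian (unbounded) noise, clipping the experts' and oracle's outputs to $[-G,G]$ cannot give $\ell=\mathcal{O}(G)$ --- any valid $\ell$ must also contain the high-probability noise range, i.e.\ a term of order $R\sqrt{\log(1/\delta)}$. Moreover, clipping changes the algorithm, whose experts are defined as unclipped biased ridge regressors. The paper instead proves (Appendix~\ref{app:thm:upperbound-regret-oracle}, Eq.~\ref{eq:temp06}) that with high probability the unclipped experts' predictions and the rewards all lie in a range of size $\ell=\mathcal{O}\big(G+L/\sqrt{T}+RL\sqrt{d\log(\cdot/\delta)}\big)$, and this is precisely where the $d$, $R$, $L$, and $\log\big(1+T^2L^2\max_{i\in[M]}(b_i+c_i)^2/d\big)$ factors in the expression for $\mathcal{R}_{\texttt{Sq}}(t)$ in Lemma~\ref{lem:PS-OFUL-SqAlg-reg}, and hence in $\gamma_t(\delta)$ of Eq.~\ref{eq:final-conf-set}, come from. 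Your argument is otherwise sound once $\ell$ is corrected to include these terms.
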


The definition of $\gamma_t(\delta)$ in Eq.~\ref{eq:final-conf-set} shows its dependence on $U_t$ and $\mathcal R_{\texttt{Sq}}(t)$, defined by~\eqref{eq:Ut} and~\eqref{eq:regression-oracle}, respectively. Thus, the proof of Thm.~\ref{thm:PS-OFUL-confidence-set} requires finding expressions for these quantities, which we derive them in the following lemmas.  

\begin{lemma}
\label{lem:PS-OFUL-U_t}
Setting $\lambda_i=\frac{1}{(b_i+c_i)^2},\;\forall i\in[M]$, with probability $1-\delta$, we may write $U_t$, defined by~\eqref{eq:Ut}, as (see Eq.~\ref{eq:Ut-Def} in Appendix~\ref{app:subsec:proofoflemaofboudingthexperts} for the exact expression)

\vspace{-0.15in}
\begin{small}
\begin{align*}
U_t = \mathcal O\Big(dR^2 \log\big(\frac{1 + \frac{tL^2\max_{i\in[M]}(b_i+c_i)^2}{d}}{\delta}\big)\Big).
\end{align*}
\end{small}
\vspace{-0.15in}
\end{lemma}

\begin{lemma}
\label{lem:PS-OFUL-SqAlg-reg}
Setting $\lambda_i=\frac{1}{(b_i+c_i)^2},\;\forall i\in[M]$, with probability $1-\delta$, we may write $\mathcal R_{\texttt{\em Sq}}(t)$, defined by~\eqref{eq:regression-oracle}, as (see Eq.~\ref{eq:temp07} in Appendix~\ref{app:thm:upperbound-regret-oracle} for the exact expression)

\vspace{-0.15in}
\begin{small}
\begin{align*}
\mathcal R_{\texttt{\em Sq}}(t) = 
\mathcal O\Big(d R^2 L^2 \log(M)\log\big(\frac{1 + \frac{tL^2\max_{i\in[M]}(b_i+c_i)^2}{d}}{\delta}\big)\Big).
\end{align*}
\end{small}
\vspace{-0.15in}
\end{lemma}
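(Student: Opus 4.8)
The plan is to reduce the claim to the aggregation guarantee of the Herbster--Kivinen--Warmuth forecaster recalled in Section~\ref{subseq:reg-oracle}: if all outcomes $y_t$ and all experts' predictions $f^i_t$ lie in an interval of width $\ell$, then $\mathcal R_{\texttt{Sq}}(t)\leq\ell^2\log M$. Since that bound is given, the entire task is to exhibit a high-probability value of $\ell$ and substitute it. To match the target it suffices to show $\ell = \mathcal O\big(RL\sqrt{d\log((1+tTL^2\max_{i}(b_i+c_i)^2/d)/\delta)}\big)$, since squaring produces the factor $dR^2L^2\log(\cdots/\delta)$ and multiplying by $\log M$ yields the statement.

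Bounding the outcomes is immediate: $y_t=\langle\phi_t(a_t),\theta_*\rangle+\eta_t$, so Assumption~\ref{ass:boundedness-Setting1} gives $|\langle\phi_t(a_t),\theta_*\rangle|\leq G$, while a sub-Gaussian tail estimate with a union bound over the horizon gives $\max_t|\eta_t|=\mathcal O(R\sqrt{\log(T/\delta)})$. The harder part, which I expect to be the crux, is bounding the predictions $f^i_t=\langle\phi_t(a_t),\widehat\theta^i_t\rangle$ uniformly over $i\in[M]$ and $t$. Substituting the closed form of the biased ridge estimator and $Y_t=\Phi_t\theta_*+E_t$ (with $E_t$ the noise vector) decomposes the prediction as
\[
f^i_t=\langle\phi_t(a_t),\theta_*\rangle-\lambda_i\,\phi_t(a_t)^\top(V_t^{\lambda_i})^{-1}(\theta_*-\widehat\mu_i)+\phi_t(a_t)^\top(V_t^{\lambda_i})^{-1}\Phi_t^\top E_t.
\]
The first term is at most $G$. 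For the noise term I would apply Cauchy--Schwarz in the $(V_t^{\lambda_i})^{-1}$-norm: the factor $\|\phi_t(a_t)\|_{(V_t^{\lambda_i})^{-1}}\leq L/\sqrt{\lambda_i}\leq L$ because $\lambda_i\geq1$, and $\|\Phi_t^\top E_t\|_{(V_t^{\lambda_i})^{-1}}$ is exactly the self-normalized martingale quantity controlled by the concentration inequality of~\cite{abbasi2011improved}. Inserting the trace--determinant bound $\det(V_t^{\lambda_i})/\det(\lambda_i I)\leq(1+tL^2/(d\lambda_i))^d$ together with the prescribed $\lambda_i=1/(T(b_i+c_i)^2)$ turns the determinant ratio into $(1+tTL^2(b_i+c_i)^2/d)^d$ and produces exactly the noise scale $\mathcal O\big(RL\sqrt{d\log((1+tTL^2(b_i+c_i)^2/d)/\delta)}\big)$, which dominates $\ell$. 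The bias (middle) term is lower order, since $\lambda_i\geq1$ forces $(b_i+c_i)\leq1/\sqrt{T}$, so for the true models it contributes only $\mathcal O(L(b_i+c_i))=\mathcal O(L/\sqrt T)$.

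The main obstacle is making the prediction bound hold uniformly over all $M$ experts, including misspecified / non-true ones, for which $\|\theta_*-\widehat\mu_i\|$ can be large and the naive bias-term bound $L\|\theta_*-\widehat\mu_i\|$ useless. I would handle this either by clipping every expert's prediction to the interval of width $\ell$ fixed by the outcomes and the true model's predictions --- which, with high probability, leaves the true model's predictions unchanged and hence does not affect the comparator $\min_{i}$ in~\eqref{eq:regression-oracle} --- or by splitting $\theta_*-\widehat\mu_i$ into its components in and orthogonal to $\mathrm{span}\{\phi_s(a_s)\}$, observing that $\phi_t(a_t)$ annihilates the orthogonal part while the in-span part only yields inner products $\langle\phi_s(a_s),\theta_*-\widehat\mu_i\rangle$ bounded via Assumption~\ref{ass:boundedness-Setting1} by $2G+Lc_i$. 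Finally I would take the maximum of the per-$i$ logarithmic factors (giving $\max_i(b_i+c_i)^2$ inside the log), set $\ell$ to the dominant noise scale, union-bound the noise tail and the self-normalized event, and conclude $\mathcal R_{\texttt{Sq}}(t)\leq\ell^2\log M$ at the stated order.
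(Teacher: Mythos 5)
Your proposal is correct and takes essentially the same route as the paper's proof: bound the rewards and all experts' predictions in an interval of width $\ell = \mathcal{O}\Big(G + \tfrac{L}{\sqrt{T}} + RL\sqrt{d\log\big(\tfrac{1+tTL^2\max_{i\in[M]}(b_i+c_i)^2/d}{\delta}\big)}\Big)$ via the ridge decomposition (equivalently Theorem~2 of \cite{abbasi2011improved}), the self-normalized martingale bound, the determinant--trace inequality, and $\lambda_i=\frac{1}{T(b_i+c_i)^2}$, then invoke Proposition~\ref{app:prop:husslerreulst} to get $\mathcal{R}_{\texttt{Sq}}(t)\leq 2\ell^2\log M$; your clipping device for misspecified experts is the explicit (and arguably cleaner) counterpart of the paper's step that treats an expert as ``valid'' only while its prediction obeys the same bound a true model satisfies, and it is sound because the comparator in~\eqref{eq:regression-oracle} relevant downstream is the true model $i_*$, whose predictions are unchanged. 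One caveat: your second alternative fails as stated, since for $w=\theta_*-\widehat{\mu}_i$ orthogonal to $\mathrm{span}\{\phi_s(a_s)\}_{s<t}$ one has $\lambda_i(V_t^{\lambda_i})^{-1}w = w$ and $\phi_t(a_t)$ need not lie in the past span, so the orthogonal component contributes $\langle\phi_t(a_t),w\rangle$ of order $L\norm{\theta_*-\widehat{\mu}_i}$ rather than being annihilated --- but since your primary (clipping) route suffices, this does not affect the correctness of the proposal.
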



{\bf Step~2.} We then show how the regret is related to the confidence sets using the following lemma:

\begin{lemma}
\label{lem:PS-OFUL-reg:sumup}
Under the same assumptions as Theorem~\ref{thm:PS-OFUL-regret}, for any $\delta \in (0,1/4]$, with probability at least $1-\delta$, the regret of PS-OFUL is bounded as 

\vspace{-0.15in}
\begin{small}
\begin{align}
\label{eq:lem5}
& \mathcal{R}_{\text{\em PS-OFUL}}(T) \leq  2 Gd + \\ 
&\qquad\quad  2\max\{1,G\}\sqrt{2dT\log\big(1 + \frac{T}{d} \big)\max_{d<t\leq T} {\gamma_t(\delta)}}. \nonumber
\end{align}
\end{small}
\vspace{-0.15in}
\end{lemma}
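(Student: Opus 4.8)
The plan is to run the standard optimism-based (OFUL) regret decomposition, but with the square-loss-defined confidence set $\mathcal{C}_{t-1}$ in place of the usual self-normalized ellipsoid. Write the instantaneous regret as $r_t = \langle\phi_t(a_t^*),\theta_*\rangle - \langle\phi_t(a_t),\theta_*\rangle$ and let $\widetilde\theta_t \in \mathcal{C}_{t-1}$ be the optimistic parameter attaining the inner maximum in Line~\ref{alg:PS-OFUL:Action}, so that $\langle\phi_t(a_t),\widetilde\theta_t\rangle = \max_{a\in\mathcal{A}_t}\max_{\theta\in\mathcal{C}_{t-1}}\langle\phi_t(a),\theta\rangle$. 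On the event $\{\theta_*\in\mathcal{C}_{t-1}\}$, which holds with probability at least $1-\delta$ by Theorem~\ref{thm:PS-OFUL-confidence-set}, optimism gives $\langle\phi_t(a_t),\widetilde\theta_t\rangle \geq \langle\phi_t(a_t^*),\theta_*\rangle$, and therefore $r_t \leq \langle\phi_t(a_t),\widetilde\theta_t - \theta_*\rangle$. This reduces everything to controlling the separation between two points of $\mathcal{C}_{t-1}$ in the geometry induced by the observed features.

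The key step, and the one I expect to be the main obstacle, is converting this into a bound of the form $r_t \leq 2\sqrt{\gamma_{t-1}(\delta)}\,\|\phi_t(a_t)\|_{\bar V_{t-1}^{-1}}$, where $\bar V_{t-1} = \sum_{s=1}^{t-1}\phi_s(a_s)\phi_s(a_s)^\top$. Unlike in standard OFUL, $\mathcal{C}_{t-1}$ is the sublevel set of the empirical square loss $g(\theta) = \sum_{s=1}^{t-1}(\widehat y_s - \langle\phi_s(a_s),\theta\rangle)^2$ rather than an ellipsoid centered at a regularized least-squares estimate, so I would first note that $g$ is quadratic with Hessian $2\bar V_{t-1}$, whence $g(\theta) = g(\widehat\theta) + \|\theta - \widehat\theta\|_{\bar V_{t-1}}^2$ for its minimizer $\widehat\theta$. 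Since both $\widetilde\theta_t$ and $\theta_*$ satisfy $g(\cdot)\leq\gamma_{t-1}(\delta)$ and $g(\widehat\theta)\geq 0$, each lies within $\bar V_{t-1}$-distance $\sqrt{\gamma_{t-1}(\delta)}$ of $\widehat\theta$, and the triangle inequality yields $\|\widetilde\theta_t - \theta_*\|_{\bar V_{t-1}} \leq 2\sqrt{\gamma_{t-1}(\delta)}$; Cauchy--Schwarz then gives the displayed per-round bound. The delicate point is that $\mathcal{C}_{t-1}$ carries no regularization term, so $\bar V_{t-1}$ may be rank-deficient in the early rounds and the Cauchy--Schwarz step is only licensed once the observed features span $\mathbb{R}^d$; this is exactly why the first $d$ rounds are peeled off below and absorbed into the additive constant.

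It then remains to sum the per-round bounds. For the first $d$ rounds I use the trivial bound $r_t \leq 2G$, which follows directly from $|\langle\phi_t(a),\theta\rangle|\leq G$ in Assumption~\ref{ass:boundedness-Setting1}, contributing the additive $2Gd$ and sidestepping the invertibility issue. For $t>d$ I combine $r_t\leq 2G$ with the width bound to write $r_t \leq 2\max\{1,G\}\sqrt{\gamma_{t-1}(\delta)}\,\min\{1,\|\phi_t(a_t)\|_{\bar V_{t-1}^{-1}}\}$, using that $\gamma_{t-1}(\delta)\geq 1$ and pulling out $\max_{d<t\leq T}\gamma_t(\delta)$ by monotonicity of $\gamma_t(\delta)$ in $t$. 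A single Cauchy--Schwarz over $t=d+1,\dots,T$ gives
\begin{align*}
\sum_{t=d+1}^T r_t \leq 2\max\{1,G\}\sqrt{\max_{d<t\leq T}\gamma_t(\delta)}\;\sqrt{T}\;\sqrt{\sum_{t=d+1}^T\min\{1,\|\phi_t(a_t)\|_{\bar V_{t-1}^{-1}}^2\}}.
\end{align*}
Finally I invoke the elliptical potential (log-determinant) lemma of~\cite{abbasi2011improved} to bound $\sum_{t}\min\{1,\|\phi_t(a_t)\|_{\bar V_{t-1}^{-1}}^2\}\leq 2d\log(1+T/d)$, which produces the factor $\sqrt{2dT\log(1+T/d)}$ and, together with the $2Gd$ term, yields the claimed bound~\eqref{eq:lem5}. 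The $1-\delta$ failure probability is incurred only through the validity event of Theorem~\ref{thm:PS-OFUL-confidence-set}; all subsequent steps are deterministic given $\theta_*\in\mathcal{C}_{t-1}$ for every $t$.
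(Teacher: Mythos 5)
Your overall route is the same as the paper's: optimism on the event of Theorem~\ref{thm:PS-OFUL-confidence-set}, completion of squares to rewrite the square-loss ball $\mathcal C_{t-1}$ as an ellipsoid $\{\theta:\|\theta-\widehat\theta_t\|_{V_t}^2\le\gamma_t(\delta)\}$ around the least-squares minimizer, Cauchy--Schwarz per round, and an elliptical potential bound over rounds. However, there is one genuine gap: your handling of the invertibility of the unregularized Gram matrix $\bar V_{t-1}=\sum_{s=1}^{t-1}\phi_s(a_s)\phi_s(a_s)^\top$. You claim that peeling off the first $d$ rounds and charging them $2G$ each is "exactly why" the Cauchy--Schwarz step is licensed for $t>d$. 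It is not: nothing in the algorithm or in Assumption~\ref{ass:boundedness-Setting1} guarantees that the features observed in the first $d$ rounds (or in any number of rounds) span $\mathbb R^d$; the optimistic action choices can keep $\bar V_{t-1}$ singular for all $t$, in which case $\|\phi_t(a_t)\|_{\bar V_{t-1}^{-1}}$ is undefined. A pseudo-inverse does not rescue the step, because $\mathcal C_{t-1}$ is unbounded in directions orthogonal to the span of the past features, while $\phi_t(a_t)$ may have a component in those directions, so $\langle\phi_t(a_t),\widetilde\theta_t-\theta_*\rangle$ cannot be controlled by any norm built from $\bar V_{t-1}$ alone. The paper closes this hole with an explicit device you are missing: it assumes the algorithm plays actions with features $\phi_i(a_i)=Le_i$, $i\in[d]$, in the first $d$ rounds (a forced initialization), so that $V_t=L^2I+\sum_{s=d+1}^{t-1}\phi_s(a_s)\phi_s(a_s)^\top\succeq L^2I$ for all $t>d$. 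The $2Gd$ term in \eqref{eq:lem5} is the price of this initialization, not of a generic removal of ill-conditioned rounds.

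The same omission also breaks your final step quantitatively. The potential lemma (Lemma~\ref{aux:lemm:bound-sumofnorms}) gives $\sum_{t=d+1}^T\min\{1,\|\phi_t(a_t)\|^2_{V_t^{-1}}\}\le 2\log\big(\det(V_T)/\det(V_d)\big)$; to convert this into $2d\log(1+T/d)$ one needs both $\det(V_T)\le(L^2+TL^2/d)^d$ (Lemma~\ref{aux:lemm:det-teace-ineq}) and $\det(V_d)=L^{2d}$, and the latter identity is again precisely the forced initialization $V_d=L^2I$. With a merely full-rank but otherwise arbitrary $\bar V_d$, the ratio $\det(\bar V_T)/\det(\bar V_d)$ --- and hence your regret bound --- blows up as the smallest eigenvalue of $\bar V_d$ tends to zero, so the clean constant $2d\log(1+T/d)$ you quote does not follow. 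The fix is exactly the paper's own: state the initialization assumption, define $V_t$ with the $L^2I$ block, and then the rest of your argument (which otherwise matches the paper's line by line) goes through.
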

We conclude the proof of Theorem~\ref{thm:PS-OFUL-regret} by plugging the confidence radius $\gamma_t(\delta)$ computed in Theorem~\ref{thm:PS-OFUL-confidence-set} (Eq.~\ref{eq:final-conf-set} in Appendix~\ref{app:ptoof_themr_confidenceregion_firstsetting}) into the regret bound~\eqref{eq:lem5}. 





\subsection{Related Work (Parameter Selection)}
\label{subsec:related-work-PS}

\citet{cella2020meta} and~\citet{moradipari2022multi} studied {\em meta learning} in stochastic linear bandit (LB), where the agent solves a sequence of LB problems, whose reward parameters $\theta_*$ are drawn from an unknown distribution $\rho$ of bounded support in $\mathbb R^d$. For each LB task, the agent is given an estimate of the mean of the distribution $\rho$ and an upper-bound of its error, and its goal is to minimize the transfer regret $\mathcal R(T,\rho)=\mathbb E_{\theta_*\sim\rho}\big[\mathbb E[\mathcal R(T,\theta_*)]\big]$. Their proposed algorithms assume knowing the variance term $\text{Var}_h=\mathbb E_{\theta_*\sim\rho}\big[\|\theta_* - h\|^2\big]$, for any $h\in\mathbb R^d$, in order to properly set their regularization parameter $\lambda$. Thus, the parameter selection setting studied in our paper can be seen as an extension of their transfer learning setting to multiple ($M$) models. Moreover, we allow the reward parameter of the new LB problem $\theta_*$ to be selected arbitrarily from the $M$ models, and consider a worst-case transfer regret (see Eq.~\ref{eq:transfer-regret}) for our algorithm (instead of a regret in expectation w.r.t.~$\rho$). Despite these differences, our setting is similar to theirs as we are also given an estimate of the center of each model $\widehat\mu_i$, together with an upper-bound on its error $c_i$, plus the radius $b_i$ of each model. Also similar to their results, our analysis clearly shows the importance of the choice of the regularization parameters, $\lambda_i=1/(b_i+c_i)^2$, for obtaining a regret bound that only logarithmically depends on the maximum model uncertainty, i.e.,~$\max_{i\in[M]}(b_i+c_i)^2$. 

Our parameter selection setting is also related to {\em latent} bandits~\citep{Maillard14LB,HKZCAB-2020} in which identifying the true latent variable is analogous to finding the correct model. The latest work in this area is by~\citet{HKZCAB-2020} in which the agent faces a $K$-armed LB problem selected from a set of $M$ known $K$-dimensional reward vectors. They proposed UCB and TS algorithms for this setting and showed that their regret (Bayes regret in case of TS) are bounded as $3 M + 2 T \varepsilon + 2R\sqrt{6 M T\log T}$, where the reward vectors are known up to an error of $\varepsilon$. Comparing to their results, the regret of PS-OFUL in~\eqref{eq:regret-param-select} has a better dependence on the number of models, $\sqrt{\log M}$ vs.~$M$, and the model uncertainty, $\sqrt{\log(\max_{i\in[M]}(b_i+c_i)^2)}$ vs.~$\varepsilon$. However, the number of actions $K$ does not appear in their bound, while the bound of PS-OFUL will have a $\sqrt{K}$ factor when applied to $K$-armed bandit problems. If the objective is to have a better scaling in $K$, we can use a different bandit model selection strategy, called {\em regret balancing}~\citep{APP-2020,PDGB-2020}, to obtain an improved regret that scales as $\min\{\varepsilon T + \sqrt{MT}, \sqrt{K M T}\}$ (see Appendix~\ref{app:latent bandits} for details).

In another closely related work, \citet{Hong-TS-2022} approach a similar problem by initializing TS with a prior that is a mixture of $M$ distributions. They prove a Bayes regret bound for their algorithm in case of Gaussian mixtures that has $\sqrt{M}$ dependence on the number of models and $\sqrt{\max_{i\in[M]}\sigma^2_{0,i}}$ dependence on the maximum variance of the Gaussian priors. Both these dependences are logarithmic $\sqrt{\log M}$ and $\sqrt{\log(\max_{i\in[M]}(b_i+c_i)^2)}$ in the regret of PS-OFUL.




\begin{figure*}[t]
 \centering
     \includegraphics[width=0.72\linewidth]{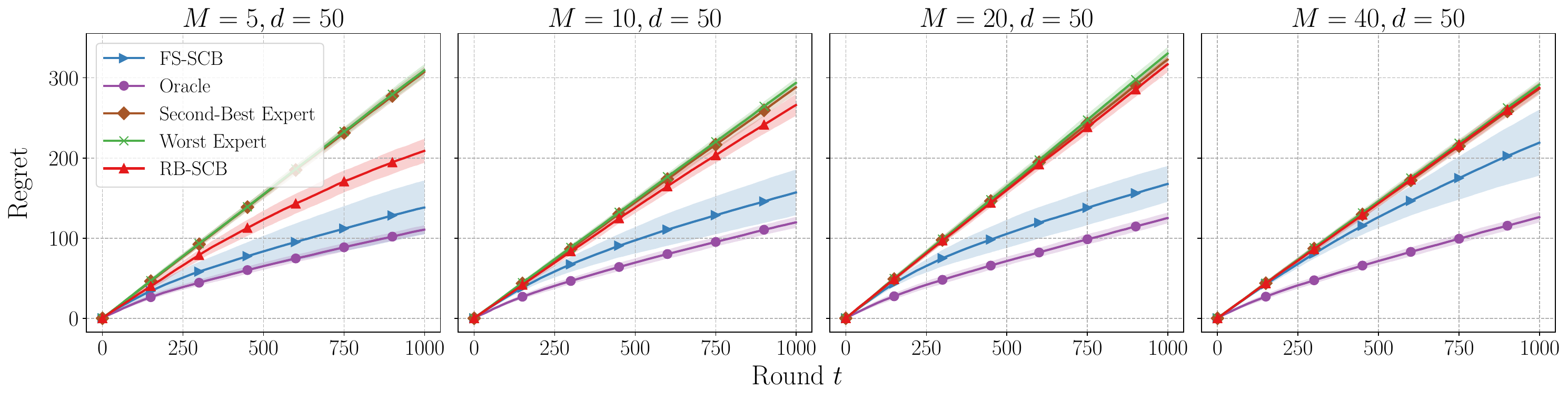}
      \includegraphics[width=0.72\linewidth]{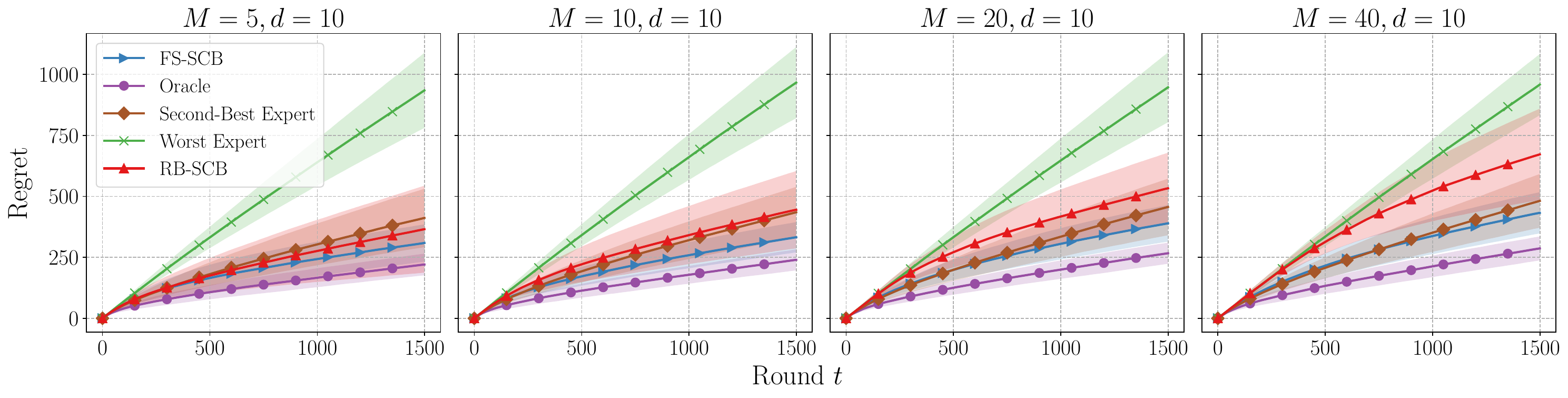}
     \caption{Feature selection in the synthetic LB problem {\em (top)} and MNIST {\em (bottom)}. The regrets are averaged over $100$ LB problems.}
     \label{fig:synthetic_fs-scb}
\end{figure*}

\begin{figure*}[t]
     \centering
     \includegraphics[width=0.385\linewidth]{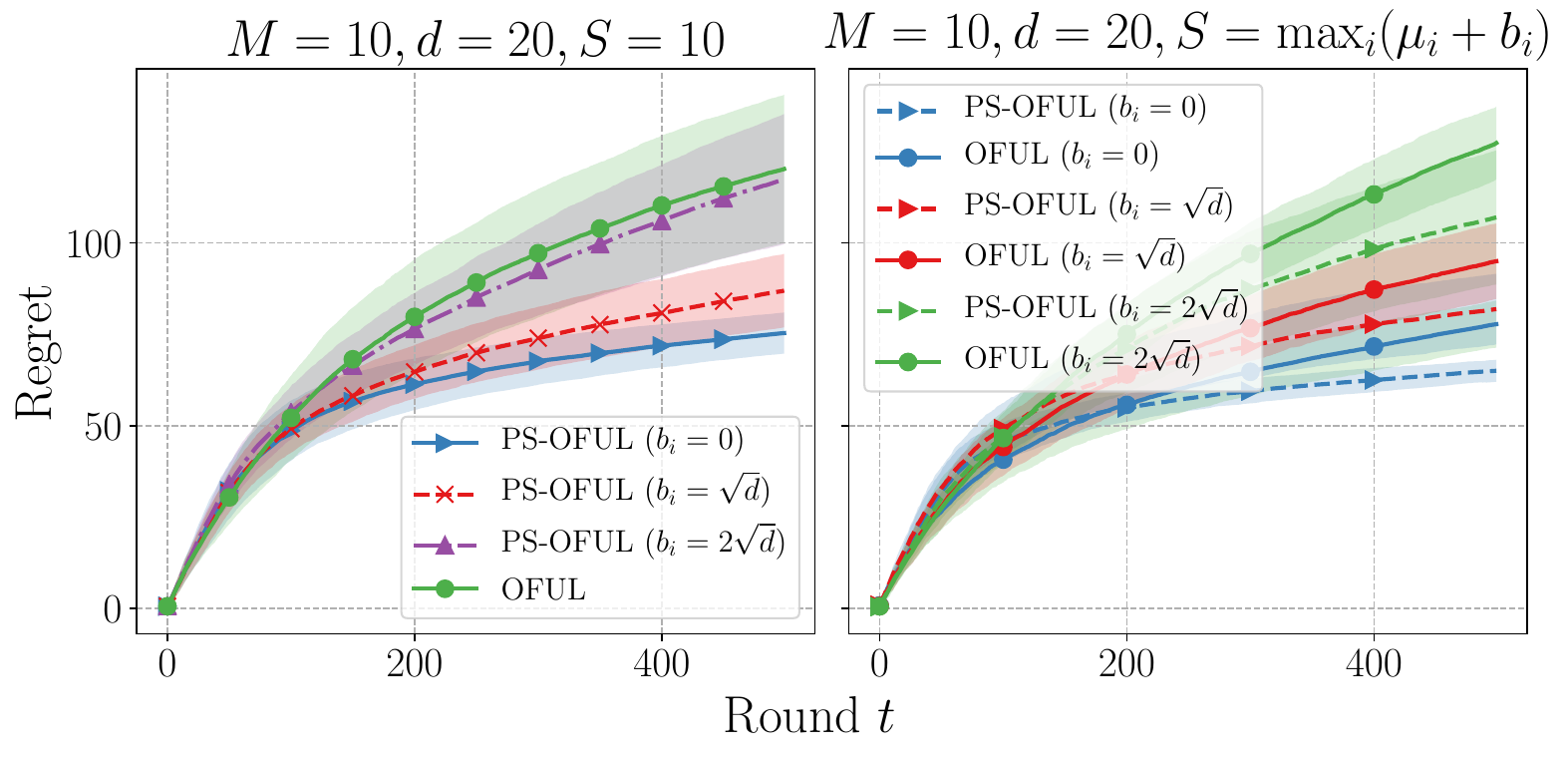} 
         \includegraphics[width=0.55\linewidth]{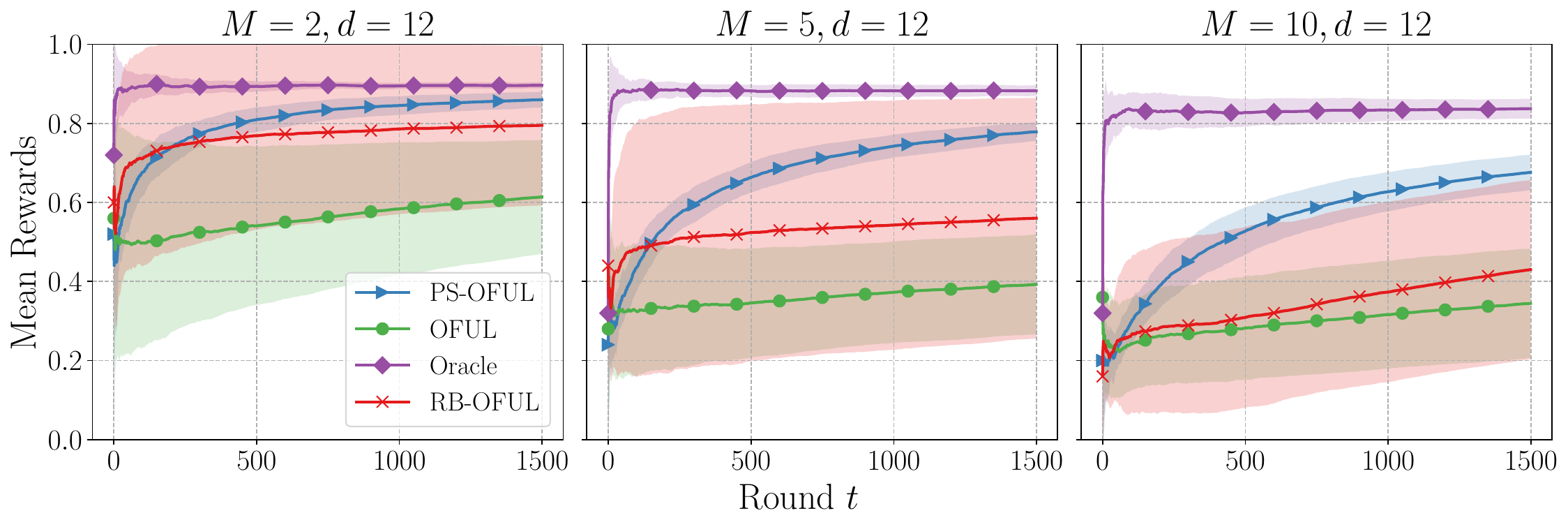}
         \caption{Parameter selection in the synthetic LB problem {\em (left)} and CIFAR-10 {\em (right)}. Results are averaged over $50$ runs.}
        \label{onefig:ps_experiments}
\end{figure*}


\section{Experiments}
\label{sec:experiments}

We evaluate the performances of our FS-SCB and PS-OFUL algorithms using a synthetic LB problem and image classification problems: MNIST~\citep{lecun1998gradient} and CIFAR-10~\citep{krizhevsky2009learning}. We report the details of our experimental setup and additional results in Appendix~\ref{app:experiment_details}.

\textbf{Feature Selection \begin{small}(Synthetic)\end{small}:} We first sample the parameter of the linear bandit problem from a $d=50$ dimensional Gaussian with variance 0.01: $\theta_*\sim{\cal N}(0,0.01 I_d)$. We generate all feature maps, $\{\phi^i(a)\}_{i=1}^M$, by sampling $10,000$ vectors from the Gaussian with mean $\theta_*$ and covariance $0.1I_d$, i.e.,~$\phi^i(a)\sim{\cal N}(\theta_*,0.1 I_d)$, for $a=1,\ldots,10,000$. This implies that all $M$ feature maps have the same bias. We set $\phi^1(\cdot)$ to be the {\em true} feature map. At each round $t\in[T]$, the learner is given an action set consist of 10 numbers from ${\cal A}=\{1,2,\dots,10,000\}$. The reward of each action $a$ is $\langle \phi^1(a),\theta_\star\rangle+\eta_t$, where $\eta_t\sim {\cal U}[-0.5,0.5]$.

\textbf{Feature Selection (MNIST):} We train a convolutional neural network (CNN) with $M$ different number of epochs on MNIST data, and use their second layer to the last as our $d=10$-dimensional feature maps $\{\phi^i\}_{i=1}^M$. These feature maps have test accuracy between $20\%$ (worst model) and $97\%$ (best model). We set the best one as true model $\phi^{i_*}$. For each class $s\in\mathcal S=\{0,\ldots,9\}$, we fit a linear model, given the feature map $\phi^{i_*}$, and obtain parameters $\{\theta^{i_*}_s\}_{s=0}^9$. At the beginning of each LB task, we select a class $s_*\in\mathcal S$ uniformly at random and set its parameter to $\theta^{i_*}_{s_*}$. At each round $t\in[T]$, the learner is given an action set consists of $10$ images, one from class $s_*$ and the rest randomly selected from the other classes. The reward of each action $a$ is defined as $\langle\phi^{i_*}(a),\theta^{i_*}_{s_*}\rangle+\eta_t\in[0,1]$, where $\phi^{i_*}(a)$ is the application of the feature map $\phi^{i_*}$ to the image corresponding to action $a$ and $\eta_t\sim\mathcal U[-0.5,0.5]$ is the noise. 

In Figure~\ref{fig:synthetic_fs-scb}, 
we compare the regret of our FS-SCB algorithm for different number of models $M$ with a regret balancing algorithm that uses SquareCB baselines (RB-SCB), and three SquareCB algorithms that use the best (Oracle), second-best (with test accuracy $84\%$ for MNIST), and worst feature maps. 
The results in Figure~\ref{fig:synthetic_fs-scb} 
show that {\bf 1)} FS-SCB always performs between the best and second-best experts, {\bf 2)} the regret of FS-SCB that scales as $\sqrt{\log M}$ is close to RB-SCB (scales as $\sqrt{M}$) for small $M$, but gets much better as $M$ grows, and {\bf 3)} RB-SCB has much higher variance than the other algorithms in MNIST. 

\begin{figure*}[t]
    \centering
    \includegraphics[width=.72\linewidth]{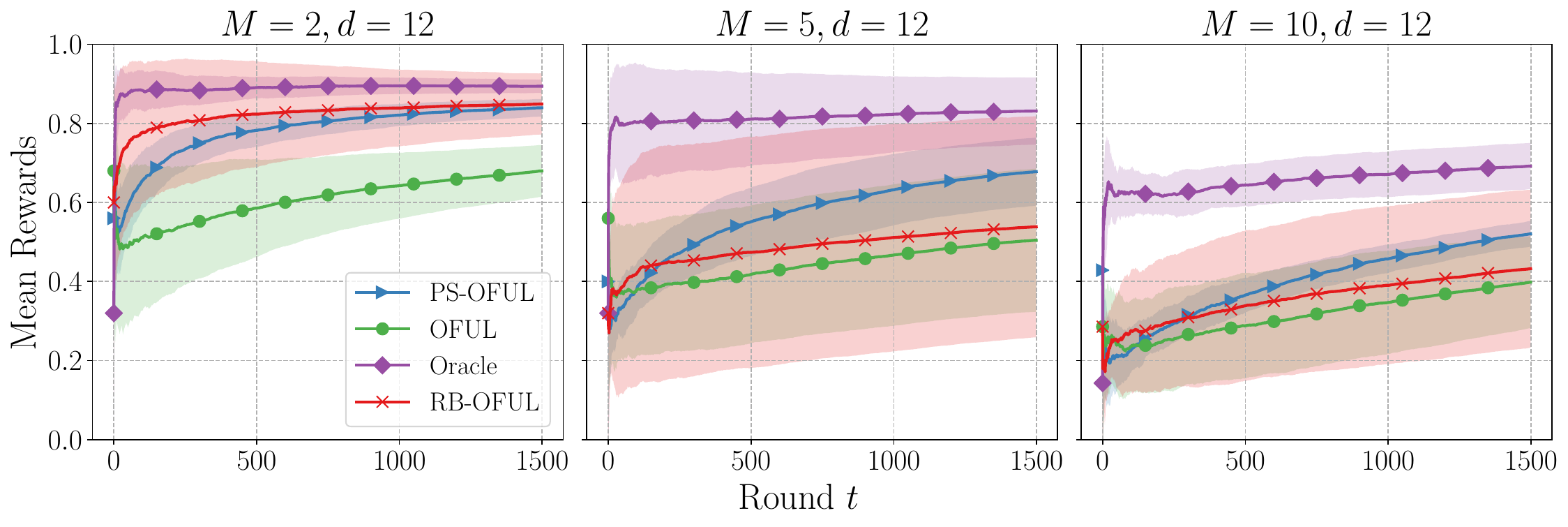}
    \caption{Parameter selection in CIFAR-10 with models less accurate than those in Figure~\ref{onefig:ps_experiments} {\em (right)}. The results are averaged over 50 runs.}
    \label{fig:app_cifar10_ps_largervar}
\end{figure*}

\textbf{Parameter Selection \begin{small}(Synthetic)\end{small}:} We first sample the center of $M=10$ balls from a $d=20$-dimensional Gaussian, i.e.,~$\{\mu_i\}_{i=1}^M\sim\mathcal{N}(0,I_d)$, and set their radii to $b_i=b,\;\forall i\in[M]$. At the beginning of each LB task, we select a model $i_*\in[M]$ uniformly at random, and then sample the problem's parameter from its ball, i.e.,~$\theta_*\sim B(\mu_{i_*},b_{i_*})$. The action set in each round $t\in[T]$ consists of $10$ vectors $\{\phi_t(a_j)\}_{j=1}^{10}\sim\mathcal{N}(0,0.01 I_d)$, and the reward of the selected action $a_t$ is defined as $\langle\phi_t(a_t),\theta_*\rangle+\eta_t,\;\eta_t\sim{\cal U}[-0.5,0.5]$. Figure~\ref{onefig:ps_experiments} {\em (left)} compares the regret of our PS-OFUL algorithm with OFUL~\citep{abbasi2011improved} for different sizes of the balls $b\in\{0,\sqrt{d},2\sqrt{d}\}$. We run OFUL with the upper-bounds $\|\theta_*\|_2\leq S=10$ and $S=\max_i(\mu_i+b_i)$ on the reward parameter. Note that the second bound is tighter and shows the best performance of OFUL. Our results indicate that the regret of PS-OFUL is better than OFUL, and gets closer to it as we increase the size of the balls from $b=0$ to $b=2\sqrt{d}\approx 9$. This clearly shows the potential advantage of transfer (PS-OFUL) over individual (OFUL) learning. 

\textbf{Parameter Selection \begin{small}(CIFAR-10)\end{small}:}
We modify the EfficientNetV2-S network~\citep{tan2021efficientnetv2} by adding a layer of $d=12$ neurons before the last layer and fine-tuning it on CIFAR-10 dataset. We then select this $d$-dimensional layer as our feature map $\phi$. To define our $M$ models (balls), we sample $100M$ datasets of size $500$. For each dataset, we randomly select a class $s_*\in[M]$, assign reward $1$ to images from $s_*$ and $0$ to other images, and fit a linear model to it to obtain a parameter vector. Finally, we fit a Gaussian mixture model with $M$ components to these $100M$ parameter vectors and use the means and covariances of the resulting clusters as the center and radii of our $M$ models (balls). At the beginning of each LB task, we select a class $s_*\in[M]$ uniformly at random. In each round $t\in[T]$, the learner is given an action set consists of $10$ images, one from class $s_*$ and the rest randomly selected from the other classes. The learner receives a reward from Ber$(0.9)$, if it selects the image from class $s_*$, and from Ber$(0.1)$, otherwise.  

In Figure~\ref{onefig:ps_experiments} {\em (right)}, we compare the mean reward of PS-OFUL for different values of $M$ with a regret balancing algorithm that uses OFUL baselines (RB-OFUL)~\citep{APP-2020}, OFUL (individual learning), and BIAS-OFUL~\citep{cella2020meta} with bias being the center of the true model (Oracle). 
The results show {\bf 1)} the good performance of PS-OFUL, {\bf 2)} the performance of PS-OFUL gets better than RB-OFUL as $M$ grows ($\sqrt{\log M}$ vs.~$\sqrt{M}$ scaling), {\bf 3)} the large variance of RB-OFUL, especially in comparison to PS-OFUL, and finally {\bf 4)} the advantage of transfer (PS-OFUL) over individual (OFUL) learning. 

In order to show the impact of the model accuracy (the accuracy of the center of the balls and their radii) on the performance of the algorithms, we defined a less accurate set of $M$ models (balls) using $10M$ datasets of size 50 (as opposed to $100M$ datasets of size 500 used in the results reported in Figure~\ref{onefig:ps_experiments} {\em (right)}). In Figure~\ref{fig:app_cifar10_ps_largervar}, we compare the mean reward of PS-OFUL for different number of models $M$ with RB-OFUL, OFUL, and BIAS-OFUL. The results indicate that with decreasing in the accuracy of the models, the performance of PS-OFUL and RB-OFUL get closer to that for OFUL. 




\section{Conclusions }

We studied two model selection settings in LB, where the mean reward is linear in at least one of $M$ models {\em (feature selection)}, and where the reward parameter is arbitrarily selected from $M$ misspecified models {\em (parameter selection)}. We derived computationally efficient algorithms in these settings that are based on reductions from bandits to full-information problems, and proved regret bounds with desirable dependence on the horizon and number of models. 
An interesting future direction is to extend our results to the meta learning and learning-to-learn setting, where the agent starts with $M$ models, and instead of solving a single LB problem, has to solve $N$ of them one after another.


\section{Acknowledgement}

This work is partially supported by NSF grant 1847096. 

\newpage
\bibliography{bibfile.bib}
\bibliographystyle{icml2022}


\newpage
\appendix
\onecolumn


\section{Sequential Prediction Algorithm} 
\label{app:SqAlg-description}

The sequential prediction algorithm \texttt{SqAlg} uses the following algorithm from~\cite{Herbster-Kivinen-Warmuth-1998} (also see~\citealt[Chapter~3]{CBL-2006}) to aggregate its experts' predictions. Algorithm~\ref{alg:expert-prediction} takes the observations $y_t$ and experts' predictions $f^i_t(H_t)$ that are bounded in the known range $[\beta,\beta+\ell ]$ as input. It first scales these input to the range $[0,1]$ and uses its current weights for the experts to generate its own prediction $\widehat{y}_t$.

The performance of \texttt{SqAlg} is evaluated as the accuracy (in terms of square loss) of its prediction w.r.t.~the accuracy of the prediction by the best expert in the set, i.e.,
\begin{equation}
\label{appx:eq:regression-oracle1}
\sum_{t=1}^T(\widehat{y}_t - y_t)^2 - \min_{i\in [M]}\sum_{t=1}^T(f^i_t(H_t) - y_t)^2 \leq {\mathcal R}_{\texttt{Sq}}(T).
\end{equation}
We call this the regret of \texttt{SqAlg} and denote it by ${\mathcal R}_{\texttt{Sq}}(T)$.~\citet{Herbster-Kivinen-Warmuth-1998} prove the following bound for ${\mathcal R}_{\texttt{Sq}}(T)$, which we use in the analysis of our algorithms. 

\begin{algorithm}[h]
  \caption{Sequential Prediction with Expert Advice}
  \label{alg:expert-prediction}
\begin{algorithmic}
  \STATE \textbf{Input:} $\ell$ and $\beta\qquad$ {\em (experts' predictions $f^i_t(H_t)$ are bounded in the known range $[\beta,\beta+\ell]$)}
  \STATE {\bfseries Initialization:} Set the weight $w_{1,i} = 1$ for all experts $i\in[M]$ 
  \FOR{$t=1$ {\bfseries to} $T$}
  \STATE Receive predictions $f^i_t(H_t)$ by experts $i\in \mathcal{S}_{t-1}$
  \STATE Remove experts whose predictions are out of bound and construct the new set of admissible experts $\mathcal{S}_t$ {\em (see Remarks~\ref{admissible-expert-feature} and~\ref{admissible-expert-parameter})}
  \STATE Scale experts' predictions $\;h_{i,t} = \frac{f^i_t(H_t) - \beta}{\ell},\;\forall i\in \mathcal{S}_t$
  
  \STATE Set $\;v_{t,i} = \frac{w_{t,i}}{W},\;\forall i\in\mathcal{S}_t$, where $W = \sum_{i\in \mathcal{S}_t} w_{t,i}$ 
  
  \STATE \textbf{Prediction:} Compute: 
\begin{align*}
\Delta(0) = \frac{-1}{2} \log \left( \sum_{i\in \mathcal{S}_t} v_{t,i} e^{-2 h_{i,t}^2} \right) \qquad,\qquad \Delta(1) = \frac{-1}{2} \log \left( \sum_{i\in \mathcal{S}_t} v_{t,i} e^{-2(1 - h_{i,t})^2} \right) 
\end{align*}

 \STATE Predict a value $\widehat{y}'_t$ that satisfies the following conditions:
\begin{align*}
(\widehat{y}'_t)^2 \leq \Delta(0) \qquad , \qquad (1-\widehat{y}'_t)^2 \leq \Delta(1).
\end{align*}
\STATE \textbf{Update:} Observing reward $y_t$, scale it as $y'_t=\frac{y_t - a}{\ell}$, and update the experts' weights
\begin{align}
w_{t+1,i} = w_{t,i} e^{-2 (y'_t - h_{i,t})^2} \label{update:weight_exprtt}
\end{align}
\STATE Return prediction $\;\widehat{y}_t  = \beta + \ell \widehat{y}'_t$
  \ENDFOR
\end{algorithmic}
\end{algorithm}

\begin{proposition}[Theorem~4.2 in~\citealt{Herbster-Kivinen-Warmuth-1998}]
\label{app:prop:husslerreulst}
For any arbitrary sequence $\big\{\big(\{f^i_t(H_t)\}_{i=1}^M,\widehat{y}_t,y_t\big)\big\}_{t=1}^T$ in which the experts' predictions $\big\{\{f^i_t(H_t)\}_{i=1}^M\big\}_{t=1}^T$ and observations $\{y_t\}_{t=1}^T$ are all bounded in $[\beta, \beta+\ell]$, the regret defined by~\eqref{appx:eq:regression-oracle1} of Algorithm~\ref{alg:expert-prediction} is  bounded as 
\begin{align*}
\mathcal{R}_{\texttt{\em Sq}}(T) \leq 2 \ell^2 \log M.
\end{align*}
\end{proposition}

Here we use the fact that $|\mathcal{S}_t| \leq M, \; \forall t \in [T]$. 


\newpage 

\section{Proofs of Section \ref{sec:feature-select-alg}}\label{app:proof:section4}

In this section, we first provide a brief overview for the steps of our proof. Then, we provide the proofs of lemmas used in Section~\ref{sec:feature-select-alg}.

The performance analysis of the FS-SCB algorithm requires two steps. First, we control the sum of the prediction error of the agent. 
Second, we show how the regret is related to the prediction error of the agent, and then we bound the regret.

{\bf Step 1.} To control the sum of the prediction error of the agent $D_t$, we need to find two  upper bounds: 1) an upper bound on the prediction error of the true model $i_*$ whose identity is unknown to the agent $Q_t$; 2) an upper bound on the regret caused by the online regression oracle $\mathcal{R}_{\texttt{Sq}}$

First, in Lemma~\ref{lem:upper-bound:Q-t}, we bound the sum of the prediction error of the true model as
\begin{align}
    \sum_{s=1}^{t-1} \left(\langle \phi^{i_*}(x_s,a_s), \widehat{\theta}_s^{i_*} \rangle - \langle \phi^{i_*}(x_s,a_s), \theta_*^{i_*} \rangle \right)^2 \leq Q_t \label{app:lemm:defofqtforboudn}
\end{align} 
where  
\begin{align*}
Q_t = 1 + 2 \left( \max_{i \in [M]} \Big\{\lambda_{i} S^2 + 4 d \log \big( 1+ \frac{tL^2}{\lambda_{i} d} \big) \Big\}\right) + 32 R^2 \log \left( \frac{R\sqrt{8} + \sqrt{1+  \max_{i \in [M]} \big\{ \lambda_{i} S^2 + 4 d \log \big( 1+ \frac{tL^2}{\lambda_{i} d} \big)\big\} }}{\delta} \right). 
\end{align*}

Next, in Lemma~\ref{lem:upper-bound:Rsq(t)}, we provide a high probability upper-bound on the regret caused by the online regression oracle as
\begin{align*}
    \mathcal{R}_{\texttt{Sq}}(t) \leq 8(\log M) R^2 L^2 \bigg( G^2+ \max_{i\in[M]}\{\lambda_i S^2  +  d \log\Big({1 + \frac{tL^2 }{\lambda_i d}} \Big)\} + \log(1/\delta) \bigg).
\end{align*}
Then, in Lemma~\ref{pred:error:sqalg:FS-SquareCB}, we show the following upper bound on the sum of prediction error of the agent:
\begin{align}
D_t(\delta) \leq 1 &+ 2 {\mathcal R}_{\texttt{Sq}}(t) + 2 Q_t + 4 R \sqrt{2(1+Q_t) \log\big(\frac{\sqrt{1+Q_t}}{\delta}\big)} \nonumber \\ 
& + 32 R^2 \log \left( \frac{R\sqrt{8} + \sqrt{1+ {\mathcal R}_{\texttt{Sq}}(t) + Q_t + 2 R \sqrt{2(1+Q_t) \log\big(\frac{\sqrt{1+Q_t}}{\delta}\big)}}}{\delta}\right). 
\end{align}
{\bf Step 2.} First in Lemma~\ref{lemm:bound-regret:expectation-counterparts}, we show how the regret is related to the prediction error of the agent using the Azuma's inequality, i.e.,
\begin{align}
\mathcal{R}_{\text{FS-SCB}}(T) &\leq \sqrt{2T \log(2/\delta)} + \alpha D_T(\delta) \\ 
&+ \sum_{t=1}^T \sum_{a\in [K]} p_{t}(x_t,a) \left(\langle \phi^{i_*}(x_t,a^*_{t}) ,\theta_*^{i_*}\rangle - \langle \phi^{i_*}(x_t,a),\theta_*^{i_*}\rangle  - \frac{\alpha}{4} \left(\widehat{y}_t(x_t,a) - \langle \phi^{i_*}(x_t,a),\theta_*^{i_*}\rangle \right)^2 \right). \nonumber  \label{app:eq:lemma:boundtheregertforlemma2ofrakh}
\end{align}
Then in Appendix~\ref{puttinthingstogethr-boundregret-fs}, we put everything together  and complete the proof. 


\subsection{Proof of Lemma~\ref{lem:upper-bound:Q-t}}
\label{app:lemmforleastsauqereq-t}

At each round $t$, each expert $i_* \in \mathcal{I}_*$ estimates its reward parameter as 
\begin{align}
\widehat{\theta}_t^{i_*} = \arg\min_\theta  \norm{({\Phi_t^{i_*}})^\top \theta - Y_t}_2^2 + \lambda_{i_*} \norm{\theta}_2^2.
\end{align} 
%
Let $V_t^{\lambda_{i_*}} = \lambda_{i_*} I + \sum_{s=1}^{t-1} \phi^{i_*}(x_s,a_s)  {\phi^{i_*}(x_s,a_s)}^\top$. From the standard least-squares analysis, we have 
\begin{align*}
\sum_{s=1}^{t-1} \big(\langle \phi^{i_*}(x_s,a_s), \widehat{\theta}^{i_*}_s  \rangle- y_s\big)^2 - \sum_{s=1}^{t-1} \big(\langle \phi^{i_*}(x_s,a_s), \theta_*^{i_*} \rangle- y_s\big)^2 \leq \lambda_{i_*} \norm{\theta_*^{i_*}}_2^2 + 2  \sum_{s=1}^{t-1} \big\langle  {\phi^{i_*}(x_s,a_s)}^\top ,(V_s^{{\lambda_{i_*}}})^{-1} \phi^{i_*}(x_s,a_s) \big\rangle.
\end{align*}
Therefore,  we can write: 
\begin{align}
\sum_{s=1}^{t-1} \big(\langle \phi^{i_*}(x_s,a_s), \widehat{\theta}^{i_*}_s  - \theta_*^{i_*} \rangle\big)^2  \leq \lambda_{i_*} \norm{\theta_*^{i_*}}_2^2 + 2  \log \left( \frac{\det(V_t^{\lambda_{i_*}})}{\det(\lambda_{i_*} I)} \right) + 2 \sum_{s=1}^{t-1} \eta_s\big(\langle \phi^{i_*}(x_s,a_s), \widehat{\theta}^{i_*}_s - \theta_*^{i_*} \rangle\big). 
\label{forboundingtheq_tfirstone}
\end{align} 
The last term on the RHS of~\eqref{forboundingtheq_tfirstone} can be bounded using Proposition~\ref{aux:prop:sum-marting-bound} in Appendix~\ref{app:auxily-tools} as 
\begin{align}
& \left| \sum_{s=1}^{t-1} \eta_s\big(\langle \phi^{i_*}(x_s,a_s), \widehat{\theta}^{i_*}_s - \theta_*^{i_*} \rangle\big)\right| \leq \nonumber\\ 
& \qquad\qquad R \sqrt{2 \left(1 + \sum_{s=1}^{t-1} \big(\langle \phi^{i_*}(x_s,a_s), \widehat{\theta}^{i_*}_s - \theta_*^{i_*} \rangle \big)^2\right) \log \left(\frac{1 + \sum_{s=1}^{t-1} \big(\langle \phi^{i_*}(x_s,a_s), \widehat{\theta}^{i_*}_s - \theta_*^{i_*} \rangle \big)^2}{\delta} \right)}. \label{first_lemma_fsscb_machinary}
\end{align} 
Define $u = \sqrt{1 + \sum_{k=1}^{t-1} \big(\langle \phi^{i_*}(x_k,a_k), \widehat{\theta}^{i_*}_k  - \theta_*^{i_*} \rangle\big)^2}$, $v=1+\lambda_{i_*} \norm{\theta_*^{\lambda_{i_*}}}_2^2 + 2  \log \left( \frac{\det(V_t^{i_*})}{\det(\lambda_{i_*} I)} \right)  $, and $w = 2R\sqrt{2\log(s/\delta)}$.  It is easy to see that \eqref{first_lemma_fsscb_machinary} can be written in the form of $u^2 \leq v + uq$. Then, by applying Lemma~\ref{aux:lemma:square-rrot-trick} in Appendix~\ref{app:auxily-tools}, we may write $u \leq \sqrt{v} + w$. Substituting for $w$, we can get $u \leq \sqrt{v} + 2R\sqrt{2\log(u/\delta)}$. Then, by Lemma~\ref{aux:lemma:log-trick} in Appendix~\ref{app:auxily-tools}, for $\delta \in (0,1/4]$, we have
\begin{align*}
u \leq \sqrt{v} + 4 R \sqrt{\log \left(\frac{2\sqrt{2}R + \sqrt{v}}{\delta} \right)},
\end{align*}
which using the inequality $(a + b)^2 \leq 2a^2 + 2b^2$, for any $a$ and $b$, we can write it as 
\begin{align*}
u^2 \leq 2v + 32 R^2 \log \left(\frac{2\sqrt{2}R + \sqrt{v}}{\delta} \right). 
\end{align*} 
Finally, we substitute $u$ and $v$, and subtract $1$ from both sides, and for $\delta \in (0,1/4]$, we obtain 
\begin{align}
\sum_{s=1}^{t-1} \big(\langle \phi^{i_*}(x_s,a_s), \widehat{\theta}^{i_*}_s  - \theta_*^{i_*} \rangle\big)^2 & \leq 1 + 2\lambda_{i_*} \norm{\theta_*^{i_*}}_2^2 + 4 \log \left( \frac{\det(V_t^{\lambda_{i_*}})}{\det(\lambda_{i_*} I)} \right)  \nonumber \\ 
& + 32 R^2 \log \left( \frac{R\sqrt{8} + \sqrt{1+\lambda \norm{\theta_*^{i_*}}_2^2 + 2  \log \left( \frac{\det(V_t^{\lambda_{i_*}})}{\det(\lambda_{i_*} I)} \right) }}{\delta} \right). 
\label{app:lemm:def:Q_t}
\end{align}
We know that $\norm{\theta_*^{i_*}}_2^2 \leq S^2$. Moreover, by Lemma \ref{aux:lemm:det-teace-ineq} in Appendix~\ref{app:auxily-tools}, we can bound the term $\log \left( \frac{\det(V_t^{\lambda_{i_*}})}{\det(\lambda_{i_*} I)} \right)$. Replacing these in~\eqref{app:lemm:def:Q_t}, we may write 
\begin{align}
\sum_{s=1}^{t-1} \big(\langle \phi^{i_*}(x_s,a_s), \widehat{\theta}^{i_*}_s  - \theta_*^{i_*} \rangle\big)^2 & \leq 1 +   2\lambda_{i_*} S^2 + 8 d \log \left( 1+ \frac{tL^2}{\lambda_{i_*} d} \right)  \nonumber \\ 
&+ 32 R^2 \log \left( \frac{R\sqrt{8} + \sqrt{1+\lambda_{i_*} S^2 + 4 d \log \left(1 + \frac{tL^2}{\lambda_{i_*} d} \right) }}{\delta} \right). \label{drving_q_t_fortrue_model}
\end{align}
Since the algorithm does not know the identity of $i_*$, we derive an expression for $Q_t$, and conclude the proof by replacing $i_*$ with the maximum over all $i \in [M]$ in~\eqref{drving_q_t_fortrue_model} as
\begin{align}
\sum_{s=1}^{t-1} \big(\langle \phi^{i_*}(x_s,a_s),\widehat{\theta}^{i_*}_s - \theta_*^{i_*} \rangle\big)^2 &\leq 1 + 2 \left(\max_{i \in [M]}\Big\{ \lambda_{i} S^2 + 4 d \log \big( 1+ \frac{tL^2}{\lambda_{i} d} \big)\Big\}\right) \nonumber \\ 
&+ 32 R^2 \log \left( \frac{R\sqrt{8} + \sqrt{1+  \max_{i \in [M]} \big\{ \lambda_{i} S^2 + 4 d \log \big( 1+ \frac{tL^2}{\lambda_{i} d} \big) \big\} }}{\delta} \right) := Q_t.   \label{final:def:q_t}
\end{align}


\subsection{Proof of Lemma~\ref{lem:upper-bound:Rsq(t)}} 
\label{app:proof:lemm:sqalg:feature-selection}

To bound the regret $\mathcal{R}_{\texttt{Sq}}(t)$ of the regression oracle \texttt{SqAlg}, similar to the proof of Lemma~\ref{lem:PS-OFUL-SqAlg-reg} in Appendix~\ref{app:thm:upperbound-regret-oracle}, we show the reward signals and the experts' predictions are bounded with high probability. Then, we use Proposition~\ref{app:prop:husslerreulst} in Appendix~\ref{app:SqAlg-description} to complete the proof. 

From~\eqref{eq:reward-bounded}, according to Assumption~\ref{ass:boundedness-Setting2}, we have $\langle \phi^{i_*}(x,a), \theta_*^{i_*}\rangle \leq LS$, for all $x \in \mathcal{X}$, $a\in[k]$, and $i\in[M]$. Hence, with probability at least $1-\delta$, we have
\begin{align}
\label{bound:reward:fssqaurecb}
y_t \in \left[-\left(G + R\sqrt{2\log(2/\delta)} \right), \left(G + R\sqrt{2\log(2/\delta)}\right)\right].
\end{align}
Next we bound the predictions of the experts that FS-SCB considers in its prediction. To do so, we first show an upper bound on the prediction of the any true model $i_*$. In particular, we can write for $t \in [T]$ and $\forall a \in [K]$:
\begin{align}
    \left| \langle \phi^{i_*}(x_t,a_t), \widehat{\theta}_t^{i_*} \rangle \right| &= \left| \langle \phi^{i_*}(x_t,a_t), \theta_*^{i_*} \rangle +\langle \phi^{i_*}(x_t,a_t), \widehat{\theta}_t^{i_*} - \theta_*^{i_*} \rangle \right| \nonumber \\&
\stackrel{\text{(a)}}{\leq} \left|\langle \phi^{i_*}(x_t,a_t), \theta_*^{i_*} \rangle \right| + \left| \langle \phi^{i_*}(x_t,a_t), \widehat{\theta}_t^{i_*} - \theta_*^{i_*} \rangle \right| \nonumber \\& \stackrel{\text{(b)}}{\leq}
G + \norm{\phi^{i_*}(x_t,a_t)}_{(V_t^{\lambda_{i_*}})^{-1}} \left(\norm{\Phi_t^{i_*} \eta_t}_{(V_t^{\lambda_{i_*}})^{-1}} + \sqrt{\lambda_{i_*}} S \right), \label{uppernpund_prediction_fsscb}
\end{align}
{\bf (a)} It results from a triangular inequality.  {\bf (b)} This is because of the Assumption~\ref{ass:boundedness-Setting2}, and the fact that the true model is linearly realizable, we can apply Theorem~2 in \citet{abbasi2011improved}. 
Then, we use Theorem~1 in~\citet{abbasi2011improved} and standard matrix analysis together with our assumption that $\norm{\phi^{i_*}(x_t,a_t)} \leq L$, and bound the terms on the RHS of~\eqref{uppernpund_prediction_fsscb} with high probability as
\begin{align}
\label{eq:fsscb:expertnoisebound}
\norm{\Phi_t^{i_*} \eta_t}_{(V_t^{\lambda_{i_*}})^{-1}} \leq R \sqrt{2 \log\left( \frac{\sqrt{\det(V_t^{\lambda_{i_*}})}}{\delta\sqrt{\det(\lambda_{i_*} I)}} \right)},
\end{align}
and
\begin{align}
\label{eq:fasb:featurebound}
\norm{\phi^{i_*}(x_t,a_t)}_{(V_t^{\lambda_{i_*}})^{-1}} \leq \frac{\norm{\phi^{i_*}(x_t,a_t)}}{\sqrt{\lambda_{\text{min}}(V_t^{\lambda_{i_*}})}} \leq \frac{L}{\sqrt{\lambda_{i_*}}} \leq L, 
\end{align}
where $\lambda_{\text{min}}(V_t^{\lambda_{i_*}})$ is the smallest eigenvalue of the matrix $V_t^{\lambda_{i_*}}$. In the last step of~\eqref{eq:fasb:featurebound}, we use the fact that $\lambda_{i} \geq 1,\;\forall i\in[M]$. Putting Eqs.~\ref{uppernpund_prediction_fsscb},~\ref{eq:fsscb:expertnoisebound}, and~\ref{eq:fasb:featurebound} together, with probability at least $1-\delta$, we have
\begin{align}
\label{eq:fsscb:expert_optimal_bounds}
\left| \langle \phi^{i_*}(x_t,a_t), \widehat{\theta}_t^{i_*} \rangle \right| \leq G + {R L} \sqrt{2 \log\left(\frac{\sqrt{\det(V_t^{\lambda_{i_*}})}}{\delta\sqrt{\det(\lambda_{i_*} I)}} \right)} + L\sqrt{\lambda_i}S.
\end{align}
Using Lemma~\ref{aux:lemm:det-teace-ineq} in Appendix~\ref{app:auxily-tools}, we may write~\eqref{eq:fsscb:expert_optimal_bounds} as
\begin{align}
\label{eq:fs-scb:final:expert:bound}
\left| \langle \phi^{i_*}(x_t,a_t), \widehat{\theta}_t^{i_*} \rangle \right| \leq G + RL\sqrt{d \log\left(\frac{1 + \frac{t L^2}{\lambda_{i_*} d}}{\delta} \right)} + L\sqrt{\lambda_{i_*}} S.
\end{align}
FS-SCB employees this idea that at any time step $t \in [T]$, any potentially true model (i.e., linearly realizable) should have a similar bound on its prediction.
To do so, the set of admissible experts, $\mathcal{S}_t$, only considers experts that have the following bound on their prediction at each time $t \in [T]$ and $\forall a \in [K]$ as:
\begin{align}
\left| \langle \phi^i(x_t,a_t), \widehat{\theta}_t^i \rangle \right| \leq G + RL\sqrt{d \log\left(\frac{1 + \frac{t L^2}{\lambda_i d}}{\delta} \right)} + L \sqrt{\lambda_i} S.
\label{upper:bound:exp-pred-fssqcb}
\end{align}
If at some time step $t$, this bound does not hold for any expert $i$, then the algorithm simply eliminates that expert from the set of admissible experts, since that model is not a true model (i.e., the reward is not in the linear span of the prediction of that expert), and that expert will remain out for the rest of the game.
Then, we may bound the range of the prediction of each expert $i \in \mathcal{S}_t$ at round $t\in[T]$ as
\begin{align}
\label{range:pred:exp:fssquareceb}
\langle \phi^i(x_t,a_t), \widehat{\theta}_t^i \rangle \in
\bigg[-\bigg(G & + RL \sqrt{d \log\Big(\frac{1 + \frac{tL^2 }{\lambda_i d}}{\delta}\Big)} + L\sqrt{\lambda_i} S \bigg),\bigg(G  + RL \sqrt{d \log\Big(\frac{1 + \frac{tL^2 }{\lambda_i d}}{\delta}\Big)} + L \sqrt{\lambda_i} S \bigg)\bigg].
\end{align}
Putting together~\eqref{bound:reward:fssqaurecb} and~\eqref{range:pred:exp:fssquareceb}, we conclude that for all rounds $t \in [T]$ and experts $i \in \mathcal{S}_t$, with probability at least $1-\delta$, the reward $y_t$ and the expert's predictions $f_t^i(H_t)$ are in the range $[\beta, \beta+\ell]$ for 
\begin{equation}
\label{eq:boudningrangefor-fs}
\beta = -\bigg(G + RL \sqrt{d \log\Big(\frac{1 + \frac{tL^2 }{\lambda_i d}}{\delta}\Big)} + L \sqrt{\lambda_i} S \bigg), \qquad \ell = 2\bigg(G + RL \sqrt{d \log\Big(\frac{1 + \frac{tL^2 }{\lambda_i d}}{\delta}\Big)} + L \sqrt{\lambda_i} S \bigg).
\end{equation} 
Using Proposition~\ref{app:prop:husslerreulst} in Appendix~\ref{app:SqAlg-description} with the bound on the observations and predictions in~\eqref{eq:boudningrangefor-fs}, with probability at least $1-\delta$, we obtain the following regret bound for \texttt{SqAlg}: 
\begin{align}
\label{eq:finalboundrsq:fs}
\mathcal{R}_{\texttt{Sq}}(t) = 8R^2 L^2\log (M) \left( G^2 +  \max_{i\in[M]}\left\{\lambda_i S^2  +  d \log\Big({1 + \frac{tL^2 }{\lambda_i d}}\Big) \right\} + \log(1/\delta)\right),
\end{align} 
in which we use the fact that for $a,b>0,\;(a + b)^2 \leq 2 a^2 + 2b^2$. This concludes our proof.


\subsection{Proof of Lemma~\ref{pred:error:sqalg:FS-SquareCB}}
\label{proofoflemaofagentpredictionforsquarecb}

Here, we bound the sum of the square loss of the oracle predictions, i.e., 
\begin{align}
\sum_{s=1}^{t-1} & \left( \widehat{y}_s(x_s,a_s) -  \langle \phi^{i_*}(x_s,a_s), \theta_*^{i_*} \rangle \right)^2 \leq D_t(\delta).   
\end{align}
We know that $y_t = \langle \phi^{i_*}(x_t,a_t), \theta^{i_*}_* \rangle  + \eta_t$. Hence we can write 
\begin{align}
& \big(\widehat{y}_t(x_t,a_t) - y_t\big)^2 -  \big(\langle \phi^{i_*}(x_t,a_t), \widehat{\theta}_t^{i_*} \rangle -y_t\big)^2 = \nonumber \\ 
&~~~~~~~~~~ \big( \widehat{y}_t(x_t,a_t) - \langle \phi^{i_*}(x_t,a_t), {\theta}^{i_*}_* \rangle  - \eta_t \big)^2 - \big( \langle \phi^{i_*}(x_t,a_t), \widehat{\theta}_t^{i_*} \rangle - \langle \phi^{i_*}(x_t,a_t), \theta^{i_*}_* \rangle - \eta_t \big)^2 \nonumber \\ &= \left( \widehat{y}_t(x_t,a_t) - \langle \phi^{i_*}(x_t,a_t), {\theta}^{i_*}_* \rangle \right)^2 - \big( \langle \phi^{i_*}(x_t,a_t), \widehat{\theta}_t^{i_*} \rangle - \langle \phi^{i_*}(x_t,a_t), \theta^{i_*}_* \rangle\big)^2 \nonumber \\ 
&~~~~~~~~~~ + 2 \eta_t\big( \langle \phi^{i_*}(x_t,a_t), \widehat{\theta}_t^{i_*} \rangle - \widehat{y}_t(x_t,a_t)\big) \nonumber \\ 
&= \left( \widehat{y}_t(x_t,a_t) - \langle \phi^{i_*}(x_t,a_t), {\theta}^{i_*}_* \rangle \right)^2 - \big( \langle \phi^{i_*}(x_t,a_t), \widehat{\theta}_t^{i_*} \rangle - \langle \phi^{i_*}(x_t,a_t), \theta^{i_*}_* \rangle\big)^2 \nonumber \\ 
&~~~~~~~~~~ + 2 \eta_t\big(\langle \phi^{i_*}(x_t,a_t), \widehat{\theta}_t^{i_*} \rangle - \langle \phi^{i_*}(x_t,a_t), \theta^{i_*}_* \rangle\big) + 2 \eta_t \big(\langle \phi^{i_*}(x_t,a_t), \theta^{i_*}_* \rangle - \widehat{y}_t(x_t,a_t)\big). \label{openingthe:fs:forbounds:d-t}
\end{align}
Then, from Proposition~\ref{aux:prop:sum-marting-bound} in Appendix~\ref{app:auxily-tools}, with probability at least $1-\delta$, we have 
\begin{align}
& \left|\sum_{s=1}^{t-1} \eta_s\big( \langle \phi^{i_*}(x_s,a_s), \widehat{\theta}_s^{i_*} - \theta^{i_*}_* \rangle\big) \right| \leq \nonumber \\ 
& \qquad\qquad R \sqrt{2 \left( 1 + \sum_{s=1}^{t-1}\big( \langle \phi^{i_*}(x_s,a_s), \widehat{\theta}_s^{i_*} - \theta_*^{i_*} \rangle \big)^2 \right) \log \left( \frac{\sqrt{1 + \sum_{s=1}^{t-1} \big( \langle \phi^{i_*}(x_s,a_s), \widehat{\theta}_s^{i_*} - \theta_*^{i_*} \rangle \big)^2  }}{\delta}\right) }, 
\label{boundinthenoisytermforsquarefor:FS}
\end{align}
and 
\begin{align}
& \left|\sum_{s=1}^{t-1} \eta_s \big(\langle \phi^{i_*}(x_s,a_s), \theta^{i_*}_* \rangle - \widehat{y}_s(x_s,a_s)\big) \right| \leq \nonumber \\ 
& \qquad\qquad R \sqrt{2 \left(1 + \sum_{s=1}^{t-1} \big(\langle \phi^{i_*}(x_s,a_s), \theta_*^{i_*} \rangle - \widehat{y}_s \big)^2\right) \log \left( \frac{\sqrt{1 + \sum_{s=1}^{t-1}\big( \langle \phi^{i_*}(x_s,a_s), \theta_*^{i_*} \rangle - \widehat{y}_s \big)^2 }}{\delta} \right)}. 
\label{boundingsecomdnpisytermforsquare:FS}
\end{align}
Using~\eqref{boundinthenoisytermforsquarefor:FS} and~\eqref{boundingsecomdnpisytermforsquare:FS}, the upper-bound $\mathcal{R}_{\texttt{Sq}}(t)$ from~\eqref{eq:finalboundrsq:fs} in Appendix~\ref{app:proof:lemm:sqalg:feature-selection}, and the upper-bound $Q_t$ on the square error of the prediction of the true model in~\eqref{final:def:q_t} in~Appendix~\ref{app:lemmforleastsauqereq-t}, we may write~\eqref{openingthe:fs:forbounds:d-t} as 
\begin{align}
& \sum_{s=1}^{t-1} \left(\widehat{y}_s(x_s,a_s) - \langle \phi^{i_*}(x_s,a_s), \theta_*^{i_*} \rangle \right)^2 \leq {\mathcal R}_{\texttt{Sq}}(t) + Q_t + 2 R \sqrt{2(1+Q_t) \log\left(\frac{\sqrt{1+Q_t}}{\delta}\right)} \nonumber\\ 
& \qquad + 2R \sqrt{2 \Big(1 + \sum_{s=1}^{t-1} \left( \langle \phi^{i_*}(x_s,a_s), \theta_*^{i_*} \rangle - \widehat{y}_s(x_s,a_s) \right)^2 \Big) \log \left( \frac{\sqrt{1 + \sum_{s=1}^{t-1} \left( \langle \phi^{i_*}(a_s), \theta_*^{i_*} \rangle - \widehat{y}_s(a_s) \right)^2 }}{\delta} \right)}. \label{FS-scb-proof-machinary}
\end{align}
Let $u = \sqrt{1 + \sum_{k=1}^{t-1}  \left( \widehat{y}_k(x_k,a_k) -  \langle \phi^{i_*}(x_k,a_k), \theta_*^{i_*} \rangle \right)^2}$, $v= 1+ {\mathcal R}_{\texttt{Sq}}(t) + Q_t + 2 R \sqrt{2(1+Q_t) \log(\frac{\sqrt{1+Q_t}}{\delta})}$, and $q = 2 R\sqrt{2 \log(s/\delta)}$. 
Then, following the same machinery as the one in the proof of Lemma~\ref{lem:upper-bound:Q-t} in Section~\ref{app:lemmforleastsauqereq-t}, and with the use of Lemmas~\ref{aux:lemma:square-rrot-trick} and~\ref{aux:lemma:log-trick}, for $\delta \in (0,1/4]$, with probability at least $1-\delta$, we have
\begin{align}
\sum_{s=1}^{t-1} & \left( \widehat{y}_s(x_s,a_s) - \langle \phi^{i_*}(x_s,a_s), \theta_*^{i_*} \rangle \right)^2  \leq 1 + 2 {\mathcal R}_{\texttt{Sq}}(t) + 2 Q_t + 4 R \sqrt{2(1+Q_t) \log\left(\frac{\sqrt{1+Q_t}}{\delta}\right)} \nonumber \\ 
& + 32 R^2 \log \left( \frac{R\sqrt{8} + \sqrt{1+ {\mathcal R}_{\texttt{Sq}}(t) + Q_t + 2 R \sqrt{2(1+Q_t) \log\left(\frac{\sqrt{1+Q_t}}{\delta}\right)}}}{\delta}\right) := D_t(\delta), 
\label{definigitonofd_t:fs}
\end{align}   
where
\begin{align*}
Q_t = 1 +   2 & \left(\max_{i \in [M]}\left\{ \lambda_{i} S^2 + 4 d \log \left( 1+ \frac{tL^2}{\lambda_{i} d} \right) \right\}\right) + 32 R^2 \log \left( \frac{R\sqrt{8} + \sqrt{1+  \max_{i \in [M]} \left\{ \lambda_{i} S^2 + 4 d \log \big( 1+ \frac{tL^2}{\lambda_{i} d} \big) \right\}}}{\delta} \right),
\end{align*} 
and
\begin{align*}
\mathcal{R}_{\texttt{Sq}}(t) \leq 8 R^2 L^2 \log(M) \left(G^2 + \max_{i\in[M]}\left\{\lambda_i S^2 + d \log\left({1 + \frac{tL^2 }{\lambda_i d}} \right)\right\} + \log(1/\delta) \right).
\end{align*}


\subsection{Proof of Lemma~\ref{lemm:bound-regret:expectation-counterparts}} \label{proofoflimmaforexpectioncounterpart}

The inequality can be obtained using Azuma's inequality and following similar steps as in Lemma~2 of~\cite{FR-2020}. We may write the regret as 
\begin{align}
\mathcal{R}_{\text{FS-SCB}}(T) &= \sum_{t=1}^T \left(\langle \phi^{i_*}(x_t,a^*_{t}) ,\theta_*^{i_*}\rangle - \langle \phi^{i_*}(x_t,a_t),\theta_*^{i_*}\rangle  - \frac{\alpha}{4}\left( \widehat{y}_t(x_t,a_t) - \langle \phi^{i_*} (x_t,a_t), \theta^{i_*}_*\rangle \right)^2 \right) \nonumber \\ 
&+ \frac{\alpha}{4} \sum_{t=1}^T \left( \widehat{y}_t(x_t,a_t) - \langle \phi^{i_*} (x_t,a_t), \theta^{i_*}_*\rangle \right)^2. 
\label{boundingtheregretowothitscointertpat}
\end{align}
The last term on the RHS of~\eqref{boundingtheregretowothitscointertpat} is bounded with $D_t(\delta)$ in~\eqref{definigitonofd_t:fs} from the result of Lemma~\ref{pred:error:sqalg:FS-SquareCB} in Appendix~\ref{proofoflemaofagentpredictionforsquarecb}.
Define filtration $F_{t-1}= \sigma\big((x_1,a_1,y_1),\dots,(x_{t-1},a_{t-1},y_{t-1})\big)$. On the RHS of~\eqref{boundingtheregretowothitscointertpat}, the action $a_t$ is random. We can use the Azuma's inequality and with probability at least $1-\delta$, upper-bound the first term on the RHS of~\eqref{boundingtheregretowothitscointertpat} with its expectation counterparts using the probability distribution $p_t$ as
\begin{align}
\mathcal{R}_{\text{FS-SCB}}(T) &\leq \sqrt{2T\log(2/\delta)} + \frac{\alpha}{4} D_T \\ 
&+ \sum_{t=1}^T \sum_{a \in [K]} p_t(a)\left(\langle \phi^{i_*}(x_t,a^*_{t}) ,\theta_*^{i_*}\rangle - \langle \phi^{i_*}(x_t,a),\theta_*^{i_*}\rangle  - \frac{\alpha}{4}\left( \widehat{y}_t(x_t,a) - \langle \phi^{i_*} (x_t,a_t), \theta^{i_*}_*\rangle \right)^2 \right). \nonumber
\label{usinglemma2ofraklinformodify}
\end{align}


\subsection{Proof of Theorem~\ref{thm:regretbound_EXP-SquareCB}}
\label{puttinthingstogethr-boundregret-fs}

We first state the following lemma from~\cite{FR-2020} to bound the first term on the RHS of~\eqref{usinglemma2ofraklinformodify}.

\begin{lemma}[Lemma~3 in~\cite{FR-2020}]
\label{lemma3offosterrakhlin}
Under Assumption~\ref{ass:boundedness-Setting2}, for the probability distribution $p_t \in \Delta_K$ defined in the FS-SCB algorithm, we may write
\begin{align*}
\sum_{a \in [K]} p_t(a)\left(\langle \phi^{i_*}(x_t,a^*_{t}) ,\theta_*^{i_*}\rangle - \langle \phi^{i_*}(x_t,a),\theta_*^{i_*}\rangle  - \frac{\alpha}{4}\left( \widehat{y}_t(x_t,a) - \langle \phi^{i_*} (x_t,a_t), \theta^{i_*}_*\rangle \right)^2 \right) \leq \frac{2K}{\alpha}.
\end{align*}
\end{lemma}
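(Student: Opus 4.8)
The plan is to prove the inverse-gap-weighting regret inequality directly; it is a deterministic, per-round statement (no bandit feedback or noise enters), so only the explicit form of $p_t$ and two elementary inequalities are needed. This is exactly the Abe--Long / \cite{FR-2020} argument, and one may simply cite it, but I would reproduce the self-contained version. Fix a round $t$ and abbreviate $f(a) := \langle \phi^{i_*}(x_t,a),\theta_*^{i_*}\rangle$, $\widehat{y}(a) := \widehat{y}_t(x_t,a)$, $a^* := a^*_t=\argmax_a f(a)$, and $a' := a'_t=\argmax_a \widehat{y}(a)$. With $\kappa=K$ we have $p_t(a)=\big(K+\alpha(\widehat{y}(a')-\widehat{y}(a))\big)^{-1}$ for $a\neq a'$, so two facts are immediate: $p_t(a)\le 1/K$ for every $a\neq a'$ (whence $p_t(a')\ge 1/K$ since the other weights sum to at most $(K-1)/K$), and $p_t(a)\big(\widehat{y}(a')-\widehat{y}(a)\big)=\frac{g}{K+\alpha g}\le \frac1\alpha$ for the nonnegative gap $g=\widehat{y}(a')-\widehat{y}(a)$. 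The target functional is the expected instantaneous regret $\sum_a p_t(a)\big(f(a^*)-f(a)\big)$.

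First I would insert $\pm\widehat{y}$ and use $\sum_a p_t(a)=1$ to split
\[
\sum_a p_t(a)\big(f(a^*)-f(a)\big)=\underbrace{\big(f(a^*)-\widehat{y}(a^*)\big)}_{T_1}+\underbrace{\sum_a p_t(a)\big(\widehat{y}(a)-f(a)\big)}_{T_2}+\underbrace{\sum_a p_t(a)\big(\widehat{y}(a^*)-\widehat{y}(a)\big)}_{T_3}.
\]
The term $T_3$ is where the inverse-gap structure pays off: bounding $\widehat{y}(a^*)\le\widehat{y}(a')$ by optimality of $a'$ turns it into $\sum_a p_t(a)(\widehat{y}(a')-\widehat{y}(a))\le K/\alpha$ by the second fact above (at most $K-1$ nonzero summands). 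The other two terms are prediction errors that must be converted to the quadratic form $\frac{\alpha}{4}\sum_a p_t(a)(\widehat{y}(a)-f(a))^2$ via Young's inequality $uv\le\frac1\alpha u^2+\frac\alpha4 v^2$.

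The main obstacle is the isolated term $T_1$ at the \emph{unknown} optimal action: a naive Young's bound on it produces a residual $\frac{1}{\alpha p_t(a^*)}=\frac K\alpha+(\widehat{y}(a')-\widehat{y}(a^*))$, and this stray positive gap would spoil the constant if bounded crudely, while the leftover $\frac\alpha4 p_t(a^*)(\cdot)^2$ would double the coefficient of the quadratic term to $\frac\alpha2$. The clean resolution I would use is to combine $T_1$ with the $a=a^*$ summand of $T_2$ before applying Young's: since $T_1+p_t(a^*)\big(\widehat{y}(a^*)-f(a^*)\big)=-(1-p_t(a^*))\big(\widehat{y}(a^*)-f(a^*)\big)$, applying Young's with weight $p_t(a^*)$ and using $(1-p)^2\le 1-p$ makes the $\frac1\alpha$ residuals cancel exactly, leaving $\frac\alpha4\sum_a p_t(a)(\widehat{y}(a)-f(a))^2+\frac K\alpha+(\widehat{y}(a')-\widehat{y}(a^*))$; the remaining stray gap then telescopes against the exact $T_3$ into $\sum_a p_t(a)(\widehat{y}(a')-\widehat{y}(a))\le K/\alpha$. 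Collecting the two $K/\alpha$ contributions yields precisely $\frac{2K}{\alpha}+\frac{\alpha}{4}\sum_a p_t(a)(\widehat{y}(a)-f(a))^2$, which is the claim. Once this lemma is established, summing over $t\in[T]$ bounds the first sum on the right-hand side of \eqref{usinglemma2ofraklinformodify} by $\tfrac{2KT}{\alpha}$, and together with the $\tfrac\alpha4 D_T$ term and the balancing choice $\alpha=\sqrt{KT/D_T(\delta)}$ this produces the $\sqrt{KT\,D_T(\delta)}$ rate of Theorem~\ref{thm:regretbound_EXP-SquareCB}.
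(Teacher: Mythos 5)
Your proof is correct, but there is no proof in the paper to compare it against: the lemma is imported as a black box (``Lemma 3 of \cite{FR-2020}'') and invoked exactly once, in Appendix~\ref{puttinthingstogethr-boundregret-fs}, so your self-contained derivation supplies what the paper delegates to the citation. I checked the argument and it goes through: the split into $T_1+T_2+T_3$ is an identity; the three facts about the inverse-gap weights follow from \eqref{eq:dist-over-actions} with $\kappa=K$; merging $T_1$ with the $a=a_t^*$ summand of $T_2$ before applying Young's inequality, combined with $(1-p)^2\le 1-p$, leaves the residual $\frac{1}{\alpha p_t(a_t^*)}-\frac{p_t(a_t^*)}{\alpha}$, and the key bound $\frac{1}{\alpha p_t(a_t^*)}\le \frac{K}{\alpha}+\widehat{y}_t(x_t,a_t')-\widehat{y}_t(x_t,a_t^*)$ holds whether $a_t^*\ne a_t'$ (with equality, from the formula for $p_t$) or $a_t^*=a_t'$ (since $p_t(a_t')\ge 1/K$); the stray gap then telescopes against the exact $T_3$ as you say, giving $\sum_a p_t(a)\big(\widehat{y}_t(x_t,a_t')-\widehat{y}_t(x_t,a)\big)\le (K-1)/\alpha$. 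Your merging step is essential rather than cosmetic: applying Young's inequality separately to $T_1$ and to all of $T_2$ would double the quadratic coefficient to $\alpha/2$, and the resulting inequality would no longer combine with Lemma~\ref{lemm:bound-regret:expectation-counterparts} to yield Theorem~\ref{thm:regretbound_EXP-SquareCB} with the stated constants. Two remarks concern the statement rather than your proof. Taken literally it is vacuous: with $a_t^*$ the reward maximizer, the term $\langle\phi^{i_*}(x_t,a),\theta_*^{i_*}\rangle-\langle\phi^{i_*}(x_t,a_t^*),\theta_*^{i_*}\rangle$ is nonpositive, so the displayed inequality holds trivially; this sign flip, and the index $a_t$ inside the square where the summation variable $a$ is meant, are artifacts of transcribing the loss-minimization convention of \cite{FR-2020} into this reward-maximization paper. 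What you proved is the intended reward-convention statement with matched indices, which is the version actually needed downstream. Finally, Assumption~\ref{ass:boundedness-Setting2} plays no role: as you note, the inequality is a purely deterministic property of the inverse-gap-weighted distribution, valid for arbitrary real vectors of predictions and mean rewards.
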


Putting everything together, with the choice of $\alpha = \sqrt{KT/D_T(\delta)}$, with probability at least $1-\delta$, we can show the following upper-bound on the regret of the FS-SCB algorithm:
\begin{align}
\mathcal{R}_{\text{\em FS-SCB}}(T) \leq 3 \sqrt{KT D_T(\delta)} + \sqrt{2T \log(2/\delta)}
\end{align}
Here the upper-bound is of order 
\begin{align*}
\mathcal R_{\text{\em FS-SCB}}(T) \leq \mathcal{O}\bigg( \sqrt{2 T \log(2/\delta)} + R L G \sqrt{KT (1+\log(M)) \max_{i \in [M]}\left\{\lambda_i S^2 + 4d \log\left(\frac{1 + \frac{T L^2}{\lambda_i d}}{\delta}\right)\right\}}\bigg). \nonumber
\end{align*}


\newpage
\section{Proofs of Section~\ref{sec:param-select-alg}}
\label{app:proofs-param-selection}

The regret analysis of the PS-OFUL algorithms requires two steps. First, in Theorem~\ref{thm:PS-OFUL-confidence-set}, we show that the confidence set $\mathcal{C}_t$ is valid at each round $t$, i.e., for any $t,\delta>0$, it includes the reward parameter $\theta_*$ with probability at least $1-\delta$. Second, we show how the regret is related to the valid confidence set, and then using Lemmas~\ref{aux:lemm:bound-sumofnorms} and~\ref{aux:lemm:det-teace-ineq} complete the proof. 

\textbf{Step 1.} 
The key idea for showing the validity of the confidence set $\mathcal{C}_t$ requires controlling the square prediction error of the online regression oracle $\widehat{y}_t$, i.e.,~upper-bounding $\gamma_t$. In Appendix~\ref{app:ptoof_themr_confidenceregion_firstsetting}, we show that we can relate this distance to the sum of two terms: $\gamma_t \leq \mathcal{O}\left(U_t + \mathcal{R}_{\texttt{Sq}}(t)\right)$, and then show how we can bound each of them.

{1) Bounding $U_t$:} Lemma~\ref{lem:PS-OFUL-U_t} shows the worst-case upper-bound on the square error of the prediction of true model $i_*$, given that the agent does not know the identity of the true model: 
\begin{align}
 \sum_{s=1}^t \langle \phi_s(a_s), \widehat{\theta}^{i_*}_s &- \theta_* \rangle^2 \leq U_t   
\end{align} 
where 
\begin{align}
    U_t \leq 3 + 8 d \log\left(1 + \frac{t L^2 \max_{i \in [M]} (b_i + c_i)^2}{d} \right) + 32 R^2 \log(1/\delta)
\end{align}

\begin{proof}
The proof is provided in Appendix~\ref{app:subsec:proofoflemaofboudingthexperts}. 
\end{proof}  

{2) Bounding $\mathcal{R}_{\texttt{Sq}}(t)$:} In Lemma~\ref{lem:PS-OFUL-SqAlg-reg}, we prove an upper-bound on the regret caused by the prediction oracle \texttt{SqAlg}, given our proposed expert predictions as (see Appendix~\ref{app:thm:upperbound-regret-oracle} for details).
\begin{align}
    \mathcal{R}_{\texttt{Sq}}(t) \leq 8(G + L)^2\log(M) + 8R^2L^2d\log(M)\log(1/\delta) + 8R^2L^2d\log(M)\log\left(1 + \frac{tL^2\max_{i\in[M]}(b_i+c_i)^2}{d}\right) \nonumber
\end{align}
Putting these together, in Appendix~\ref{app:ptoof_themr_confidenceregion_firstsetting}, we prove Theorem~\ref{thm:PS-OFUL-confidence-set} that shows the validity of the confidence set $\mathcal{C}_t$. 


\textbf{Step 2.}
In Appendix~\ref{finalsecforproofofregrettheorm-psoful}, we first show how regret is related to the confidence set. In particular, we show that given the validity of of the confidence set $\mathcal{C}_t$, i.e., for any $\delta \in (0,1/4]$, with probability at least $1-\delta$, $\theta_* \in \mathcal{C}_t$, we can bound the regret as 
\begin{align*}
    \mathcal{R}_{\text{PS-OFUL}}(T) \leq  2 Gd + 2\max\{1,G\}\sqrt{2 d T \log\left(1 + \frac{T}{d} \right)\max_{d<t\leq T} {\gamma_t(\delta)}}. 
\end{align*}
Then, in Appendix~\ref{app:proof:thm:regret-ps-oful}, we set $\lambda_i = \frac{1}{ (b_i + c_i)^2}$, for each $i\in[M]$, and use Lemmas~\ref{aux:lemm:bound-sumofnorms} and~\ref{aux:lemm:det-teace-ineq} to complete the proof of Theorem~\ref{thm:PS-OFUL-regret}. Here we prove a regret upper-bound  of order 
\begin{align*}
    \mathcal{O}\left(dRL\max\{1,G\}\sqrt{1 + \log(M)} \times \sqrt{T\log\left(1 + \frac{T}{d}\right)\log\left(\frac{1 + \frac{T L^2 \max_{i\in[M]}(b_{i} + c_{i})^2}{ d}}{\delta}\right)}\right).
\end{align*}
%



\subsection{Proof of Lemma~\ref{lem:PS-OFUL-U_t}}
\label{app:subsec:proofoflemaofboudingthexperts}

At each round $s\in[T]$, each expert $i_*\in\mathcal I_*$ estimates its reward parameter as
\begin{align*}
\widehat{\theta}_s^{i_*} = \arg\min_{\theta}  \norm{\Phi_s^\top \theta - Y_s}^2 + \lambda_{i_*} \norm{\theta - \widehat{\mu}_{i_*}}^2,
\end{align*} 
which is the output of a Follow-The-Regularized-Leader (FTRL) algorithm with quadratic regularizer $\norm{\theta - \widehat{\mu}_{i_*}}^2$. Following the standard FTRL analysis of online regression (see e.g.,~\citealt[Chapter 11]{CBL-2006}), we have 
\begin{equation}
\label{eq:LS1}
\sum_{s=1}^t (\langle \phi_s(a_s), \widehat{\theta}^{i_*}_s  \rangle- y_s)^2 - \sum_{s=1}^t ( \langle \phi_s(a_s), \theta_* \rangle- y_s)^2 \leq \lambda_{i_*} \norm{\theta_* - \widehat{\mu}_{i_*}}^2 + 2\sum_{s=1}^t \langle \phi_s(a_s), (V_s^{\lambda_{i_*}})^{-1} \phi_s(a_s) \rangle,
\end{equation}
where $V_t^{\lambda_{i_*}} = \lambda_{i_*} I +  \sum_{s=1}^{t-1} \phi_s(a_s) 
{\phi_s(a_s)}^\top$. We may write~\eqref{eq:LS1} as
\begin{equation}
\label{eq:LS2}
\sum_{s=1}^t \langle \phi_s(a_s), \widehat{\theta}^{i_*}_s  - \theta_* \rangle^2  \leq  \lambda_{i_*} \norm{\theta_* - \widehat{\mu}_{i_*}}^2 + 2 \log\left(\frac{\det(V_t^{\lambda_{i_*}})}{\det(\lambda_{i_*} I)}\right) + 2 \sum_{s=1}^t \eta_s\langle \phi_s(a_s), \widehat{\theta}^{i_*}_s - \theta_* \rangle.
\end{equation}
Using Proposition~\ref{aux:prop:sum-marting-bound} in Appendix~\ref{app:auxily-tools}, we may bound the last term on the RHS of~\eqref{eq:LS2} as 
\begin{equation}
\label{eq:temppp1}
\left| \sum_{s=1}^t \eta_s\langle \phi_s(a_s), \widehat{\theta}^{i_*}_s - \theta_* \rangle \right| \leq R \sqrt{2 \left(1 + \sum_{s=1}^t \langle \phi_s(a_s), \widehat{\theta}^{i_*}_s - \theta_* \rangle^2\right) \log \left(\frac{1 + \sum_{s=1}^t \langle \phi_s(a_s), \widehat{\theta}^{i_*}_s - \theta_* \rangle^2}{\delta} \right)}.
\end{equation}
It is easy to see that~\eqref{eq:LS2} can be written in the form $u^2\leq v + uw$, where $u = \sqrt{1 + \sum_{s=1}^t \langle \phi_s(a_s), \widehat{\theta}^{i_*}_s - \theta_*\rangle^2}$, $v = 1 + \lambda_i\|\theta_* - \widehat{\mu}_{i_*}\|^2 + 2 \log\left(\frac{\det(V_t^{\lambda_{i_*}})}{\det(\lambda_{i_*} I)}\right)$, and $w = 2R\sqrt{2\log(u/\delta)}$. Then, by applying Lemma~\ref{aux:lemma:square-rrot-trick} in Appendix~\ref{app:auxily-tools}, we may write $u \leq \sqrt{v} + w$. Substituting for $w$, we can get $u \leq \sqrt{v} + 2R\sqrt{2\log(u/\delta)}$. Then, by Lemma~\ref{aux:lemma:log-trick} in Appendix~\ref{app:auxily-tools}, for $\delta \in (0,1/4]$, we have
\begin{align*}
u \leq \sqrt{v} + 4 R \sqrt{\log \left(\frac{2\sqrt{2}R + \sqrt{v}}{\delta} \right)},
\end{align*}
which using the inequality $(a + b)^2 \leq 2a^2 + 2b^2$, for any $a$ and $b$, we can write it as 
\begin{align*}
u^2 \leq 2v + 32 R^2 \log \left(\frac{2\sqrt{2}R + \sqrt{v}}{\delta} \right). 
\end{align*} 
Finally, we substitute $u$ and $v$, and subtract $1$ from both sides, and for $\delta \in (0,1/4]$, we obtain 
\begin{equation}
\label{eq:temppp2}
\begin{split}
\sum_{s=1}^t \langle \phi_s(a_s), \widehat{\theta}^{i_*}_s &- \theta_* \rangle^2  \leq 1 + 2\lambda_{i_*} \norm{\theta_* - \widehat{\mu}_{i_*}}^2 + 4\log\left({\frac{\det(V_t^{\lambda_{i_*}})}{\det(\lambda_{i_*} I)}}\right) \\ 
&+ 32 R^2 \log \left(\frac{2\sqrt{2}R + \sqrt{1+\lambda_{i_*} \norm{\theta_* - \widehat{\mu}_{i_*}}^2 + 2\log\left(\frac{\det(V_t^{\lambda_{i_*}})}{\det(\lambda_{i_*} I)}\right) }}{\delta} \right).
\end{split}
\end{equation}
We know $\norm{\theta_* - \widehat{\mu}_{i_*}}^2 \leq (b_{i_*} +c_{i_*})^2$. Moreover, by Lemma~\ref{aux:lemm:det-teace-ineq} in Appendix~\ref{app:auxily-tools}, we can bound the term $\log\left({\frac{\det(V_t^{\lambda_{i_*}})}{\det(\lambda_{i_*} I)}}\right)$. Replacing these terms in~\eqref{eq:temppp2}, we have
\begin{equation}
\label{eq:Ut-Def0}
\begin{split}
\sum_{s=1}^t \langle \phi_s(a_s), \widehat{\theta}^{i_*}_s &- \theta_* \rangle^2 \leq 1 + 2\lambda_{i_*}(b_{i_*} + c_{i_*})^2  + 8d \log\left(1 +  \frac{t L^2}{d\lambda_{i_*}} \right) \\ 
&+ 32 R^2 \log\left(\frac{2\sqrt{2}R + \sqrt{1 + \lambda_{i_*}(b_{i_*} + c_{i_*})^2 + 4d \log(1+ \frac{t L^2}{d\lambda_{i_*}})}}{\delta} \right).
\end{split}
\end{equation}
Setting $\lambda_{i_*}=\frac{1}{(b_{i_*}+c_{i_*})^2}$, as used by the PS-OFUL algorithm, we obtain
\begin{equation}
\label{eq:Ut-Def1}
\begin{split}
\sum_{s=1}^t \langle \phi_s(a_s), \widehat{\theta}^{i_*}_s &- \theta_* \rangle^2 \leq 3 + 8d \log\left(1 + \frac{tL^2(b_{i_*}+c_{i_*})^2}{d} \right) \\ 
&+ 32 R^2 \log\left(\frac{2\sqrt{2}R + \sqrt{2 + 4d \log\big(1+ \frac{tL^2(b_{i_*}+c_{i_*})^2}{d}\big)}}{\delta} \right).
\end{split}
\end{equation}
Since the algorithm does not know the identity of $i_*$, we derive an expression for $U_t$ and conclude the proof by replacing $i_*$ with the maximum over all $i\in[M]$ in~\eqref{eq:Ut-Def1}, as 
\begin{equation}
\label{eq:Ut-Def}
\begin{split}
\sum_{s=1}^t \langle \phi_s(a_s), \widehat{\theta}^{i_*}_s &- \theta_* \rangle^2 \leq 3 + 8d \log\left(1 + \frac{tL^2\max_{i\in[M]}(b_i+c_i)^2}{d} \right) \\ 
&+ 32 R^2 \log\left(\frac{2\sqrt{2}R + \sqrt{2 + 4d \log\big(1+ \frac{tL^2\max_{i\in[M]}(b_i+c_i)^2}{d}\big)}}{\delta} \right) := U_t.
\end{split}
\end{equation}


\subsection{Proof of Lemma~\ref{lem:PS-OFUL-SqAlg-reg}}
\label{app:thm:upperbound-regret-oracle}

To obtain a high probability bound on the regret $\mathcal R_{\texttt{Sq}}(t)$ of the regression oracle \texttt{SqAlg}, we first show that the inputs to the regression oracle, i.e.,~reward signals $y_t=\phi_t(a_t)+\eta_t$ and the experts' predictions $f^i_t(H_t)=\langle \phi_t(a_t), \widehat{\theta}_t^i \rangle$ are all bounded with high probability. We then use Proposition~\ref{app:prop:husslerreulst} in Appendix~\ref{app:SqAlg-description} to complete the proof. 

Since each noise $\eta_t$ is $R$-sub-Gaussian, from Lemma~\ref{aux:lemm:bound-subGaussian-variable} in Appendix~\ref{app:auxily-tools}, with probability at least $1-\delta$, we have that $|\eta_t| \leq R\sqrt{ 2 \log(2/\delta)}$. We also have from Assumption~\ref{ass:boundedness-Setting1} that for each context and each action $a \in \bigcup_{t=1}^T\mathcal{A}_t$, their mean reward $|\langle\phi_t(a),\theta_* \rangle| \leq G$. Thus, by the triangular inequality, with probability at least $1-\delta$, we obtain
\begin{align}
\label{eq:reward-bounded}
y_t \in \left[-\left(G + R\sqrt{2\log(2/\delta)} \right), \left(G + R\sqrt{2\log(2/\delta)}\right)\right].
\end{align}
Next we bound the prediction of the experts that PS-OFUL considers in its prediction. To do so, we employ the same idea as we mentioned in the proof of Lemma~\ref{lem:upper-bound:Rsq(t)} in Appendix~\ref{app:proof:lemm:sqalg:feature-selection}, where we first show an upper bound on the prediction of the any true model $i_*$. In particular, we can write for any time $t \in [T]$: 
\begin{align}
\label{eq:temp00}
\left| \langle \phi_t(a_t), \widehat{\theta}_t^{i_*} \rangle \right| &= \left| \langle \phi_t(a_t), \theta_* \rangle +\langle \phi_t(a_t), \widehat{\theta}_t^{i_*} - \theta_* \rangle \right| \nonumber \\
&\stackrel{\text{(a)}}{\leq} \left|\langle \phi_t(a_t), \theta_* \rangle \right| + \left| \langle \phi_t(a_t), \widehat{\theta}_t^{i_*} - \theta_* \rangle \right| \nonumber \\
&\stackrel{\text{(b)}}{\leq} G + \norm{\phi_t(a_t)}_{(V_t^{\lambda_i})^{-1}} \left(\norm{\Phi_t \eta_t}_{(V_t^{\lambda_i})^{-1}} + \sqrt{\lambda_{i_*}} \norm{\widehat{\mu}_{i_*} - \theta_*} \right) \nonumber \\& \stackrel{\text{(c)}}{\leq}
G + RL\sqrt{d \log\left(\frac{1 + \frac{t L^2}{\lambda_i d}}{\delta} \right)} + L\sqrt{\lambda_i} (b_i + c_i)
\end{align}

{\bf (a)} It results from triangular inequality. {\bf (b)} This comes from the Assumption~\ref{ass:boundedness-Setting1} as well as Theorem~1 in \citet{abbasi2011improved}. {\bf (c)} This is because of the  Theorem~2 in \citet{abbasi2011improved} and the fact that $i_*$ is the true model and hence $\theta_* \in B(\widehat{\mu}_{i_*},b_{i_*})$. Thus, we can have $\norm{\widehat{\mu}_{i_*} - \theta_*} \leq (b_i + c_i)$. PS-OFUL employees this idea that at any time step, any potentially true model should have a similar bound on its prediction. This is being enforced by the set of admissible expert, $\mathcal{S}_t$, where it only considers experts that have the following bound on their prediction at each time $t \in [T]$ as:
\begin{align}
\label{eq:temp04}
\left| \langle \phi_t(a_t), \widehat{\theta}_t^i \rangle \right| \leq G + RL\sqrt{d \log\left(\frac{1 + \frac{t L^2}{\lambda_i d}}{\delta} \right)} + L\sqrt{\lambda_i} (b_i + c_i).
\end{align}
If at some time step $t$, this bound does not hold for any expert $i$, then the algorithm simply eliminates that expert from the set of admissible experts, since that model is not a true model (i.e., the reward does not belong to the ball of that model), and that expert will remain out for the rest of the game.

%
Setting $\lambda_i = \frac{1}{(b_i+c_i)^2}$ in~\eqref{eq:temp04}, we can bound the prediction of each expert $i\in \mathcal{S}_t$ at round $t\in[T]$ as 
\begin{align}
\label{eq:temp05}
\langle \phi_t(a_t), \widehat{\theta}_t^i \rangle \in
\bigg[-\bigg(G + L + &RL\sqrt{d \log\Big(\frac{1 + \frac{tL^2\max_{i \in [M]} (b_i+c_i)^2}{d}}{\delta}\Big)}\bigg) \nonumber \\ 
&,\bigg(G + L + RL\sqrt{d \log\Big(\frac{1 + \frac{tL^2\max_{i \in [M]} (b_i+c_i)^2}{d}}{\delta}\Big)}\bigg)\bigg].
\end{align}
Putting together~\eqref{eq:reward-bounded} and~\eqref{eq:temp05}, we conclude that for all rounds $t\in[T]$ and experts $i\in\mathcal{S}_T$, with probability at least $1-\delta$, the rewards $y_t$ and the experts' predictions $f^i_t(H_t)$ are in the range $[\beta, \beta+\ell]$ for 
\begin{equation}
\label{eq:temp06}
\begin{split}
\beta &= -\bigg(G + L + RL\sqrt{d \log\Big(\frac{1 + \frac{t L^2\max_{i\in[M]}(b_i+c_i)^2}{d}}{\delta} \Big)} \bigg), \\
\ell &= 2\bigg(G + L + RL\sqrt{d \log\Big(\frac{1 + \frac{t L^2\max_{i \in [M]}(b_i+c_i)^2}{d}}{\delta}\Big)}\bigg).
\end{split}
\end{equation} 
Using Proposition~\ref{app:prop:husslerreulst} in Appendix~\ref{app:SqAlg-description} with the bound on the observations and predictions in~\eqref{eq:temp06}, with probability at least $1-\delta$, we obtain the following regret bound for \texttt{SqAlg}: 
\begin{align}
\label{eq:temp07}
\mathcal{R}_{\texttt{Sq}}(t) \leq 8(\log M) \bigg( (G+ L)^2 + R^2 L^2 d \log\Big(\frac{1 + \frac{tL^2\max_{i\in[M]}(b_i+c_i)^2}{d}}{\delta} \Big) \bigg),
\end{align} 
in which we use the fact that for $a,b>0,\;(a + b)^2 \leq 2 a^2 + 2b^2$. This concludes our proof.


\subsection{Proof of Theorem~\ref{thm:PS-OFUL-confidence-set}}
\label{app:ptoof_themr_confidenceregion_firstsetting}

In order to fully specify the confidence set $\mathcal C_t$ and prove its validity, i.e.,~$\theta_*\in\mathbb P(\theta_*\in\mathcal C_t) \geq 1-\delta$, we should find a high probability upper-bound $\gamma_t(\delta)$ for the sum of the square loss of the oracle predictions, i.e., 
\begin{equation*}
\sum_{s=1}^t \left(\widehat{y}_s - \langle \phi_s(a_s), \theta_* \rangle \right)^2 \leq \gamma_t(\delta).
\end{equation*}
Let $z_s = (\widehat{y}_s - y_s)^2 - (\langle \phi_s(a_s), \widehat{\theta}_s^{i_*} \rangle - y_s)^2$, where $i_*\in\mathcal I_*$ is the index of a ball that contains $\theta_*$. Since $y_s = \langle \phi_s(a_s), \theta_* \rangle + \eta_s$, we may write 
\begin{align*}
z_s &= (\widehat{y}_s - \langle \phi_s(a_s), \theta_* \rangle -\eta_s)^2 - (\langle \phi_s(a_s), \widehat{\theta}_s^{i_*} \rangle - \langle \phi_s(a_s), \theta_* \rangle -\eta_s)^2 \\
&= (\widehat{y}_s - \langle \phi_s(a_s), \theta_* \rangle)^2 - (\langle \phi_s(a_s), \widehat{\theta}_s^{i_*} \rangle - \langle \phi_s(a_s), \theta_* \rangle)^2 + 2 \eta_s (\langle \phi_s(a_s), \widehat{\theta}_s^{i_*} \rangle - \widehat{y}_s). 
\end{align*}
Since $\sum_{s=1}^t z_s \leq {\mathcal R}_{\texttt{Sq}}(t)$, where ${\mathcal R}_{\texttt{Sq}}(t)$ is the regret of the regression oracle at round $t$, we have 
\begin{equation}
\begin{split}
\label{first-upperboundonerorofaggregation}
\sum_{s=1}^t (\widehat{y}_s - \langle \phi_s(a_s), \theta_* \rangle)^2 \leq {\mathcal R}_{\texttt{Sq}}(t) + \sum_{s=1}^t (\langle \phi_s(a_s), \widehat{\theta}_s^{i_*} \rangle - \langle \phi_s(a_s), \theta_* \rangle)^2 + 2\sum_{s=1}^t \eta_s (\langle \phi_s(a_s), \widehat{\theta}_s^{i_*} \rangle - \widehat{y}_s).
\end{split}
\end{equation}
From the definition of $U_t$ in~\eqref{eq:Ut}, we may upper-bound $\sum_{s=1}^t (\langle \phi_s(a_s), \widehat{\theta}_s^{i_*} \rangle - \langle \phi_s(a_s), \theta_* \rangle)^2$ with $U_t$ and write~\eqref{first-upperboundonerorofaggregation} as
\begin{equation}
\begin{split}
\label{forcombingresult}
\sum_{s=1}^t (\widehat{y}_s &- \langle \phi_s(a_s), \theta_* \rangle)^2 \leq {\mathcal R}_{\texttt{Sq}}(t) + U_t + 2 \sum_{s=1}^t \eta_s (\langle \phi_s(a_s), \widehat{\theta}_s^{i_*}\rangle - \widehat{y}_s) \\ 
&\leq {\mathcal R}_{\texttt{Sq}}(t) + U_t + 2 \sum_{s=1}^t \eta_s \langle \phi_s(a_s), \widehat{\theta}_s^{i_*} - \theta_* \rangle + 2 \sum_{s=1}^t \eta_s (\langle \phi_s(a_s), \theta_* \rangle - \widehat{y}_s).
\end{split}
\end{equation} 
Then, from Proposition~\ref{aux:prop:sum-marting-bound} in Appendix~\ref{app:auxily-tools}, with probability at least $1-\delta$, we have 
\begin{align}
\label{eq:temp0}
&\left| \sum_{s=1}^t \eta_s \langle \phi_s(a_s),\widehat{\theta}_s^{i_*} - \theta_* \rangle\right| \leq \\
&\qquad\qquad R \sqrt{2 \left(1 + \sum_{s=1}^t\langle \phi_s(a_s), \widehat{\theta}_s^{i_*} - \theta_* \rangle^2 \right) \log \left(\frac{\sqrt{1 + \sum_{s=1}^t \langle \phi_s(a_k), \widehat{\theta}_s^{i_*} - \theta_* \rangle^2}}{\delta}\right)}, \nonumber
\end{align} 
and
\begin{align}
\label{eq:temp1}
&\left| \sum_{s=1}^t \eta_s (\langle \phi_s(a_s), \theta_* \rangle - \widehat{y}_s ) \right| \leq \\
&\qquad\qquad R \sqrt{2 \left(1 + \sum_{s=1}^t \left( \langle \phi_s(a_s), \theta_* \rangle - \widehat{y}_s \right)^2 \right) \log \left( \frac{\sqrt{1 + \sum_{s=1}^t \left(\langle \phi_s(a_s), \theta_* \rangle - \widehat{y}_s\right)^2 }}{\delta} \right)}. \nonumber
\end{align} 
Using~\eqref{eq:temp0} and~\eqref{eq:temp1}, we may write~\eqref{forcombingresult} as 
\begin{equation}
\label{boundthelognfomrofradiusellipsoid}
\begin{split}
\sum_{s=1}^t \big(\widehat{y}_s &- \langle \phi_s(a_s), \theta_* \rangle\big)^2 \leq  {\mathcal R}_{\texttt{Sq}}(t)+ U_t + 2 R\sqrt{2 (1 + U_t) \log\big(\sqrt{1+U_t}/\delta\big) } \\ 
&+ R\sqrt{8 \left(1 + \sum_{s=1}^t \big(\widehat{y}_s - \langle \phi^s(a_s), \theta_* \rangle\big)^2  \right) \log \left( \frac{\sqrt{1 + \sum_{s=1}^t \big(\widehat{y}_s - \langle \phi_s(a_s), \theta_* \rangle\big)^2}}{\delta} \right)}. 
\end{split}
\end{equation}
It is easy to see that~\eqref{boundthelognfomrofradiusellipsoid} can be written in the form $u^2\leq v + uw$, where $u = \sqrt{1 + \sum_{s=1}^t (\langle \phi_s(a_s), \theta_* \rangle - \widehat{y}_s)^2}$, $v = 1 + {\mathcal R}_{\texttt{Sq}}(t) + U_t + 2R\sqrt{2 (1+ U_t) \log(\frac{\sqrt{1+U_t}}{\delta}) }$, and $w = R\sqrt{8\log(u/\delta)}$. Then, by applying Lemma~\ref{aux:lemma:square-rrot-trick} in Appendix~\ref{app:auxily-tools}, we may write $u \leq w + \sqrt{v}$. Substituting for $w$, we can get $u \leq \sqrt{v} + R\sqrt{8\log(u/\delta)}$. Then, by Lemma~\ref{aux:lemma:log-trick} in Appendix~\ref{app:auxily-tools}, for $\delta \in (0,1/4]$, we have 
\begin{align*}
u \leq \sqrt{v} + 4 R \sqrt{\log \left(\frac{R\sqrt{8} + \sqrt{v}}{\delta} \right)},
\end{align*}
which using the inequality $(a + b)^2 \leq 2a^2 + 2b^2$, for any $a$ and $b$, we can write it as 
\begin{align*}
u^2 \leq 2 v^2 + 32 R^2 \log \left(\frac{R\sqrt{8} + \sqrt{v}}{\delta} \right). 
\end{align*} 
Finally, we substitute $u$ and $v$, and subtract $1$ from both sides, and for $\delta \in (0,1/4]$, we obtain 
\begin{equation}
\label{eq:final-conf-set0}
\begin{split}
\sum_{s=1}^t \big(\widehat{y}_s &- \langle \phi_s(a_s), \theta_* \rangle\big)^2 \leq  1 + 2 {\mathcal R}_{\texttt{Sq}}(t) + 2 U_t + 4R\sqrt{2 (1+ U_t) \log\big(\sqrt{1+U_t}/\delta}\big) \\ 
&+32 R^2 \log\left(\frac{R\sqrt{8} + \sqrt{1 + {\mathcal R}_{\texttt{Sq}}(t) + U_t + 2R\sqrt{2 (1+ U_t) \log\big(\sqrt{1+U_t}/\delta\big)}}}{\delta}\right) := \gamma_t(\delta).
\end{split}
\end{equation}
Eq.~\ref{eq:final-conf-set0} shows that for $\delta \in (0,1/4]$, with probability at least $1-\delta$, we have $\theta^*\in C_t$, which completes the proof of the validity of the confidence set $\mathcal C_t$. 

We can now fully specify $\mathcal C_t$ by plugging $U_t$ from~\eqref{eq:Ut-Def} (see Appendix~\ref{app:subsec:proofoflemaofboudingthexperts}) and ${\mathcal R}_{\texttt{Sq}}(t)$ from~\eqref{eq:temp07} (see Appendix~\ref{app:thm:upperbound-regret-oracle}) into~\eqref{eq:final-conf-set0}, and write $\gamma_t(\delta)$ as 
\begin{align}
\label{eq:final-conf-set}
\gamma_t(\delta) &:= 1 + 2 {\mathcal R}_{\texttt{Sq}}(t) + 2 U_t + 4R\sqrt{2 (1+ U_t) \log\big(\sqrt{1+U_t}/\delta\big)} \nonumber \\ 
&\qquad\qquad +32 R^2 \log\left( \frac{R\sqrt{8} + \sqrt{1 + {\mathcal R}_{\texttt{Sq}}(t) + U_t + 2R\sqrt{2 (1+ U_t) \log\big(\sqrt{1+U_t}/\delta\big)}}}{\delta}\right), \nonumber \\
&\hspace{-0.95in}\text{where} \\
U_t &= 3 + 8d \log\left(1 + \frac{tL^2\max_{i\in[M]}(b_i+c_i)^2}{d} \right) \nonumber \\ 
&\qquad\qquad + 32 R^2 \log\left(\frac{2\sqrt{2}R + \sqrt{2 + 4d \log\left(1+ \frac{tL^2\max_{i\in[M]}(b_i+c_i)^2}{d}\right)}}{\delta} \right), \nonumber \\
\mathcal{R}_{\texttt{Sq}}(t) &= 8\log(M) \left(G^2 + L^2 + {2GL} + R^2 L^2 d \log\left(\frac{1 + \frac{tL^2\max_{i\in[M]}(b_i+c_i)^2}{d}}{\delta} \right) \right),
\nonumber
\end{align}
which concludes the proof.


A closer look at $U_t$ and ${\mathcal R}_{\texttt{Sq}}(t)$, the two main terms in the definition of $\gamma_t(\delta)$, we may write them in terms of the dominant terms as 
\begin{align}
\label{eq:Ut-simplified}
U_t &\approx \overbrace{3 + 16R^2\log(2)}^{C_1} + 8d \log\left(1 + \frac{tL^2\max_{i\in[M]}(b_i+c_i)^2}{d} \right) + 32 R^2 \log(1/\delta) \nonumber \\ 
&\qquad\qquad\qquad\qquad + 32R^2\log\left(1 + 2R + d\log\left(1 + \frac{tL^2\max_{i\in[M]}(b_i+c_i)^2}{d} \right)\right) \nonumber \\
&\approx C_1 + 32 R^2 \log(1/\delta) + 8d \log\left(1 + \frac{tL^2\max_{i\in[M]}(b_i+c_i)^2}{d} \right),
\end{align}
and 
\begin{align}
\label{eq:Rsq-simplified}
\mathcal{R}_{\texttt{Sq}}(t) &= \overbrace{8(G + L)^2\log(M)}^{C_2} + 8R^2L^2d\log(M)\log(1/\delta) + 8R^2L^2d\log(M)\log\left(1 + \frac{tL^2\max_{i\in[M]}(b_i+c_i)^2}{d} \right) \nonumber \\
&= C_2 + 8R^2L^2d\log(M)\log(1/\delta) + 8R^2L^2d\log(M)\log\left(1 + \frac{tL^2\max_{i\in[M]}(b_i+c_i)^2}{d} \right).
\end{align}
Using~\eqref{eq:Ut-simplified} and~\eqref{eq:Rsq-simplified}, we may write $\gamma_t(\delta)$ in terms of the dominant terms as 
\begin{align}
\label{eq:gamma-simplified}
\gamma_t(\delta) \approx 1 + 2C_1 + 2C_2 &+ 16R^2\left(4 + L^2d\log(M)\right)\log(1/\delta) \nonumber \\ 
&+ 16d\left(1 + R^2L^2\log(M)\right)\log\left(1 + \frac{tL^2\max_{i\in[M]}(b_i+c_i)^2}{d} \right).
\end{align}
%


\subsection{Proof of Lemma~\ref{lem:PS-OFUL-reg:sumup}} \label{finalsecforproofofregrettheorm-psoful}

In Theorem~\ref{thm:PS-OFUL-confidence-set}, we proved that at each round, with probability at least $1-\delta$, the true reward parameter $\theta_*$ belongs to the confidence set $\mathcal{C}_t$ of the PS-OFUL algorithm. Here, we show how the regret of PS-OFUL is related to the radius $\gamma_t(\delta)$ of this confidence set. 

Here we assume that at the first $d$ rounds, the algorithm plays actions whose features are of the form $\phi_i(a_i) = L e_i,\;\forall i \in [d]$, where $e_i=[0,\dots,1,\dots,0]$ is a $d$-dimensional vector whose elements are all $0$, except a $1$ at the $i^{\text{th}}$ position. In this case, we can define a matrix $V_t$ as
\begin{align}
V_t  &= \sum_{s=1}^{t-1} \phi_s(a_s)^\top \phi_s(a_s) = L^2 I + \sum_{s=d+1}^{t-1} \phi_t(a_t)^\top \phi_t(a_t),
\label{fictionalcovariancemateix}
\end{align} 
and use it to rewrite the confidence set as
\begin{align}
\label{centertforefictioanlellipdoide}
\mathcal{C}_{t-1} = \big\{\theta \in \mathbb{R}^d : (\theta - \widehat{\theta}_t) V_t (\theta - \widehat{\theta}_t) + \sum_{s=1}^{t-1} \big(\widehat{y}_s - \langle \phi_s(a_s), \widehat{\theta}_t \rangle \big)^2 \leq \gamma_t(\delta) \big\}, 
\end{align}
where $\widehat{\theta}_t = \argmin_{\theta \in \mathbb{R}^d} \; \sum_{s=1}^{t-1} \big(\widehat{y}_s - \langle \phi_s(a_s), \theta \rangle \big)^2$. The confidence set $\mathcal{C}_t$ in~\eqref{centertforefictioanlellipdoide} is contained in a larger ellipsoid
\begin{align}
\mathcal{C}_{t-1} \subseteq \big\{\theta \in \mathbb{R}^d : (\theta - \widehat{\theta}_t) V_t (\theta - \widehat{\theta}_t) \leq \gamma_t(\delta)\big\} = \big\{\theta \in \mathbb{R}^d :  \|\theta - \widehat{\theta}_t\|_{V_t}^2  \leq \gamma_t(\delta)\big\}.
\end{align}
Given $(a_t,\widetilde{\theta}_t)=\arg\max_{a\in \mathcal{A}_t}\max_{\theta\in\mathcal{C}_{t-1}} \langle\phi_t(a),\theta\rangle$ are the action and parameter resulted from solving the optimization problem at round $t$ of the PS-OFUL algorithm, we may write
\begin{align}
\label{eq:temp00000}
\langle\phi_t(a_t^*), \theta_* \rangle  - \langle \phi_t(a_t) ,\theta_*\rangle &\leq \langle \phi_t(a_t) , \widetilde{\theta}_t \rangle -  \langle \phi_t(a_t) ,\theta_*\rangle \nonumber \\
&= \langle \phi_t(a_t) , \widetilde{\theta}_t - \widehat{\theta}_t \rangle + \langle \phi_t(a_t) ,\widehat{\theta}_t - \theta_* \rangle \nonumber \\ 
&\leq \norm{\phi_t(a_t)}_{V_t^{-1}} \|\widetilde{\theta}_t - \widehat{\theta}_t\|_{V_t} +  \norm{\phi_t(a_t)}_{V_t^{-1}} \|\widehat{\theta}_t - \theta_*\|_{V_t} \nonumber \\ 
&\leq 2 \sqrt{\gamma_t(\delta)} \norm{\phi_t(a_t)}_{V_t^{-1}} ~~~~~~~~~(\text{because}~ \theta_*, \widetilde{\theta}_t \in \mathcal{C}_{t-1}).
\end{align}
Since $\forall a \in \bigcup_{t=1}^T \mathcal{A}_t$, we assume that $|\langle \phi(a), \theta_* \rangle| \leq G$, we can upper-bound the instantaneous regret in~\eqref{eq:temp00000} as 
\begin{align}
\langle\phi_t(a_t^*), \theta_* \rangle  - \langle \phi_t(a_t) ,\theta_*\rangle \leq 2 \min \big\{G, \sqrt{\gamma_t(\delta)} \norm{\phi_t(a_t)}_{V_t^{-1}}\big\}. 
\label{uppernoundoninstantenousregret}
\end{align}
Using \eqref{uppernoundoninstantenousregret}, we can bound the transfer-regret of PS-OFUL as
\begin{align}
\mathcal{R}_{\text{PS-OFUL}}(T) &= \sum_{t=1}^T \langle\phi_t(a_t^*) - \phi_t(a_t), \theta_* \rangle \leq 2Gd+ \sum_{t=d+1}^T \langle\phi_t(a_t^*) - \phi_t(a_t), \theta_* \rangle
\nonumber \\&
\leq 2 Gd + 2 \sum_{t=d+1}^T \min \{G,  \sqrt{\gamma_t(\delta)} \norm{\phi_t(a_t)}_{V_t^{-1}}\}
\nonumber \\&
\leq 2 Gd + 2 \sum_{t=d+1}^T \sqrt{\gamma_t(\delta)} ~ \min \{G,  \norm{\phi_t(a_t)}_{V_t^{-1}}\} ~~~~~~~~~(\text{since}~ \gamma_t(\delta) \geq 1)
\nonumber \\&
\leq 2 Gd + 2 \left(\max_{d<t\leq T} \sqrt{\gamma_t(\delta)} \right) \sum_{t=d+1}^T  \min \{G,  \norm{\phi_t(a_t)}_{V_t^{-1}}\} 
\nonumber \\&
\leq 2 Gd + 2 \left(\max_{d<t\leq T} \sqrt{\gamma_t(\delta)} \right) \left(\max\{1,G\}\right) \sum_{t=d+1}^T \min \{1, \norm{\phi_t(a_t)}_{V_t^{-1}} \}
\nonumber \\&
\leq 2 Gd + 2 \left(\max_{d<t\leq T} \sqrt{\gamma_t(\delta)} \right) \left(\max\{1,G\}\right) ~ \sqrt{ T \sum_{t=d+1}^T \min \{1, \norm{\phi_t(a_t)}_{V_t^{-1}}^2 \}}
\nonumber \\& 
\leq 2 Gd + 2 \left(\max_{d<t\leq T} \sqrt{\gamma_t(\delta)} \right) \left(\max\{1,G\}\right) \sqrt{ 2T \log\left(\frac{\det(V_T)}{\det(V_d)} \right) },\label{boundnghteregrerpsofultransfer}
\end{align} 
where  the last inequality follows from Lemma~\ref{aux:lemm:bound-sumofnorms} in Appendix~\ref{app:auxily-tools}. Then,  using Lemma~\ref{aux:lemm:det-teace-ineq} in Appendix~\ref{app:auxily-tools}, we can bound $\det(V_T) \leq \left( L^2 + \frac{T L^2}{d}\right)^d$ and $\det(V_d) = L^{2d}$. Hence, we may write \eqref{boundnghteregrerpsofultransfer} as
\begin{align}
\mathcal{R}_{\text{PS-OFUL}}(T) \leq  2 Gd + 2\max\{1,G\}\sqrt{2 d T \log\left(1 + \frac{T}{d} \right)\max_{d<t\leq T} {\gamma_t(\delta)}}. 
\label{eq:relating:regert:to:condifence-set}
\end{align}

  
\subsection{Proof of Theorem~\ref{thm:PS-OFUL-regret}}
\label{app:proof:thm:regret-ps-oful}

If we substitute $\gamma_t(\delta)$ from~\eqref{eq:gamma-simplified} in the regret bound~\eqref{eq:relating:regert:to:condifence-set}, we may write it (in terms of the dominant terms) as
\begin{align}
\label{eq:PS-OFUL-regret-simplified}
\mathcal{R}_{\text{PS-OFUL}}(T) &\leq 2 Gd + 2\sqrt{2}\max\{1,G\}\sqrt{dT\log\left(1 + \frac{T}{d}\right)} \nonumber \\ 
&\hspace{-0.25in}\times\sqrt{C_3 + 16R^2\big(4 + L^2d\log(M)\big)\log(1/\delta) + 16d\big(1 + R^2L^2\log(M)\big)\log\left(1 + \frac{TL^2\max_{i\in[M]}(b_i + c_i)^2}{d}\right)} \nonumber \\
&= \mathcal{O}\left(dRL\max\{1,G\}\sqrt{1 + \log(M)} \times \sqrt{T\log\left(1 + \frac{T}{d}\right)\log\left(\frac{1 + \frac{T L^2 \max_{i\in[M]}(b_{i} + c_{i})^2}{ d}}{\delta}\right)}\right),
\end{align}
where $C_3 = 1 + 2C_1 + 2C_2$, and hence $C_3 = 7 + 32 R^2 \log(2) + 16 (G+L)^2 \log(M)$.

\newpage
\section{Auxiliary Tools}
\label{app:auxily-tools}

Here we report auxiliary results that we use in our proofs in other appendices. 

We start with stating Theorem~7 in~\cite{APS-2012}, which is the self-normalized martingale tail inequality for the scalar random variables.  

\begin{proposition}[Self-normalized bound for martingales] \label{aux:prop:sum-marting-bound}
Let $\{F_t\}_{t=1}^\infty$ be a filtration. Let $\tau$ be a stopping time w.r.t to the filtration $\{F_t\}_{t=1}^\infty$, i.e., the event $\{\tau \leq t \}$ belongs to $F_{t+1}$. Let $\{Z_t\}_{t=1}^\infty$ be a sequence of real-valued variables such that $Z_t$ is $F_t$-measurable. Let $\{\eta_t\}_{t=1}^\infty$ be a sequence of real-valued random variables such that $\eta_t$ is $F_{t+1}$ measurable and is conditionally $R$-sub-Gaussian. Then, for any $\delta>0$, with probability at least $1-\delta$, 
\begin{align*}
    \left\|\sum_{t=1}^\tau \eta_t Z_t\right\| \leq R\sqrt{2 \left(1 + \sum_{t=1}^\tau Z_t^2 \right) \log\left(\frac{\sqrt{1 + \sum_{t=1}^\tau Z_t^2 }}{\delta} \right) }.
\end{align*}
\end{proposition}

Next, we state a direct application of Lemma~11 in~\cite{abbasi2011improved} that bounds the cumulative sum of $\sum_{s=1}^{t-1}  \norm{\phi_s(a_s)}_{V_s^{-1}}^2$ which plays an important role in most of the proofs for linear bandits problems.

\begin{lemma}
\label{aux:lemm:bound-sumofnorms}
Let $\lambda > 0$ and $V_t = \lambda I + \sum_{s=1}^{t-1}\phi_s(a_s) \phi_s^\top(a_s) $. If for all $a \in \cup_{s=1}^{t-1}\mathcal{A}_s$, we have $\norm{\phi_s(a)}_2 \leq L$, then we may write
\begin{align*}
    \sum_{s=1}^{t-1}  \min\{1, \norm{\phi_s(a_s)}_{V_s^{-1}}^2\} \leq 2 \log\left(\frac{\det(V_t)}{\det(\lambda I)} \right).
\end{align*}
\end{lemma}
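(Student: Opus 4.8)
The plan is to prove this by the standard elliptical-potential (log-determinant) argument, exactly as in Lemma~11 of~\cite{abbasi2011improved}. The idea is to bound each summand by the logarithm of the ratio of two consecutive determinants and then let the sum telescope, so that only the endpoints $V_t$ and $V_1 = \lambda I$ survive.

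First I would record the rank-one update structure of the Gram matrices. From the definition $V_s = \lambda I + \sum_{r=1}^{s-1}\phi_r(a_r)\phi_r(a_r)^\top$ we have $V_1 = \lambda I$ and $V_{s+1} = V_s + \phi_s(a_s)\phi_s(a_s)^\top$. Applying the matrix-determinant lemma to this rank-one perturbation gives
\begin{align*}
\det(V_{s+1}) = \det(V_s)\big(1 + \norm{\phi_s(a_s)}_{V_s^{-1}}^2\big),
\end{align*}
so that $\log\big(\det(V_{s+1})/\det(V_s)\big) = \log\big(1 + \norm{\phi_s(a_s)}_{V_s^{-1}}^2\big)$.

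Second, I would invoke the elementary scalar inequality $\min\{1,x\} \leq 2\log(1+x)$, valid for all $x \geq 0$. Combining it with the determinant identity above (applied with $x = \norm{\phi_s(a_s)}_{V_s^{-1}}^2 \geq 0$) yields the per-step bound $\min\{1,\norm{\phi_s(a_s)}_{V_s^{-1}}^2\} \leq 2\log\big(\det(V_{s+1})/\det(V_s)\big)$. Summing over $s = 1,\dots,t-1$, the right-hand side telescopes to $2\log\big(\det(V_t)/\det(V_1)\big) = 2\log\big(\det(V_t)/\det(\lambda I)\big)$, which is exactly the claimed bound.

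The only non-routine point is the scalar inequality $\min\{1,x\} \leq 2\log(1+x)$, which I would verify in two regimes: for $x \in [0,1]$ the function $g(x) = 2\log(1+x) - x$ satisfies $g(0)=0$ and $g'(x) = 2/(1+x) - 1 \geq 0$, so $2\log(1+x) \geq x = \min\{1,x\}$; for $x > 1$ we have $2\log(1+x) > 2\log 2 > 1 = \min\{1,x\}$. I would also remark that the boundedness hypothesis $\norm{\phi_s(a)} \leq L$ is not actually needed for this $\min$-version of the statement---it is inherited from the bandit setting and becomes relevant only if one wishes to drop the $\min$ (which holds once $\lambda \geq L^2$) or to further upper-bound $\log\det(V_t)$ explicitly via $\det(V_t) \leq (\lambda + tL^2/d)^d$, as done in Lemma~\ref{aux:lemm:det-teace-ineq}.
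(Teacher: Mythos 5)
Your proof is correct and is exactly the standard elliptical-potential argument behind Lemma~11 of~\cite{abbasi2011improved}, which the paper simply cites rather than reproving: the rank-one determinant identity, the scalar bound $\min\{1,x\}\leq 2\log(1+x)$, and the telescoping sum are all carried out correctly. Your side remark is also accurate --- the hypothesis $\norm{\phi_s(a)}\leq L$ plays no role in this $\min$-version and only matters for removing the $\min$ or for the trace-determinant bound of Lemma~\ref{aux:lemm:det-teace-ineq}.
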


Next, we present a determinant-trace inequality matrix result.

\begin{lemma}[Determinant-Trace Inequality] 
\label{aux:lemm:det-teace-ineq}
Suppose $X_1,\dots,X_{t-1} \in \mathbb{R}^d$, and for any $1 \leq s \leq t-1$, we have $\norm{X_s}_2 \leq L$. Let $V_t = \lambda I + \sum_{s=1}^{t-1} X_s X_s^\top$, for some $\lambda > 0$. Then we have 
\begin{align*}
    \det(V_t) \leq \left(\lambda +  \frac{tL^2}{d} \right)^d.
\end{align*} 
\end{lemma}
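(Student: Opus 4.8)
The plan is to exploit the fact that $V_t$ is symmetric positive definite and to relate its determinant (a product of eigenvalues) to its trace (a sum of eigenvalues) via the arithmetic--geometric mean inequality. First I would observe that each rank-one term $X_s X_s^\top$ is positive semidefinite, so $V_t = \lambda I + \sum_{s=1}^{t-1} X_s X_s^\top$ is symmetric positive definite and admits $d$ real, strictly positive eigenvalues, which I will denote $\mu_1,\dots,\mu_d$. Then $\det(V_t) = \prod_{j=1}^d \mu_j$ and $\operatorname{trace}(V_t) = \sum_{j=1}^d \mu_j$.

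The key step is to apply AM--GM to the eigenvalues:
\begin{align*}
\det(V_t) = \prod_{j=1}^d \mu_j \leq \left(\frac{1}{d}\sum_{j=1}^d \mu_j\right)^d = \left(\frac{\operatorname{trace}(V_t)}{d}\right)^d.
\end{align*}
It remains to bound the trace. Using linearity of the trace together with $\operatorname{trace}(X_s X_s^\top) = \|X_s\|_2^2$, I would write
\begin{align*}
\operatorname{trace}(V_t) = \lambda\,\operatorname{trace}(I) + \sum_{s=1}^{t-1} \operatorname{trace}(X_s X_s^\top) = \lambda d + \sum_{s=1}^{t-1} \|X_s\|_2^2 \leq \lambda d + (t-1)L^2 \leq \lambda d + tL^2,
\end{align*}
where the last two inequalities use the assumption $\|X_s\|_2 \leq L$ and the crude bound $t-1 \leq t$. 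Substituting this into the AM--GM bound yields $\det(V_t) \leq \big(\lambda + \tfrac{tL^2}{d}\big)^d$, as claimed.

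Since the argument is almost entirely routine linear algebra, there is no real obstacle; the only point requiring a moment of thought is recognizing that the determinant should be controlled through the trace, since the trace is the quantity that interacts cleanly with the norm bound on the $X_s$. The AM--GM inequality is exactly the tool that converts the additive trace bound into a multiplicative determinant bound, and positive definiteness of $V_t$ is what guarantees all eigenvalues are positive so that AM--GM applies.
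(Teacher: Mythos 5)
Your proof is correct and follows essentially the same route as the paper's: positive definiteness of $V_t$, the AM--GM inequality applied to the eigenvalues to bound $\det(V_t) \leq \left(\operatorname{trace}(V_t)/d\right)^d$, and then the trace bound $\operatorname{trace}(V_t) \leq \lambda d + tL^2$ from linearity of the trace and $\operatorname{trace}(X_s X_s^\top) = \|X_s\|_2^2 \leq L^2$. The only cosmetic difference is that you make the step $(t-1)L^2 \leq tL^2$ explicit, which the paper leaves implicit.
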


\begin{proof}
Let $\alpha_1,\dots,\alpha_d$ be the eigenvalues of $V_t$. Since $V_t$ is positive definite, its eigenvalues are positive. Also not that $\det(V_t) = \Pi_{s=1}^{d} \alpha_s$ and ${trace}(V_t) = \sum_{s=1}^d \alpha_s$. By arithmetic-geometric means inequality we have 
\begin{align*}
    \sqrt[d]{\alpha_1 \dots \alpha_d} \leq \frac{\alpha_1 + \dots + \alpha_d}{d}.
\end{align*} 
Therefore, $\det(V_t) \leq \left( \frac{trace(V_t)}{d} \right)^d$. It suffices to upper-bound the trace of $V_t$ as 
\begin{align*}
   trace(V_t) = trace(\lambda I) + \sum_{s=1}^{t-1} trace(X_s X_s^\top) = d \lambda + \sum_{s=1}^{t-1} \norm{X_s}^2_2 \leq d \lambda + t L^2, 
\end{align*} 
and the result follows. 
\end{proof}

Next, we state a bound on the absolute value of the $R$-sub-Gaussian random variable. 

\begin{lemma}
\label{aux:lemm:bound-subGaussian-variable}
Let $\{F_t\}_{t=1}^\infty$ be a filtration. Let $\{\eta\}_{t=1}^\infty$ be a real-valued stochastic process such that $\eta_t$ is $F_t$-measurable and $\eta_t$ is conditionally $R$-sub-Gaussian for some $R>0$, i.e., 
\begin{align*}
    \forall \lambda \in \mathbb{R}, ~~\mathbb{E}\left[\eta_t | F_t \right] = 0, ~~ \mathbb{E}\left[ e^{\lambda \eta_t} | F_t\right] \leq \exp{\left(\frac{\lambda^2 R^2}{2} \right)}.
\end{align*} 
Then, condition on filtration $F_t$, with probability at least $1-\delta$, we have $|\eta_t| \leq R\sqrt{ 2 \log(2/\delta)}$.
\end{lemma}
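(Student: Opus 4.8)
The plan is to establish a conditional one-sided Chernoff tail bound directly from the sub-Gaussian moment generating function (MGF) hypothesis, optimize the free parameter, and then symmetrize with a union bound over the two tails.

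First I would carry out every step conditionally on $F_t$, so that the $F_t$-measurability and the conditional MGF bound are used correctly. Fix any $a>0$ and $\lambda>0$. Applying the conditional Markov inequality to the nonnegative variable $e^{\lambda\eta_t}$ and invoking the assumption $\mathbb{E}[e^{\lambda\eta_t}\mid F_t]\leq \exp(\lambda^2 R^2/2)$ gives
\begin{align*}
\mathbb{P}(\eta_t \geq a \mid F_t) = \mathbb{P}\big(e^{\lambda\eta_t}\geq e^{\lambda a}\,\big|\,F_t\big) \leq e^{-\lambda a}\,\mathbb{E}\big[e^{\lambda\eta_t}\mid F_t\big] \leq \exp\!\Big(\tfrac{\lambda^2 R^2}{2} - \lambda a\Big).
\end{align*}
Minimizing the exponent over $\lambda>0$ yields the optimizer $\lambda=a/R^2$, and substituting back gives the upper-tail estimate $\mathbb{P}(\eta_t\geq a\mid F_t)\leq \exp(-a^2/(2R^2))$.

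Next, because the MGF bound is assumed for all $\lambda\in\mathbb{R}$, the variable $-\eta_t$ is also conditionally $R$-sub-Gaussian, so the identical argument applied to $-\eta_t$ produces the lower-tail estimate $\mathbb{P}(\eta_t\leq -a\mid F_t)\leq \exp(-a^2/(2R^2))$. A union bound then gives the two-sided statement $\mathbb{P}(|\eta_t|\geq a\mid F_t)\leq 2\exp(-a^2/(2R^2))$. Finally I would calibrate $a$ so that the right-hand side equals $\delta$: solving $2\exp(-a^2/(2R^2))=\delta$ gives $a=R\sqrt{2\log(2/\delta)}$, whence $\mathbb{P}\big(|\eta_t|\leq R\sqrt{2\log(2/\delta)}\,\big|\,F_t\big)\geq 1-\delta$, which is exactly the claim.

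There is no genuine obstacle here: the statement is the classical sub-Gaussian tail bound and the argument is a textbook Chernoff computation. The only points deserving a little care are (i) performing each step conditionally on $F_t$ so the measurability and conditional-MGF hypotheses are applied legitimately, and (ii) observing that the two-sided definition ($\forall\lambda\in\mathbb{R}$) is precisely what makes $-\eta_t$ sub-Gaussian, so the lower tail follows with no extra assumption. The factor $2$ inside the logarithm is simply the cost of the union bound over the upper and lower tails.
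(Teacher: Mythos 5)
Your proof is correct and is essentially identical to the paper's: both apply the conditional Markov/Chernoff argument to $e^{\lambda\eta_t}$, optimize at $\lambda = a/R^2$, symmetrize via the same argument for $-\eta_t$, union-bound the two tails, and calibrate $a = R\sqrt{2\log(2/\delta)}$. No meaningful differences to report.
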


\begin{proof} 
Let $\lambda > 0$. Then, 
\begin{align}
\mathbb{P}(\eta_t \geq k | F_t) &= \mathbb{P}(e^{\lambda \eta_t} \geq e^{\lambda k} | F_t) \leq e^{-\lambda k}~ \mathbb{E}[e^{\lambda \eta_t} | F_t] ~~~~~~~\text{(by Markov's inequality)} \nonumber\\&
     \leq e^{-\lambda k} e^{\frac{\lambda^2 R^2}{2}} = \exp{\left(-\lambda k + \frac{\lambda^2 R^2}{2}  \right)}.
\end{align} 
Optimizing for $\lambda$, and thus, selecting $\lambda = \frac{k}{R^2}$, we conclude that
\begin{align*}
    \mathbb{P}(\eta_t \geq k | F_t) \leq e^{-\frac{k^2}{2 R^2}}.
\end{align*}
Repeating this argument for $-\eta_t$, we also obtain $\mathbb{P}(\eta_t \leq -k | F_t) \leq e^{-\frac{k^2}{2 R^2}}$. Combining these two bounds, we can conclude that 
\begin{align}
    \mathbb{P}(|\eta_t| \geq k | F_t) \leq 2 e^{-\frac{k^2}{2 R^2}}.
    \label{app:aux:lemm:boundonabseloutvalueofeta}
\end{align} 
From~\eqref{app:aux:lemm:boundonabseloutvalueofeta}, with the choice of $\delta = 2 e^{-\frac{k^2}{2 R^2}}$, and thus $k= R\sqrt{2 \log(2/\delta)}$, completes the proof. 
\end{proof}

Then, we state a square-root trick for positive numbers. 

\begin{lemma} 
\label{aux:lemma:square-rrot-trick}
Let $a,b > 0$. If $z^2 \leq a + b z$, then $z \leq \sqrt{a} + b$. 
\end{lemma}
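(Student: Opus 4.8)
The plan is to treat the hypothesis $z^2 \le a + bz$ as a quadratic inequality in $z$ and extract the bound by completing the square, which avoids any case analysis on the sign of $z$. First I would rewrite the assumption as $z^2 - bz \le a$ and add $b^2/4$ to both sides to obtain
\begin{equation*}
\Big(z - \tfrac{b}{2}\Big)^2 \le a + \tfrac{b^2}{4}.
\end{equation*}
Since the left-hand side is a perfect square and the right-hand side is nonnegative (as $a,b>0$), taking square roots is legitimate and yields $z - \tfrac{b}{2} \le \big|z - \tfrac{b}{2}\big| \le \sqrt{a + b^2/4}$, hence
\begin{equation*}
z \le \tfrac{b}{2} + \sqrt{a + \tfrac{b^2}{4}}.
\end{equation*}

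The remaining step is to control the surd by the claimed expression. I would invoke subadditivity of the square root, $\sqrt{x+y}\le\sqrt{x}+\sqrt{y}$ for $x,y\ge 0$, with $x=a$ and $y=b^2/4$, which gives $\sqrt{a + b^2/4} \le \sqrt{a} + b/2$. (Equivalently, one checks $\big(\sqrt{a}+b/2\big)^2 = a + b\sqrt{a} + b^2/4 \ge a + b^2/4$ since $b\sqrt{a}\ge 0$.) Substituting this into the previous display collapses the two $b/2$ terms into a single $b$, yielding $z \le b/2 + \sqrt{a} + b/2 = \sqrt{a} + b$, which is exactly the claim.

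There is essentially no hard step here: the statement is an elementary scalar inequality, and the only point requiring mild care is ensuring the square-root extraction is valid for all real $z$, including negative values. Completing the square handles this automatically, since the bound $(z-b/2)^2 \le a + b^2/4$ already subsumes the trivial case (when $z\le 0$ the conclusion holds outright because the right-hand side $\sqrt{a}+b$ is strictly positive). An alternative route would be to note that $z^2 - bz - a \le 0$ forces $z$ to lie below the larger root $\tfrac{1}{2}(b + \sqrt{b^2+4a})$ of the quadratic and then verify $\tfrac{1}{2}(b+\sqrt{b^2+4a}) \le \sqrt{a}+b$ by squaring; I prefer the completing-the-square argument as it is shorter and sidesteps reasoning about the roots.
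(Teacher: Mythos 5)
Your proof is correct and takes essentially the same approach as the paper: your completed-square bound $z \le \tfrac{b}{2} + \sqrt{a + b^2/4}$ is exactly the paper's larger root $\tfrac{1}{2}\bigl(b + \sqrt{b^2+4a}\bigr)$, and both arguments finish with the same subadditivity inequality $\sqrt{x+y} \le \sqrt{x} + \sqrt{y}$. The only difference is presentational (completing the square versus exhibiting the roots of $z^2 - bz - a$), which you yourself note as the alternative route.
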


\begin{proof}
Let $q(z) = z^2 - bz -a$. We can rewrite the condition $z^2 \leq a + bz$ as $q(z) \leq 0$. Then we know that the quadratic polynomial $q(z)$ has the following two roots 
\begin{align}
    z^*_{1} = \frac{b + \sqrt{b^2 + 4a}}{2} ~z^*_{2} = \frac{b - \sqrt{b^2 + 4a}}{2}. \nonumber
\end{align} 
Then, we know that the condition $q(z) \leq 0$, implies that $\min\{z^*_1,z^*_2\} \leq z \leq \max\{z^*_1,z^*_2\}$. Therefore, for positive numbers $a, b$, we get 
\begin{align*}
    z \leq  \max\{z^*_1,z^*_2\} = \frac{b + \sqrt{b^2 + 4a}}{2} \leq b + \sqrt{a},
\end{align*} 
where for the last inequality, we use the fact that for $u,v > 0$, $\sqrt{u + v} \leq \sqrt{u} +\sqrt{v}$.
\end{proof}

Next, we restate a simple logarithmic trick from~\cite{APS-2012}.

\begin{lemma}[Proposition 10 in \citealt{APS-2012}]
\label{aux:lemma:log-trick}
Let $c \geq 1$, $q>0$, $\delta \in (0,1/4]$. If $s \geq 1$ and $s \leq c + q \sqrt{\log(s/\delta)}$, then we have $s \leq c + q \sqrt{2 \log(\frac{c + q}{\delta})}$.
\end{lemma}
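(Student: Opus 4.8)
The plan is a two-step bootstrapping argument. The key observation is that the claimed bound follows instantly from a crude polynomial estimate $s \leq (c+q)^2/\delta$: this gives $\log(s/\delta) \leq \log\big((c+q)^2/\delta^2\big) = 2\log\big((c+q)/\delta\big)$, and substituting back into the hypothesis $s \leq c + q\sqrt{\log(s/\delta)}$ yields exactly $s \leq c + q\sqrt{2\log((c+q)/\delta)}$. Thus the entire task reduces to proving the crude estimate.

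To obtain the crude estimate, I would first linearize the logarithm. Since $s \geq 1$ and $\delta \leq 1/4$, we have $s/\delta > 1$, so $\log(s/\delta) > 0$, and the elementary inequality $\log u \leq u$ (valid for every $u>0$) gives $\sqrt{\log(s/\delta)} \leq \sqrt{s/\delta} = \sqrt{s}/\sqrt{\delta}$. Feeding this into the hypothesis turns it into $s \leq c + (q/\sqrt\delta)\sqrt{s}$. Setting $z = \sqrt{s} > 0$, this is the quadratic inequality $z^2 \leq c + (q/\sqrt\delta)z$, to which I would apply the square-root trick (Lemma~\ref{aux:lemma:square-rrot-trick}) with $a=c$ and $b=q/\sqrt\delta$, yielding $\sqrt{s} = z \leq \sqrt{c} + q/\sqrt\delta$, i.e.\ $s \leq (\sqrt{c} + q/\sqrt\delta)^2$.

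It then remains to verify $(\sqrt{c} + q/\sqrt\delta)^2 \leq (c+q)^2/\delta$, equivalently (after multiplying through by $\delta>0$) that $\delta c + 2\sqrt{c}\,q\sqrt{\delta} + q^2 \leq c^2 + 2cq + q^2$. This is the only place the standing hypotheses are used: $c \geq 1$ and $\delta \leq 1$ give $\delta c \leq c \leq c^2$, while $\sqrt{\delta} \leq 1 \leq \sqrt{c}$ gives $2\sqrt{c}\,q\sqrt{\delta} \leq 2cq$, so the inequality holds term by term. This closes the crude estimate and hence the lemma.

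The argument is wholly elementary, so I do not expect a genuine obstacle; the only point demanding care is the final constant-chasing, namely confirming that $(\sqrt{c} + q/\sqrt\delta)^2 \leq (c+q)^2/\delta$ can be pushed through using only $c \geq 1$ and $\delta \leq 1$ (both implied by the assumptions), and checking that $\delta \leq 1/4$ indeed keeps $s/\delta$ and $(c+q)/\delta$ above $1$ so that all logarithms are positive and the square roots well-defined.
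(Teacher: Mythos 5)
Your proof is correct, and it is worth noting that the paper itself offers no proof of this lemma at all: it is imported verbatim as Proposition~10 of \cite{APS-2012}, so your argument is a genuine addition rather than a reproduction. Your bootstrap is sound at every step: the reduction of the lemma to the crude bound $s \leq (c+q)^2/\delta$ is exactly right, since that bound gives $\log(s/\delta) \leq 2\log\left(\frac{c+q}{\delta}\right)$ and one substitution into the hypothesis finishes; the crude bound itself follows cleanly from $\log u \leq u$, the quadratic inequality $z^2 \leq c + (q/\sqrt{\delta})z$ with $z=\sqrt{s}$, and an application of Lemma~\ref{aux:lemma:square-rrot-trick} with $a=c>0$, $b=q/\sqrt{\delta}>0$ (its hypotheses are met); and your term-by-term verification of $\delta c + 2\sqrt{c}\,q\sqrt{\delta} + q^2 \leq c^2 + 2cq + q^2$ uses only $\delta \leq 1 \leq c$ and is correct. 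Two small remarks. First, your proof in fact establishes the statement for all $\delta \in (0,1]$, not just $\delta \in (0,1/4]$ --- nowhere do you need the factor $1/4$, which presumably enters only in how \cite{APS-2012} derived the result or in how the lemma is invoked elsewhere in the paper (e.g., jointly with Lemma~\ref{aux:lemma:square-rrot-trick} in the proofs of Lemma~\ref{lem:PS-OFUL-U_t} and Theorem~\ref{thm:PS-OFUL-confidence-set}); so your argument is marginally stronger than the cited statement. Second, your closing sanity checks are the right ones: $s \geq 1$ and $\delta \leq 1$ give $s/\delta \geq 1$ so $\log(s/\delta) \geq 0$ and the hypothesis is well-posed, and $c + q > 1$ gives $\log\left(\frac{c+q}{\delta}\right) > 0$ so the conclusion's square root is real. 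A pleasant side benefit of your approach is that it makes the appendix self-contained using only tools already proved there.
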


\newpage
\section{Relation to Latent Bandits}
\label{app:latent bandits}

In this section, we informally show that if the goal in latent bandits is to have a better scaling with the number of actions $K$ (e.g.,~the number of actions $K$ is much larger than the number of latent states $M$), we can use a different bandit model selection strategy, called \textit{regret balancing}~\citep{APP-2020,PDGB-2020, pacchiano2020model,cutkosky2021dynamic} to obtain an improved regret that scales as $\min\{\varepsilon T + \sqrt{MT}, \sqrt{K M T}\}$. This rate is the best of the regret of PS-OFUL, which scales as $\sqrt{KT}$, and the regret of the latent bandit algorithm of~\citet{HKZCAB-2020}, which scales as $\varepsilon T + \sqrt{MT}$.

In regret balancing, in each round, the model selection strategy chooses one of $M$ base algorithms. We denote by $N_{i,t}$, the number of times that the base algorithm $i$ has been selected up to round $t$, and by $R_{i,t}$, the cumulative rewards of this base algorithm during these $N_{i,t}$ rounds. Given a reference regret bound $U:[T]\rightarrow\mR$, in each round $t\in[T]$, the algorithm first finds the optimistic base algorithm $I_t$ and its value $b_t$, i.e.,
\begin{equation}
\label{eq:regret-balancing}
I_t = \argmax_{i\in [M]} \; \frac{R_{i,t}}{N_{i,t}} + \frac{U(N_{i,t})}{N_{i,t}}\qquad,\qquad b_t = \frac{R_{I_t,t}}{N_{I_t,t}} + \frac{U(N_{I_t,t})}{N_{I_t,t}} \;,
\end{equation}
and then takes the action recommended by $I_t$ and uses its observed reward to update the base algorithm $I_t$.

We can apply regret balancing to the problem of latent bandits in the following way. We consider $M+1$ base algorithms: one that plays UCB, and $M$, each corresponds to a latent value and always plays the greedy action of that latent model (which is guaranteed to be $\varepsilon$-accurate by assumption). If the regret balancing strategy selects the UCB base algorithm in all rounds, it would suffer the regret $\sqrt{Kt}+\sqrt{t}$, and if it selects the optimal base algorithm, i.e.,~the base algorithm corresponding to the correct latent model, it would suffer the regret $\varepsilon t + \sqrt{t}$. Note that by regret, we mean the actual regret and not pseudo-regret, and thus, $\sqrt{t}$ is the consequence of noise in the reward signal. Thus, we select the reference regret bound of our regret balancing strategy as $U(t)=\min\{\varepsilon t+\sqrt{t},\sqrt{Kt}+\sqrt{t}\}$.
We may write the regret of the resulting regret balancing strategy as follows:
\begin{align*}
\mathcal R(T) &\stackrel{\text{(a)}}{=} \sum_{i=1}^{M+1} N_{i,T}\mu_* - R_{i,T} \stackrel{\text{(b)}}{\leq} \sum_{i=1}^{M+1} N_{i,T} b_T - R_{i,T} \stackrel{\text{(c)}}{=} \sum_{i=1}^{M+1} U(N_{i,T}) \\ 
&\leq \sum_{i=1}^{M+1} \min\big\{\varepsilon N_{i,T} + \sqrt{N_{i,T}}, \sqrt{K N_{i,T}} + \sqrt{N_{i,T}}\big\} \\ 
&\leq \min\Big\{\sum_{i=1}^{M+1}\big(\varepsilon N_{i,T} + \sqrt{N_{i,T}}\big), \sum_{i=1}^{M+1} \big(\sqrt{K N_{i,T}} + \sqrt{N_{i,T}}\big)\Big\} \\
&\stackrel{\text{(d)}}{=} \min\Big\{\varepsilon T + \sum_{i=1}^{M+1}\sqrt{N_{i,T}}, \sum_{i=1}^{M+1} \big(\sqrt{K N_{i,T}} + \sqrt{N_{i,T}}\big)\Big\} \\
&\stackrel{\text{(e)}}{\leq} \min\Big\{\varepsilon T + \sum_{i=1}^{M+1}\sqrt{\frac{T}{M+1}}, \sum_{i=1}^{M+1} \Big(\sqrt{K \frac{T}{M+1}} + \sqrt{\frac{T}{M+1}}\Big)\Big\} \\
&= \min\Big\{\varepsilon T + \sqrt{(M+1)T}, \sqrt{K(M+1)T} + \sqrt{(M+1)T}\Big\} \\ 
&= \mathcal O\left(\min\big\{\varepsilon T + \sqrt{MT}, \sqrt{K M T}\big\}\right), 
\end{align*}

which concludes our claim. Note that we used the following steps in our above derivations: {\bf (a)} $\mu_*$ is the mean of the optimal arm. {\bf (b)} This is because with high probability we have $\mu_*\leq b_t,\;\forall t\in[T]$. {\bf (c)} This is from the definition $b_t$ in~\eqref{eq:regret-balancing}. {\bf (d)} This is due to the fact that $\sum_{i=1}^{M+1}N_{i,T}=T$. {\bf (e)} The maximizer of $\sum_{i=1}^{M+1}\sqrt{N_{i,T}}$, subject to $\sum_{i=1}^{M+1}N_{i,T}=T$, is when all $\{N_{i,T}\}_{i=1}^{M+1}$ are equal.


\newpage
\section{ More on Experimental Results }

\begin{figure*}[t]
 \centering
     \includegraphics[width=\linewidth]{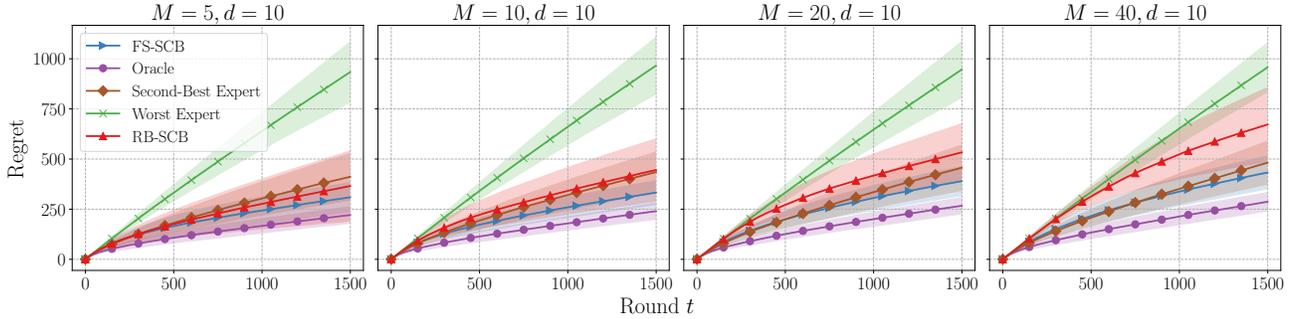}
     \vspace{-.3cm}
     \caption{Feature selection on MNIST dataset. The regrets are averaged over $100$ LB problems.}
     \label{fig:app_regret_fs-scb_mnist}
     \vspace{-.3cm}
\end{figure*}
\begin{figure*}[t]
 \centering
     \includegraphics[width=\linewidth]{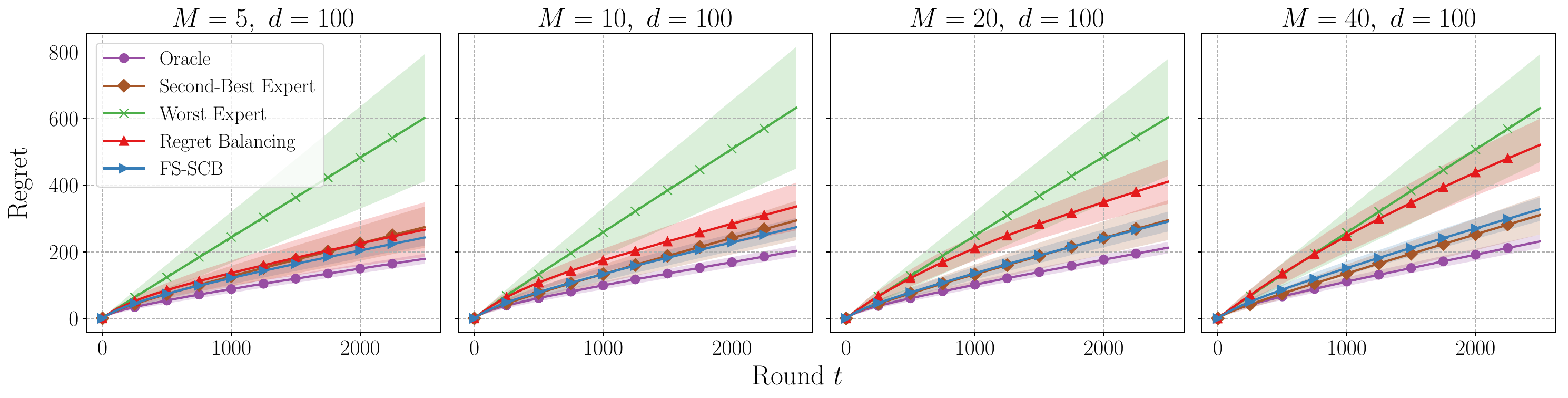}
     \vspace{-.3cm}
     \caption{Feature selection on CIFAR-100 dataset. The regrets are averaged over $100$ LB problems.}
     \label{fig:app_regret_fs-scb_cifar100}
     \vspace{-.3cm}
\end{figure*}
\label{app:experiment_details}

We evaluate the performances of FS-SCB and PS-OFUL algorithms in a synthetic linear bandit problem and real-world image classification problems on CIFAR-10, CIFAR-100 \citep{krizhevsky2009learning}, and MNIST datasets \citep{lecun1998gradient}.

\subsection{Feature Selection}

\subsubsection{MNIST Dataset}

MNIST dataset consists of 60000 training and 10000 test images of size $28\times28$, each belonging to one of 10 classes. We train a convolutional neural network (CNN) with $M$ different number of epochs on MNIST data, and use their second layer to the last as our $d=10$-dimensional feature maps $\{\phi^i\}_{i=1}^M$. These feature maps have test accuracy between $20\%$ (worst model) and $97\%$ (best model). We set the best one as true model $\phi^{i_*}$. For each class $s\in\mathcal S=\{0,\ldots,9\}$, we fit a linear model, given the feature map $\phi^{i_*}$, and obtain parameters $\{\theta^{i_*}_s\}_{s=0}^9$. At the beginning of each LB task, we select a class $s_*\in\mathcal S$ uniformly at random and set its parameter to $\theta^{i_*}_{s_*}$. At each round $t\in[T]$, the learner is given an action set consists of $10$ images, one from class $s_*$ and the rest randomly selected from the other classes. The reward of each action $a$ is defined as $\langle\phi^{i_*}(a),\theta^{i_*}_{s_*}\rangle+\eta_t\in[0,1]$, where $\phi^{i_*}(a)$ is the application of the feature map $\phi^{i_*}$ to the image corresponding to action $a$ and $\eta_t\sim\mathcal U[-0.5,0.5]$ is the noise. 

In Figure~\ref{fig:app_regret_fs-scb_mnist}, we compare the regret of our FS-SCB algorithm for different number of models $M$ with a regret balancing algorithm that uses SquareCB baselines (RB-SCB), and three SquareCB algorithms that use the best (Oracle), second-best (with test accuracy $84\%$), and worst feature maps (experts). Each plot is averaged over $100$ LB problems. Figure~\ref{fig:app_regret_fs-scb_mnist} shows that {\bf 1)} FS-SCB always performs between the best and second-best experts, {\bf 2)} the regret of FS-SCB that scales as $\sqrt{\log M}$ is close to RB-SCB (scales as $\sqrt{M}$) for small $M$, but gets much better as $M$ grows, and {\bf 3)} RB-SCB has much higher variance than the other algorithms.

\subsubsection{CIFAR-100 Dataset}
CIFAR-100 dataset consists of 50000 training and 10000 test images of size $32\times32$, each belonging to one of 100 classes. We extracted the features of the images by fine tuning and taking the output of the second-to-last layer of the EfficientNet-B0 Network \citep{tan2019efficientnet} and got the feature matrix of dimension $50000\times 1280$. For all experts $i\in[M]$, we multiply this feature matrix with a Gaussian random matrix of dimension $1280\times d_i$ for $d_i\in[2,128]$ to get the $d_i$ dimensional feature maps $\phi^i$. These feature maps have accuracy between $5\%$ (worst model) and $78\%$ (best model). We set the best one as true model $\phi^{i_*}$. For each class $s\in{\cal S}=\{0,\dots,99\}$, we fit a linear model, given the feature map $\phi^{i_*}$ and obtain parameters $\{\theta_s^{i_*}\}_{s=0}^{99}$. At the beginning of each LB task, we select a class $s_*\in\mathcal S$ uniformly at random and set its parameter to $\theta^{i_*}_{s_*}$. At each round $t\in[T]$, the learner is given an action set consists of $10$ images, one from class $s_*$ and the rest randomly selected from the other classes. The reward of each action $a$ is defined as $\langle\phi^{i_*}(a),\theta^{i_*}_{s_*}\rangle+\eta_t\in[0,1]$, where $\phi^{i_*}(a)$ is the application of the feature map $\phi^{i_*}$ to the image corresponding to action $a$ and $\eta_t\sim\mathcal U[-0.5,0.5]$ is the noise. 

In Figure~\ref{fig:app_regret_fs-scb_cifar100}, we compare the regret of our FS-SCB algorithm for different number of models $M$ with a regret balancing algorithm that uses SquareCB baselines (RB-SCB) and aggregate them according to~\eqref{eq:regret-balancing}, and three SquareCB algorithms that use the best (Oracle), second-best (with test accuracy $55\%$), and worst feature maps (experts). Each plot is averaged over $100$ LB problems. Figure~\ref{fig:app_regret_fs-scb_cifar100} shows that {\bf 1)} FS-SCB always performs close to the best and second-best experts, {\bf 2)} the regret of FS-SCB that scales as $\sqrt{\log M}$ is close to RB-SCB (scales as $\sqrt{M}$) for small $M$, but gets much better as $M$ grows, and {\bf 3)} RB-SCB has much higher variance than the other algorithms.

\subsection{Parameter Selection}

\subsubsection{Image Classification on MNIST Dataset}

\begin{figure}[t]
    \centering
    \includegraphics[width=.75\linewidth]{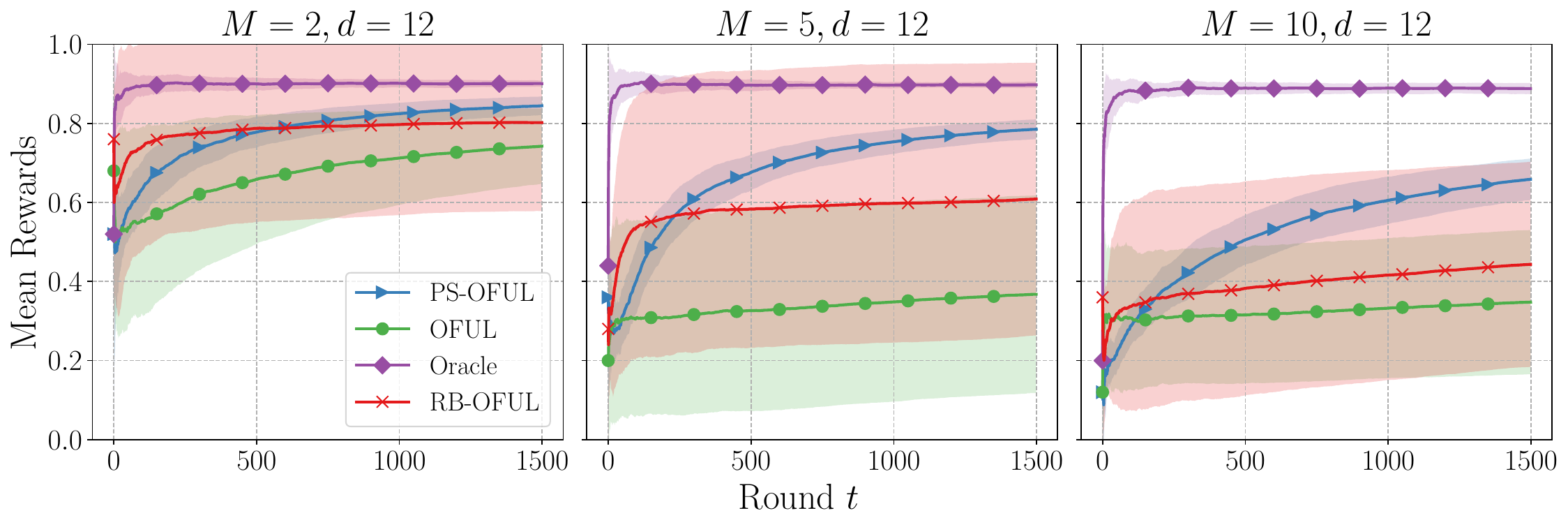}
    \caption{Parameter selection on MNIST dataset, where $100M$ datasets of size 500 are used to define the balls. The results are averaged over 50 runs.}
    \label{fig:app_mnist_ps}
\end{figure}
\begin{figure}[t]
    \centering
    \includegraphics[width=.75\linewidth]{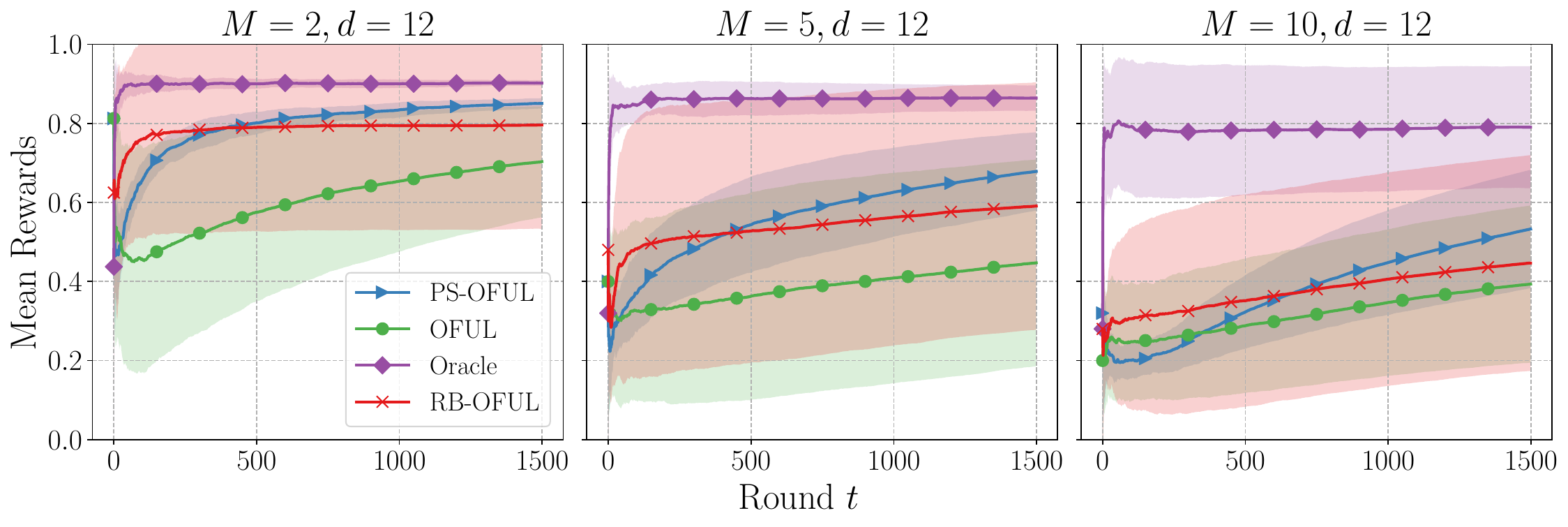}
    \caption{Parameter selection on MNIST dataset, where $10M$ datasets of size 50 are used to define the balls. The results are averaged over 50 runs.}
    \label{fig:app_mnist_ps_largervar}
\end{figure}

MNIST dataset consists of 60000 test and 10000 training images of size $28\times28$, each belonging to one of 10 classes. We train a CNN
with $d=12$ neurons on second-to-last layer on MNIST dataset with $98\%$ accuracy.  We then select this $d$-dimensional layer as our feature map $\phi$. To define our $M$ models (balls), we sample $100M$ datasets of size $500$. For each dataset, we randomly select a class $s_*\in[M]$, assign reward $1$ to images from $s_*$ and $0$ to other images, and fit a linear model to it to obtain a parameter vector. Finally, we fit a Gaussian mixture model (GMM) with $M$ components to these $100M$ parameter vectors and use the means and covariances of the resulting clusters as the center and radii of our $M$ models (balls). At the beginning of each LB task, we select a class $s_*\in[M]$ uniformly at random. At each round $t\in[T]$, the learner is given an action set consists of $10$ images, one from class $s_*$ and the rest randomly selected from the other classes. The learner receives a reward from Ber$(0.9)$ if it selects the image from class $s_*$, and from Ber$(0.1)$, otherwise.  

In Figure~\ref{fig:app_mnist_ps}, we compare the mean reward of PS-OFUL for different number of models $M$ with a regret balancing algorithm that uses OFUL baselines (RB-OFUL)~\citep{APP-2020}, OFUL (individual learning), and BIAS-OFUL~\citep{cella2020meta} with bias being the center of the true model (Oracle). Figure~\ref{fig:app_mnist_ps} shows {\bf 1)} the good performance of PS-OFUL, {\bf 2)} the performance of PS-OFUL gets better than RB-OFUL as $M$ grows ($\sqrt{\log M}$ vs.~$\sqrt{M}$ scaling), {\bf 3)} the large variance of RB-OFUL, especially in comparison to PS-OFUL, and finally {\bf 4)} the advantage of transfer (PS-OFUL) over individual (OFUL) learning. 

\paragraph{Impact of the model estimates:} In order to demonstrate the impact of the accuracy of the model center estimates as well as the radii of the balls, we defined a less accurate set of $M$ models (balls) using $10M$ datasets of size 50 (as opposed to $100M$ datasets of size 500). In Figure~\ref{fig:app_mnist_ps_largervar}, we compare the mean reward of PS-OFUL for different number of models $M$ with RB-OFUL, OFUL, and BIAS-OFUL.

\end{document}